\documentclass{article}

\PassOptionsToPackage{numbers,compress}{natbib}
\usepackage[main, final]{neurips_2026}
\usepackage{microtype}
\usepackage{booktabs} 

\usepackage{mathtools}
\usepackage[utf8]{inputenc}
\usepackage[T1]{fontenc}
\usepackage{url}
\usepackage{graphicx,wrapfig,lipsum}
\usepackage{amsfonts}
\usepackage{nicefrac}
\usepackage{microtype}
\usepackage{dsfont}
\usepackage{subcaption}
\usepackage{MnSymbol}
\usepackage[noend]{algpseudocode}
\usepackage{algorithm}
\usepackage{booktabs,tabularx, multirow}
\usepackage{tikz}
\usetikzlibrary{bayesnet}
\usepackage{mathtools}
\usepackage{amsmath,amsthm}
\usepackage{soul}
\usepackage{blindtext}
\usepackage{tcolorbox}
\usepackage{todonotes}
\usepackage{makecell}
\usepackage{thmtools}
\usepackage{wrapfig}
\newtheorem{theorem}{Theorem}

\DeclarePairedDelimiterX{\infdivx}[2]{(}{)}{%
  #1\;\delimsize\|\;#2%
}
\DeclarePairedDelimiterX{\inftvx}[2]{(}{)}{%
  #1, #2%
}

\DeclareMathOperator*{\argmin}{arg\,min}

\newtheorem{prop}{Proposition}
\newtheorem{definition}{Definition}

\newtheorem{lemma}{Lemma}

\usepackage{pifont}

\usepackage{xcolor, color, colortbl}
\usepackage{listings}
\usepackage{rotating}
\usepackage{nccmath} 
\definecolor{mydarkblue}{rgb}{0,0.08,0.45}
\definecolor{myblue}{HTML}{D2E4FC}

\definecolor{mygreen}{HTML}{008000}

\definecolor{myorange}{HTML}{FFC48C}

\definecolor{Gray}{gray}{0.92}

\definecolor{codegreen}{rgb}{0,0.6,0}
\definecolor{codegray}{rgb}{0.5,0.5,0.5}
\definecolor{codepurple}{rgb}{0.58,0,0.82}
\definecolor{backcolour}{rgb}{0.95,0.95,0.92}
 
\lstdefinestyle{mystyle}{
    backgroundcolor=\color{backcolour},   
    commentstyle=\color{codegreen},
    keywordstyle=\color{magenta},
    numberstyle=\tiny\color{codegray},
    stringstyle=\color{codepurple},
    basicstyle=\ttfamily\footnotesize,
    breakatwhitespace=false,         
    breaklines=true,                 
    captionpos=b,
    keepspaces=true,
    numbers=left,
    numbersep=5pt,
    showspaces=false,
    showstringspaces=false,
    showtabs=false,                  
    tabsize=2
}
 
\definecolor{shadecolor}{RGB}{242, 242, 242}

\definecolor{mydarkblue}{rgb}{0,0.08,0.45}
\usepackage[colorlinks=true,
    linkcolor=mydarkblue,
    citecolor=mydarkblue,
    filecolor=mydarkblue,
    urlcolor=mydarkblue]{hyperref}

\usepackage{fancyvrb}

\title{Active Learning of Conditional Generative Models \\via the Transport Neural Tangent Kernel}

\author{%
  Jayoung Ryu$^{1}$ \quad Kyunghyun Cho$^{1}$ \quad Romain Lopez$^{1,2}$ \\
  \\
  $^{1}$Department of Computer Science, Courant Institute School of Mathematics, \\Computing, and Data Science, New York University\\
  $^{2}$Department of Biology, New York University\\
  \\
  \texttt{\{jayoung.ryu, kyunghyun.cho, romain.lopez\}@nyu.edu}
}

\begin{document}

\maketitle

\begin{abstract}
Many high-throughput scientific experiments produce distributional outputs, requiring conditional generative models to interpret the data, and to predict outcomes at new inputs. Acquiring experimental data at new conditions (e.g., for a new antigen or a new cell type) is often expensive, motivating the need for automated experimental design methods. However, existing active learning frameworks are limited as they assume scalar or vector-valued outputs. We first show that the Wasserstein test risk of a conditional generative task decomposes into a mean term and a shape term, demonstrating why mean-based active learning is blind to distributional shape. To capture both, we propose the \emph{transport Neural Tangent Kernel} (tNTK), a tractable kernel that measures how sensitive a generative model's predicted distribution is with respect to its parameters. The tNTK recovers the standard NTK in the deterministic limit and upper-bounds the posterior Fréchet variance of the prediction under small parameter perturbations. It admits a closed-form expression for affine Gaussian transport and tractable Monte Carlo estimators otherwise. Treating the tNTK as the covariance function of a Gaussian process, we greedily select inputs that minimize the total GP posterior variance across the pool, yielding a distribution-aware active learning strategy. We evaluate this strategy across four classes of conditional generative models, including conditional flow matching and conditional Monge gap, on synthetic distributions, intent-labeled sentences, and single-cell perturbation responses. In these experiments, the tNTK outperforms the baseline NTK computed from the conditional mean, demonstrating the value of distribution-aware active learning for generative-modeling tasks.

\end{abstract}

\section{Introduction}

Decades after a logic-based ``robot scientist'' first autonomously characterized gene functions in yeast metabolism~\citep{king2004functional, king2009automation}, machine learning has become a standard tool for the automated design of scientific experiments. These approaches have demonstrated impact across a wide range of scientific fields, including inorganic materials synthesis~\citep{szymanski2023autonomous}, drug discovery~\citep{tosh2025bayesian, frey2025lab, yang2025active, jiang2024rapid}, and single-cell perturbation biology~\cite{rood2024toward, zhang2023active, Huang2024-kj,pmlr-v267-qin25g,demeo2025active}. They have also been applied to large-scale image classification~\citep{sener2018active} 
and to the alignment of large language models~\citep{mehtasample, muldrew2024active}.

We consider the setup of pool-based active learning (AL)~\citep{cohn1994improving, cohn1996active}, in which the acquisition strategy iteratively assembles a training set $\mathcal{X}_\text{train}$ from a set of candidate inputs $\mathcal{X}_\text{pool} \subseteq \mathcal{X}$. For each selected input $x \in \mathcal{X}_\text{train}$, the corresponding output $o \in \mathcal{O}$ is acquired using the experimental system. The goal of an AL strategy is to assemble a training set with a fixed number of data points so that a surrogate predictive model $f(x)$ will generalize best to unseen inputs (i.e., a test set) --- a problem extensively studied for classification~\cite{sener2018active,Settles2009-bv,ash2020deep,Houlsby2011-bp,Kirsch2019-qt,Pinsler2019BayesianBA} ($\mathcal{O} = \{1, \ldots, K\}$) and regression~\cite{sugiyama06a,reichart2008multi,Holzmuller2022-do,Ash2021-zv} (single- or multi-task, $\mathcal{O} \subseteq \mathbb{R}^d$). 

In many scientific applications, however, the experimental output is a probability distribution: $\mathcal{O} = \mathcal{P}(\mathcal{Y})$ for some underlying sample space $\mathcal{Y} \subseteq \mathbb{R}^d$, observed through a finite set of samples for each input. For example, chemical or genetic perturbation assays measure genomics data for hundreds of cells per perturbation, profiling entire distributions over cell states~\cite{rood2024toward}. Regulatory genomics assays capture hundreds of thousands of hypersensitive DNA sequences in the genome for each cell type~\cite{meuleman2020index}. In both cases, the output mean $\mathbb{E}[y \mid x]$ is a poor substitute for the entire output $p(y \mid x)$ as the distribution itself is of scientific interest. For this reason, existing AL paradigms are not suited to distributional outputs because they have no other option but to reduce the output to a pre-specified scalar functional (e.g., mean or an event probability~\citep{bect2012sequential, sapsis2020output}). 

Conditional generative models such as conditional optimal transport~\citep{Bunne2022-qx, Driessen2025-vw}, flow matching~\citep{Lipman2022-mt}, and diffusion models~\citep{Ho2020-hd} are compelling options for predictive modeling of distributions. They demonstrated impressive performance for image and text generation~\citep{Zhan2024-yv, Liang2024-zy}, as well as perturbation effect prediction~\cite{Driessen2025-vw, Bunne2023-jf, Klein2025-qb, ryu2025cross, rohbeck2025modeling} and conditional DNA sequence generation~\cite{DaSilva2026-eq}. Despite their adoption in these domains, no existing acquisition strategies directly leverage these generative models within an active learning framework for experimental design.  

To address this gap, we develop a framework for active learning of conditional generative models that operates natively on distributional outputs. Specifically, we introduce the transport Neural Tangent Kernel (tNTK), a tractable kernel that measures the sensitivity of the predicted distribution $p_\theta(y \mid x)$ to perturbations of the model parameters $\theta$. The tNTK admits two complementary interpretations: it upper-bounds the Fréchet posterior variance under a Laplace approximation to the parameter posterior, and it controls the metric derivative of the predicted distribution on the Wasserstein manifold. We characterize the geometric gap and show the bound is tight in dimension one. To propose the first distribution-aware active learning strategy, we treat the tNTK as a Gaussian Process (GP) covariance function, and greedily select inputs that minimize the total posterior variance across the pool, following existing kernel-based active learning frameworks~\citep{Holzmuller2022-do,Ash2021-zv}. 

After introducing the necessary background (Section~\ref{sec:background}), we decompose the risk of a conditional generative prediction task into a mean and a shape term, and illustrate on a tractable 1D example that mean-based acquisition is suboptimal (Section~\ref{sec:histogram}). We then introduce the transport Neural Tangent Kernel (tNTK) as well as our distribution-aware AL strategy (Section~\ref{sec:transport_kernel}). Finally, we conduct an empirical evaluation across four classes of conditional generative models --- affine Gaussian transport, conditional quantile prediction, conditional flow matching, and conditional Monge gap --- on simulated data where ground truth is available, as well as on real-world data from intent-labeled sentences and single-cell perturbation datasets (Section~\ref{sec:experiments}). Across 26 out of 30 settings, the tNTK outperforms baselines that operate on the conditional mean alone.

\section{Background}
\label{sec:background}
\subsection{Conditional Generative Models (CGMs)}
\label{sec:cgm}
We consider training data $\mathcal{D} = \{(x_i, \{y_{i,j}\}_{j=1}^{n_i})\}_{i=1}^{N}$, where each input $x_i \in \mathcal{X}_\text{train}$ is paired with a set of $n_i$ independent and identically distributed samples $\{y_{i,j}\}_{j=1}^{n_i}$ drawn from an unknown conditional distribution $\mu(x_i) \in \mathcal{P}(\mathcal{Y})$.  \emph{Transport-based} CGMs induce output distributions by pushing forward a fixed base measure $\nu \in \mathcal{P}(\mathcal{Y})$ through a parametric map $T_\theta(x): \mathcal{Y} \rightarrow \mathcal{Y}$:
\begin{align}
    \mu_\theta(x) = T_\theta(x)\# \nu,
    \label{eq:pushforward}
\end{align}
where $\theta \in \mathbb{R}^p$ denotes the model parameters. Concretely, a sample from $\mu_\theta(x)$ is obtained by drawing $y \sim \nu$ and returning $T_\theta(x)(y)$. This formulation covers various common families of conditional generative models, two of which we now briefly detail.

\paragraph{Conditional optimal transport} Conditional optimal transport frameworks learn a parametric map $T_\theta(x)$ that approximates the optimal transport map from $\nu$ to $\mu(x)$ across all input points $x \in \mathcal{X}_\text{train}$. CondOT~\citep{Bunne2022-qx} parameterizes $T_\theta(x)(y) = \nabla_y \Phi_\theta (x, y)$ where $\Phi_\theta$ is a partially input convex neural network (PICNN)~\citep{Amos2016-dq}, which ensures that $T_\theta(x)$ is a valid Monge map. Conditional Monge gap~\citep{Driessen2025-vw} drops the PICNN constraint and instead trains an unconstrained neural network with two terms: a distributional loss between $T_\theta(x)\# \nu$ and $\mu(x)$, and the Monge gap regularizer~\citep{Uscidda2023-as}, which penalizes the suboptimality of $T_\theta(x)$ as a transport map between $\nu$ and its own pushforward. This approach improves numerical stability and empirical performance over CondOT.

\paragraph{Conditional flow matching (CFM)} CFM~\citep{Lipman2022-mt}\footnote{We use ``conditional'' to refer to conditioning on an external covariate $x$ (e.g., ~\citep{Klein2025-qb}). This differs from the original usage in \citet{Lipman2022-mt}, where ``conditional'' refers to per-sample conditional probability paths used to train an unconditional vector field.} learns a parametric, time- and condition-dependent vector field $v_\theta^{t, x}: \mathcal{Y} \rightarrow \mathcal{Y}$ by regressing on a target field $u^{t, x}$ constructed from a chosen conditional probability path interpolating between $\nu$ at $t=0$ and $\mu(x)$ at $t=1$. The flow $\phi^{t, x}(y)$ is defined as the solution of the following ordinary differential equation:
\begin{equation}
    \frac{d}{dt}\phi^{t, x}(y) = v_\theta^{t, x}(\phi^{t, x}(y)), \quad \phi^{0, x}(y) = y. \label{eq:cfm_ode}
\end{equation}
At inference time, samples from $\mu_\theta(x) = (\phi^{1, x})\# \nu$ are obtained by sampling $y \sim \nu$ and integrating the ODE from $t=0$ to $t=1$ with initial condition $\phi^{0, x}(y) = y$. Although CFM specifically applies to the case where $\nu$ is a noise distribution, variants such as OT-CFM~\citep{Tong2023-mg} extend CFM to cases where $\nu$ is a data distribution using optimal transport couplings.

\subsection{Active Learning for Regression}
\label{sec:al}

In pool-based AL, the acquisition strategy has access to a candidate set $\mathcal{X}_{\text{pool}} \subseteq \mathcal{X}$ of unlabeled inputs and selects a subset $\mathcal{S} \subseteq \mathcal{X}_{\text{pool}}$ for which outputs are measured with the experimental system. A key heuristic in the field is to select the input that minimizes the total posterior variance of the surrogate model $f_\theta$ on the pool,
\begin{equation}
    x^* = \argmin_{x' \in \mathcal{X}_{\text{pool}}} \sum_{x \in \mathcal{X}_{\text{pool}}} \operatorname{Var}\!\left[f_\theta(x) \mid \mathcal{X}_{\text{train}} \cup \{x'\}\right].
    \label{eq:total_variance}
\end{equation}
Such candidates are often the most informative for generalization to a test set~\cite{Ash2021-zv}. Active learning strategies for regression are further discussed in~\citep{Holzmuller2022-do}.

For neural network-based models, predictive uncertainty is intractable in general and must be approximated. Common approaches include approximate Bayesian inference over the model parameters~\citep{rakesh2021efficacy}, Monte-Carlo dropout~\citep{pmlr-v48-gal16}, and kernel-based approximations derived from the finite-width Neural Tangent Kernel (NTK)~\citep{Khan2019-qd, novak2022fast}:
\begin{equation}
    k_\theta^{\mathrm{NTK}}(x, x') = \nabla_\theta f(x)^\top \nabla_\theta f(x').
    \label{eq:ntk}
\end{equation}
The predictive distribution $f$ after linearization around the trained parameters $\theta$ is well approximated by a Gaussian process with covariance $k_\theta^{\mathrm{NTK}}$. Informative candidates can then be selected using standard kernel-based acquisition strategies~\citep{Holzmuller2022-do},  such as the total-posterior-variance minimization described above. All of these methods assume that $f(x)$ is a scalar or low-dimensional prediction, and do not extend to settings where the model's output is a probability distribution.

\section{Distributional Risk Decomposition}
\label{sec:histogram}

We illustrate the need for distribution-aware AL methods by analyzing the test risk of conditional generative models under the 2-Wasserstein loss. These derivations suggest that existing AL methods based on test risk minimization for conditional mean prediction are suboptimal. We then present a fully-tractable one-dimensional example alongside a numerical experiment. 

\subsection{Mean-shape decomposition of distributional risk}

For the following theoretical derivations, we assume that distribution $\mu(x_i)$ (or equivalently, infinite samples from it) is observed for each input $x_i$. This allows us to define our training data as $\mathcal{D} = \{(x_i, \mu(x_i))\}_{i=1}^N$. We also assume the availability of a posterior distribution $\theta \mid \mathcal{D}$ over the parameters $\theta$ of a CGM that reflects uncertainty in the predicted distribution $\mu_\theta(x) = T_\theta(x)_\# \nu$. For a test point $x \in \mathcal{X}$, the risk under the squared Wasserstein loss is defined as:
\begin{align}
    \mathcal{R}(x, \mathcal{D}) = \mathbb{E}_{\theta \mid \mathcal{D}}\!\left[W_2^2(\mu_\theta(x), \mu(x))\right], \label{eq:test_w2_risk}
\end{align}
and admits the following decomposition:
\begin{restatable}[Mean-shape decomposition of risk]{prop}{riskdecomposition}
\label{prop:risk_decomposition}
The Wasserstein risk defined in Equation~\eqref{eq:test_w2_risk} decomposes as
\begin{equation}
\mathcal{R}(x, \mathcal{D}) = \underbrace{ \mathbb{E}_{\theta \mid \mathcal{D}}\!\left[\|m_\theta(x) - m(x)\|^2\right]}_{\mathcal{R}_\text{mean}(x, \mathcal{D})} \;+\; \underbrace{\mathbb{E}_{\theta \mid \mathcal{D}}\!\left[W_2^2(\mu^c_\theta(x), \mu^c(x))\right]}_{\mathcal{R}_\text{shape}(x, \mathcal{D})}, \label{eq:risk_decomp}
\end{equation}
where $m_\rho = \int y\, d\rho(y)$ denotes the mean of $\rho$, and $\rho^c$ is obtained by centering $\rho$ for any $\rho \in \mathcal{P}_2(\mathcal{Y})$.
\end{restatable}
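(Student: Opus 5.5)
The plan is to prove a deterministic identity first and then average over the posterior. For any $\rho, \sigma \in \mathcal{P}_2(\mathcal{Y})$ we show
\begin{equation*}
W_2^2(\rho,\sigma) = \|m_\rho - m_\sigma\|^2 + W_2^2(\rho^c,\sigma^c).
\end{equation*}
Applying this with $\rho = \mu_\theta(x)$ and $\sigma = \mu(x)$, and taking $\mathbb{E}_{\theta\mid\mathcal{D}}$ of both sides, gives Equation~\eqref{eq:risk_decomp} by linearity of expectation. Both terms are nonnegative, so no integrability issue arises beyond the finiteness of $\mathcal{R}$ itself; if that is infinite, both sides are infinite.

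To prove the identity, write $\tau_a(y) = y - a$, so that $\rho^c = (\tau_{m_\rho})_\#\rho$ and $\sigma^c = (\tau_{m_\sigma})_\#\sigma$. The map $\gamma \mapsto (\tau_{m_\rho}\times\tau_{m_\sigma})_\#\gamma$ is a bijection between $\Pi(\rho,\sigma)$ and $\Pi(\rho^c,\sigma^c)$; its inverse is the pushforward by the reverse translations. For $\gamma \in \Pi(\rho,\sigma)$, with $(Y,Z)\sim\gamma$, set $\tilde Y = Y - m_\rho$ and $\tilde Z = Z - m_\sigma$. Then
\begin{equation*}
\mathbb{E}\|Y-Z\|^2 = \|m_\rho - m_\sigma\|^2 + 2\langle m_\rho - m_\sigma, \mathbb{E}[\tilde Y - \tilde Z]\rangle + \mathbb{E}\|\tilde Y - \tilde Z\|^2 .
\end{equation*}
The cross term vanishes because the marginals of $\tilde Y$ and $\tilde Z$ are centered, so $\mathbb{E}[\tilde Y - \tilde Z] = 0$ for every coupling. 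The cost of each coupling therefore equals the constant $\|m_\rho-m_\sigma\|^2$ plus the cost of the corresponding centered coupling. Taking the infimum over $\gamma$ and using the bijection yields the identity.

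There is no real obstacle here. The only points needing care are that the bijection is onto, which follows from invertibility of the translations, and that the second moments are finite, so the expansion is valid and the means exist; this holds since $\rho,\sigma\in\mathcal{P}_2$. We implicitly assume $\mu_\theta(x)\in\mathcal{P}_2(\mathcal{Y})$ for posterior-almost every $\theta$, and that $\mathcal{Y}$ is closed under the relevant translations. If it is not, we work in $\mathbb{R}^d$ as the ambient space, which does not change $W_2$.
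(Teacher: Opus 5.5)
Your proposal is correct and follows the same route as the paper. Both prove the pointwise identity $W_2^2(\mu_\theta,\mu)=\|m_\theta-m\|^2+W_2^2(\mu_\theta^c,\mu^c)$ by translating couplings and then take posterior expectations; the paper gets it as two inequalities by translating an optimal coupling in each direction, while your per-coupling identity together with the bijection $\Pi(\rho,\sigma)\leftrightarrow\Pi(\rho^c,\sigma^c)$ gives both directions at once without invoking existence of optimal couplings.
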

The proof is given in Appendix~\ref{app:proof_risk_decomposition}. Classical regression-based active learning traditionally aims at (approximately) minimizing the mean-risk $\mathbb{E}_{x \sim p_\text{test}}\mathcal{R}_\text{mean}(x, \mathcal{D})$ ~\citep{sugiyama06a}. Because they do not consider the additional shape-related term, these methods may perform poorly whenever shape matters.

\subsection{One-dimensional example}
\label{sec:histogram_1d}
The one-dimensional case is a remarkable instance since Wasserstein distances admit closed-form expressions. Let $F^{-1}(x; \tau)$ (resp. $F^{-1}_\theta(x; \tau)$) denote the $\tau$-th quantile of distribution $\mu(x)$ (resp. $\mu_\theta(x)$) for $\tau \in [0, 1]$. We have the following identities:
\begin{restatable}[1D posterior risk]{lemma}{quantilevariance}
\label{lem:quantile_variance}
Assume the model is unbiased at the mean and every quantile level, i.e., $\mathbb{E}_{\theta \mid \mathcal{D}}[m_\theta(x)] = m(x)$ and $\mathbb{E}_{\theta \mid \mathcal{D}}[F^{-1}_\theta(x; \tau)] = F^{-1}(x; \tau)$ for all $x, \tau$. Then
\begin{equation}
\mathcal{R}_\text{mean}(x, \mathcal{D}) = \operatorname{Var}_{\theta \mid \mathcal{D}}[m_\theta(x)], \qquad
\mathcal{R}_\text{shape}(x, \mathcal{D}) = \int_0^1 \operatorname{Var}_{\theta \mid \mathcal{D}}[g_\theta(x; \tau)]\, d\tau, \label{eq:risk_1d_var}
\end{equation}
where $g_\theta(x; \tau) = F^{-1}_\theta(x; \tau) - m_\theta(x)$ denotes the centered quantile function of $\mu_\theta(x)$.
\end{restatable}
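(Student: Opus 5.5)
The plan is to start from Proposition~\ref{prop:risk_decomposition} and evaluate each of the two terms separately in dimension one. For the mean term, the unbiasedness assumption $\mathbb{E}_{\theta \mid \mathcal{D}}[m_\theta(x)] = m(x)$ turns $\mathbb{E}_{\theta\mid\mathcal{D}}[(m_\theta(x)-m(x))^2]$ into the definition of $\operatorname{Var}_{\theta\mid\mathcal{D}}[m_\theta(x)]$, so no further work is needed there.

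For the shape term, we will use the one-dimensional closed form of the $2$-Wasserstein distance, $W_2^2(\rho_1,\rho_2)=\int_0^1 (F_{\rho_1}^{-1}(\tau)-F_{\rho_2}^{-1}(\tau))^2\,d\tau$. Next we observe that centering a measure shifts its quantile function by its mean. Concretely, if $\rho^c$ is the pushforward of $\rho$ under $y\mapsto y-m_\rho$, then $F_{\rho^c}^{-1}(\tau)=F_\rho^{-1}(\tau)-m_\rho$. Hence the quantile function of $\mu^c_\theta(x)$ is $g_\theta(x;\tau)$, and that of $\mu^c(x)$ is $g(x;\tau):=F^{-1}(x;\tau)-m(x)$. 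This gives
\begin{equation*}
\mathcal{R}_\text{shape}(x,\mathcal{D})=\mathbb{E}_{\theta\mid\mathcal{D}}\!\left[\int_0^1 \bigl(g_\theta(x;\tau)-g(x;\tau)\bigr)^2 d\tau\right].
\end{equation*}

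We then exchange expectation and integral by Tonelli, which is legitimate because the integrand is nonnegative. By linearity, the two unbiasedness assumptions combine to give $\mathbb{E}_{\theta\mid\mathcal{D}}[g_\theta(x;\tau)]=F^{-1}(x;\tau)-m(x)=g(x;\tau)$ for every $\tau$. The inner expectation is therefore exactly $\operatorname{Var}_{\theta\mid\mathcal{D}}[g_\theta(x;\tau)]$, and integrating over $\tau$ yields the claimed formula.

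The only delicate points are measure-theoretic. We need the posterior second moments to be finite, which follows from $\mu_\theta(x)\in\mathcal{P}_2$ with finite expected $W_2^2$ risk. We also need joint measurability of $(\theta,\tau)\mapsto F_\theta^{-1}(x;\tau)$, so that Tonelli applies; this holds because quantile functions are left-continuous in $\tau$ and measurable in $\theta$ whenever $T_\theta$ is. We expect the main, mild, obstacle to be stating these regularity conditions cleanly. The algebra itself is routine.
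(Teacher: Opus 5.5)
Your proposal is correct and follows essentially the same route as the paper's proof: the mean term follows directly from unbiasedness, and the shape term from the one-dimensional quantile formula for $W_2^2$ applied to the centered measures (whose quantile functions are $g_\theta$ and $g$), followed by exchanging the $\tau$-integral with the posterior expectation and using $\mathbb{E}_{\theta\mid\mathcal{D}}[g_\theta(x;\tau)] = g(x;\tau)$. Your appeal to Tonelli for the nonnegative integrand and your measurability remarks are slightly more careful than the paper's appeal to Fubini, but they do not change the argument.
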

The proof appears in Appendix~\ref{app:proof_risk_1d}. The shape-related term becomes tractable as it reduces to an integral of marginal posterior variances over quantile levels $\tau \in [0, 1]$. 

We numerically illustrate this on simulated data (details in Appendix~\ref{app:condhistdetails}). Briefly, $\mu(x)$ is a two-component Gaussian mixture model with mean and variance depending on $x \in [0,1]$ (Figure~\ref{fig:toyhistogram}A). $m(x)$ and  $g(x; \tau)$ are modeled for a discrete grid of $\tau$ levels using independent Gaussian processes. This approach provides direct access to the posterior variances in \eqref{eq:risk_1d_var}. We run AL campaigns to compare three acquisition strategies: Wasserstein AL, which minimizes the expected total risk, mean AL, which minimizes only its mean component,
\begin{align}
    x^*_{\text{W}} &= \argmin_{x' \in \mathcal{X}_{\text{pool}}} \mathbb{E}_{x \sim p_{\text{test}}} \mathcal{R}(x, \mathcal{D}_{\text{train}} \cup \{x'\}), \\
    x^*_{\text{mean}} &= \argmin_{x' \in \mathcal{X}_{\text{pool}}} \mathbb{E}_{x \sim p_{\text{test}}} \mathcal{R}_{\text{mean}}(x, \mathcal{D}_{\text{train}} \cup \{x'\}),
\end{align}
and random sampling.


Wasserstein AL outperforms Mean AL by over an order of magnitude (Figure~\ref{fig:toyhistogram}B). Mean AL still improves over random in terms of mean error (Figure~\ref{fig:toyhistogram}C) but plateaus on shape error (Figure~\ref{fig:toyhistogram}D). These results illustrate that ignoring distribution shape for active learning is suboptimal. 

\begin{figure}[t]
  \centering
  \includegraphics[width=\textwidth]{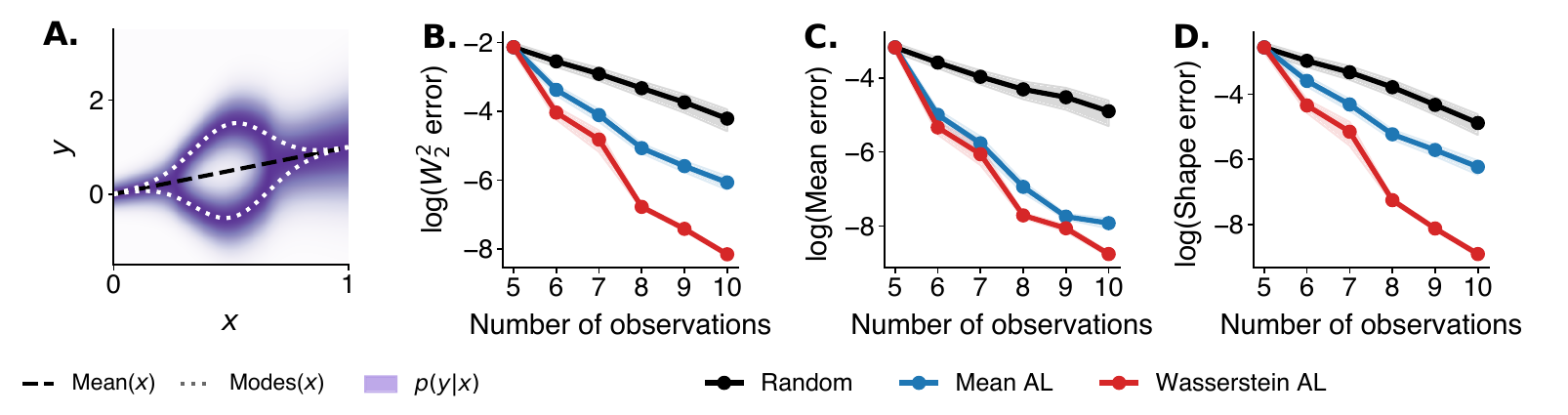}
  \vspace{-0.5cm}
  \caption{Synthetic conditional density estimation. (A) Ground truth conditional density $\mu(x)$. (B--D) Test error after active learning as a function of training set size: total risk (B), mean prediction error (C), and shape error (D). Results averaged over $50$ random initializations; shaded bands show $\pm 2$ standard errors.}
  \vspace{-0.3cm}
  \label{fig:toyhistogram}
\end{figure}

\section{Transport Kernels for Active Learning of Conditional Distributions}
\label{sec:transport_kernel}
For most applications of interest, neither the test risk nor the shape term from~\eqref{eq:risk_decomp} is tractable. Most fundamentally, the ground-truth distribution $\mu(x)$ is unknown at unseen inputs $\mathcal{X}_\text{pool}$. Even if it were known, the 2-Wasserstein distance remains intractable for neural transport maps. Those represent significant challenges for distribution-aware active learning.

We circumvent these issues by considering the posterior Fréchet variance as a measure of uncertainty in a generative model's predictions $\mu_\theta(x)$. We bound this quantity by linearizing the transport map $T_\theta(x)$ around its parameters $\theta$, which yields the transport Neural Tangent Kernel (tNTK). We briefly discuss properties of the tNTK, how to estimate it efficiently, and then use it to design a distribution-aware active learning strategy.
 
\subsection{Quantifying model uncertainty via parameter sensitivity}
We consider the \emph{posterior Fréchet variance},
\begin{align}
    V_F(x) := \mathrm{Var}_{W_2}\!\left[\mu_\theta(x) \mid \mathcal{D}\right] =  \inf_{\rho \in \mathcal{P}_2(\mathcal{Y})} \mathbb{E}_{\theta \mid \mathcal{D}}\!\left[W_2^2(\mu_\theta(x), \rho)\right], \label{eq:frechet_variance}
\end{align}
defined as the minimum mean squared Wasserstein distance from a fixed reference measure to a posterior draw $\mu_\theta(x)$~\citep{frechet1948elements}. The minimizer is the Wasserstein barycenter $\bar{\mu}_\theta(x)$ of the posterior law of $\mu_\theta(x)$~\citep{agueh2011barycenters}. The posterior Fréchet variance $V_F(x)$ has two properties that make it especially relevant for active learning. First, it lower bounds the risk $V_F(x) \leq \mathcal{R}(x, \mathcal{D})$, with equality when the Wasserstein barycenter equals the data-generating distribution $\bar{\mu_\theta}(x) = \mu(x)$. Second, in the deterministic limit, where $\mu_\theta(x)$ collapses to a Dirac mass at $f_\theta(x)$, $V_F(x)$ recovers the standard posterior variance $\mathrm{Var}_{\theta \mid \mathcal{D}}[f_\theta(x)]$. 

Although $V_F(x)$ is intractable in general, since the Wasserstein barycenter of the posterior law of $\mu_\theta(x)$ has no closed form, we show that linearizing the transport map yields a tractable upper bound.

\begin{restatable}[Upper bound on the posterior Fréchet variance]{theorem}{sensitivitybound}
\label{thm:sensitivitybound}
Let $\theta \mid \mathcal{D} \sim \mathcal{N}(\hat\theta, \Sigma)$, with $\Sigma \in \mathbb{R}^{p \times p}$ positive definite and $\|\Sigma\|_{\mathrm{op}} < 1$. Assume that for every $x \in \mathcal{X}$ and for $\nu$-almost every $y \in \mathcal{Y}$, the map $\theta \mapsto T_\theta(x)(y)$ is $C^2(\mathbb{R}^p)$, and that the Jacobian $D_\theta T_{\hat\theta}(x)(\cdot)$ and the pointwise supremum of the operator norm of the Hessian, $\sup_{\theta} \|D_\theta^2 T_\theta(x)(\cdot)\|_{\mathrm{op}}$, are in $L^2(\nu)$. Then, there exists $C(x)$ such that:
\begin{equation}
    V_F(x) \leq \operatorname{Tr}\!\left(G_{\hat\theta}(x)\, \Sigma\right) + C(x)\|\Sigma\|_{\mathrm{op}}^{3/2},\label{eq:sensitivity_bound}
\end{equation}
where the $p \times p$ parameter sensitivity matrix at input $x$ is
\begin{equation}
    G_{\hat\theta}(x) = \int D_\theta T_{\hat\theta}(x)(y)^\top D_\theta T_{\hat\theta}(x)(y)\, d\nu(y). \label{eq:G_def}
\end{equation}
Inequality \eqref{eq:sensitivity_bound} is tight to leading order when $\dim(\mathcal{Y}) = 1$ and $T_\theta(x)$ is monotone in $y$.
\end{restatable}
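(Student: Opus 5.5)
We bound the infimum defining $V_F(x)$ by choosing the reference measure $\rho = \mu_{\hat\theta}(x) = T_{\hat\theta}(x)\#\nu$. For each posterior draw we then bound $W_2^2(\mu_\theta(x), \mu_{\hat\theta}(x))$ using the coupling induced by the common base sample, namely $(T_\theta(x)(y), T_{\hat\theta}(x)(y))$ with $y\sim\nu$. This coupling is admissible but not necessarily optimal, so
\begin{equation*}
V_F(x) \le \mathbb{E}_{\theta\mid\mathcal{D}}\!\left[W_2^2(\mu_\theta(x),\mu_{\hat\theta}(x))\right] \le \mathbb{E}_{\theta\mid\mathcal{D}}\int \|T_\theta(x)(y)-T_{\hat\theta}(x)(y)\|^2\,d\nu(y).
\end{equation*}
This reduces the problem to a pointwise-in-$y$ Taylor analysis.

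Write $\delta = \theta-\hat\theta \sim \mathcal{N}(0,\Sigma)$ and $J(y) = D_\theta T_{\hat\theta}(x)(y)$. Taylor's theorem with integral remainder gives
\begin{equation*}
T_\theta(x)(y)-T_{\hat\theta}(x)(y) = J(y)\delta + r(y,\delta), \qquad \|r(y,\delta)\| \le \tfrac12 H(y)\|\delta\|^2,
\end{equation*}
where $H(y)=\sup_\theta\|D^2_\theta T_\theta(x)(y)\|_{\mathrm{op}}$. Expanding the square produces three terms. The leading term is
\begin{equation*}
\mathbb{E}_\delta\!\int \|J\delta\|^2 d\nu = \operatorname{Tr}(G_{\hat\theta}(x)\Sigma),
\end{equation*}
which follows from $\mathbb{E}[\delta\delta^\top]=\Sigma$ and Fubini; the latter is justified since $J\in L^2(\nu)$. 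The cross term we bound by Cauchy--Schwarz in the product space $\nu\otimes\mathcal{N}(0,\Sigma)$:
\begin{equation*}
2\,\mathbb{E}\!\int \langle J\delta, r\rangle \le \|J\|_{L^2(\nu)}\,\|H\|_{L^2(\nu)}\,\big(\mathbb{E}\|\delta\|^2\big)^{1/2}\big(\mathbb{E}\|\delta\|^4\big)^{1/2}.
\end{equation*}
The remainder term satisfies $\mathbb{E}\int\|r\|^2 \le \tfrac14\|H\|^2_{L^2(\nu)}\,\mathbb{E}\|\delta\|^4$. We then use Gaussian moment bounds: $\mathbb{E}\|\delta\|^2=\operatorname{Tr}\Sigma \le p\|\Sigma\|_{\mathrm{op}}$ and $\mathbb{E}\|\delta\|^4 \le 3p^2\|\Sigma\|_{\mathrm{op}}^2$. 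The cross term is therefore $O(\|\Sigma\|_{\mathrm{op}}^{3/2})$ and the remainder term is $O(\|\Sigma\|_{\mathrm{op}}^2)$. Since $\|\Sigma\|_{\mathrm{op}}<1$, we have $\|\Sigma\|_{\mathrm{op}}^2\le\|\Sigma\|_{\mathrm{op}}^{3/2}$, so both fold into a single $C(x)\|\Sigma\|_{\mathrm{op}}^{3/2}$ with $C(x)$ depending on $p$, $\|J\|_{L^2(\nu)}$ and $\|H\|_{L^2(\nu)}$. This is precisely why the hypothesis $\|\Sigma\|_{\mathrm{op}}<1$ is imposed.

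For tightness in one dimension with $T_\theta(x)$ monotone in $y$, we use the quantile representation behind Lemma~\ref{lem:quantile_variance}. Monotone pushforwards of $\nu$ are comonotone, so the common-base-sample coupling is optimal between any two $\mu_\theta(x)$. The space $\mathcal{P}_2(\mathbb{R})$ embeds isometrically into $L^2(\nu)$ via $\mu_\theta(x)\mapsto T_\theta(x)$, and the Fréchet variance becomes the Hilbert-space variance $\mathbb{E}\|T_\theta - \mathbb{E}T_\theta\|^2_{L^2(\nu)}$. Here we need the average of monotone maps to be monotone, which it is, so it defines the barycenter. By the same expansion this equals $\operatorname{Tr}(G\Sigma)$ plus higher-order terms: the mean shift $\mathbb{E}T_\theta - T_{\hat\theta}$ is $O(\|\Sigma\|)$, and subtracting it changes the value only at order $\|\Sigma\|^{3/2}$ or smaller. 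Matching leading orders gives tightness.

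The main obstacle is the careful handling of the remainder: justifying Fubini and dominated interchange under only $L^2(\nu)$ control of $J$ and $\sup_\theta\|D^2_\theta T\|$. We handle this by bounding everything through the product-measure Cauchy--Schwarz above rather than attempting pointwise control. Exact tracking of the constant $C(x)$ is secondary.
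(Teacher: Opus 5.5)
Your proposal is correct and follows the paper's proof essentially step for step. Both proofs take $\rho=\mu_{\hat\theta}(x)$, bound $W_2^2$ with the synchronized coupling, expand to second order with remainder $\tfrac12\sup_\theta\|D^2_\theta T_\theta\|_{\mathrm{op}}\|\delta\|^2$, split into three terms controlled by Cauchy--Schwarz and Gaussian fourth moments, and absorb $\|\Sigma\|_{\mathrm{op}}^2$ into $\|\Sigma\|_{\mathrm{op}}^{3/2}$ using $\|\Sigma\|_{\mathrm{op}}<1$; your constant differs only immaterially.

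In the one-dimensional tightness step you are more careful than the paper. The paper moves straight from the exact isometry to tightness, leaving the inequality $V_F(x)\le\mathbb{E}\,W_2^2(\mu_\theta(x),\mu_{\hat\theta}(x))$ untouched. You instead account explicitly for the barycenter $\mathbb{E}T_\theta$ differing from $T_{\hat\theta}$ by $\mathbb{E}[r]=O(\|\Sigma\|_{\mathrm{op}})$. The resulting bias--variance correction is $O(\|\Sigma\|_{\mathrm{op}}^2)$, which is exactly what yields $V_F(x)=\operatorname{Tr}(G_{\hat\theta}(x)\Sigma)+O(\|\Sigma\|_{\mathrm{op}}^{3/2})$.
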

The proof is given in Appendix~\ref{app:proof_sensitivity_bound}. Theorem~\ref{thm:sensitivitybound} identifies that under a Laplace approximation~\citep{mackay1992practical, daxberger2021laplace} to the parameter posterior,  $\operatorname{Tr}\!\left(G_{\hat\theta}(x)\, \Sigma\right)$ plays the role of a tractable measure of the predictive variance in the generative model's predictions, in the limit of small variance. This is important given that exact computations of the Wasserstein metric tensor are otherwise intractable for neural transport maps~\citep{li2023wasserstein, liu2022neural}. 

The tightness of the bound from \eqref{eq:sensitivity_bound} is an important consideration that we investigate in several ways. First, we give a geometric interpretation of $G_\theta(x)$ by upper-bounding the Wasserstein metric derivative of trajectories of $\mu_\theta(x)$ for small deviations around $\theta$ and characterize when the bound is tight (Appendix~\ref{app:geometry}). Second, for affine transport of Gaussian distributions (Appendix~\ref{app:numerical_gap_analysis}), we find numerically that the bound is exact for $d = 1$ and overestimates the true variance by at most 70\% for $d = 50$, i.e., by less than a factor of two. Finally, our experiments on active learning benchmarks (Section~\ref{sec:experiments}) suggest that the parameter sensitivity matrix $G_{\hat\theta}(x)$ is informative for active learning across multiple model classes.


\subsection{Transport Neural Tangent Kernel}

The parameter sensitivity matrix $G_\theta(x)$ extends naturally to a pairwise object by considering two distinct inputs $x$ and $x'$. Specifically, we define the \emph{transport Neural Tangent Kernel} (tNTK) as:
\begin{align}
    k_\theta(x, x') = \int \operatorname{Tr}\left[D_\theta T_{\theta}(x)(y)^\top D_\theta T_{\theta}(x')(y)\right]d\nu(y). \label{eq:kernel_def}
\end{align}
This kernel measures the alignment between the parameter sensitivities of the transport maps at $x$ and $x'$, integrated over the base measure $\nu$. The diagonal of the kernel recovers the trace of the parameter sensitivity matrix, $k_\theta(x, x) = \operatorname{Tr}[G_\theta(x)]$. Under an isotropic posterior $\Sigma = \sigma^2 I$ -- the standard choice in kernel-based active learning~\citep{Holzmuller2022-do} -- Theorem~\ref{thm:sensitivitybound} implies that \begin{equation}
V_F(x) \leq \sigma^2 k_\theta(x, x) + O(\sigma^3).
\end{equation} 
This identifies the kernel diagonal as a tractable scalar measure of predictive uncertainty. The tNTK is positive semi-definite (Appendix~\ref{app:kernel_psd}). When $\nu$ degenerates to a Dirac measure, $\mu_\theta(x)$ collapses to a single point and the tNTK reduces to the finite-width NTK~\citep{Khan2019-qd}, justifying the name.

\paragraph{Kernel Computation and Estimation} The tNTK may be computed over all parameters $\theta$ or restricted to the last layer parameters. When the transport map takes the form $T_\theta(x)(y) = W^\top h_\theta(x, y)$ with penultimate features $h_\theta(x, y) \in \mathbb{R}^m$ and a linear last layer $W \in \mathbb{R}^{m \times d}$, the tNTK has a simple expression:
\begin{align}
    k^\mathrm{ll}_\theta(x, x') = d \int h_\theta(x, y)^\top h_\theta(x', y) \, d\nu(y). \label{eq:kernel_lastlayer}
\end{align}
The proof appears in Appendix~\ref{app:last_layer_kernel}. \eqref{eq:kernel_lastlayer} involves the inner product of penultimate features at $x$ and $x'$ integrated over the base measure $\nu$ and therefore represents the distributional analog of the last-layer NTK~\citep{Holzmuller2022-do}. In practice, we approximate the integral with Monte Carlo samples from $\nu$, requiring only forward passes through the network $h_\theta$.

\subsection{Kernel Decomposition for Affine Gaussian Transport Maps}
\label{sec:affine_gaussian_transport}
We show that when $T_\theta(x)$ is an affine transport map, the tNTK admits a clean decomposition into mean- and shape-sensitivity terms.
\begin{restatable}{lemma}{kerneldecomposition}
\label{lem:kernel_decomposition}
Let us assume that $T_\theta(x)$ is an affine map: $T_\theta(x)(y) = b_\theta(x) + M_\theta(x)y$ and that the base measure $\nu$ is centered and isotropic (i.e., $\mathbb{E}_{y \sim \nu}[y] = 0$ and $\mathbb{E}_{y \sim \nu}[yy^\top] = I$). Then
\begin{align}
    k_\theta(x, x') = \mathrm{Tr}\left[D_\theta b_\theta(x)^\top D_\theta b_\theta(x')\right] + \sum_{j=1}^d \mathrm{Tr}\left[D_\theta M^j_\theta(x)^\top D_\theta M^j_{\theta}(x')\right],
    \label{eq:affine_gaussian_map}
\end{align}
where $M^j_\theta(x)$ denotes the $j$-th column of $M_\theta(x)$.
\end{restatable}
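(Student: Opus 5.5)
The proof is a direct computation from the definition of the tNTK in \eqref{eq:kernel_def}. Because $T_\theta(x)(y) = b_\theta(x) + M_\theta(x)y$ is affine in $y$ with $y$ not depending on $\theta$, the parameter Jacobian also splits as a sum. Writing $M_\theta(x)y = \sum_{j=1}^d y_j M^j_\theta(x)$ in terms of the columns of $M_\theta(x)$, we obtain
\begin{equation*}
D_\theta T_\theta(x)(y) = D_\theta b_\theta(x) + \sum_{j=1}^d y_j\, D_\theta M^j_\theta(x),
\end{equation*}
where each term is a $d \times p$ matrix. The plan is to substitute this expression for both $x$ and $x'$ into the integrand of \eqref{eq:kernel_def} and expand the trace of the product bilinearly.

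The expansion gives four groups of terms. The first is $\operatorname{Tr}[D_\theta b_\theta(x)^\top D_\theta b_\theta(x')]$, which does not depend on $y$. The second and third are the cross terms $\sum_j y_j \operatorname{Tr}[D_\theta b_\theta(x)^\top D_\theta M^j_\theta(x')]$ and its mirror with the roles of $x$ and $x'$ swapped; each is linear in $y$. The last is the quadratic term $\sum_{j,l} y_j y_l \operatorname{Tr}[D_\theta M^j_\theta(x)^\top D_\theta M^l_\theta(x')]$. Next we integrate against $\nu$, using linearity of the integral and the fact that every coefficient is independent of $y$. The constant term integrates to itself because $\nu$ is a probability measure. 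The linear cross terms vanish because $\mathbb{E}_\nu[y] = 0$. In the quadratic term, $\mathbb{E}_\nu[y_j y_l] = \delta_{jl}$ by isotropy, so the double sum collapses to the diagonal $\sum_j \operatorname{Tr}[D_\theta M^j_\theta(x)^\top D_\theta M^j_\theta(x')]$. This yields \eqref{eq:affine_gaussian_map}.

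There is no real obstacle here; the argument is routine bookkeeping. The only points that need care are the following. First, differentiating $M_\theta(x)y$ in $\theta$ commutes with multiplication by the fixed vector $y$, which is what justifies the column decomposition of the Jacobian. Second, the interchange of sum and integral is legitimate because $\nu$ has finite second moments (it is isotropic) and the coefficients do not depend on $y$. Finally, if one prefers to treat $D_\theta M_\theta(x)$ as a three-index tensor, the cleanest route is to index entrywise: write $[D_\theta T]_{ik} = \partial_{\theta_k} b_i + \sum_j y_j\, \partial_{\theta_k} M_{ij}$, and then the trace becomes the sum over $i,k$ of products of these entries, from which the same moment argument applies.
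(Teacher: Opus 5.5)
Your proposal is correct and follows essentially the same argument as the paper's proof. Both write the parameter Jacobian as $D_\theta b_\theta(x) + \sum_j y_j D_\theta M^j_\theta(x)$, expand the trace into a constant term, two linear cross terms and a quadratic term, and then kill the cross terms with $\mathbb{E}_\nu[y]=0$ and collapse the quadratic term to its diagonal with $\mathbb{E}_\nu[yy^\top]=I$.
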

The proof appears in Appendix~\ref{app:proof_kernel_decomposition}.  
This decomposition mirrors the one from Proposition~\ref{prop:risk_decomposition} at the kernel level: the first term captures sensitivity of the predicted mean, the second captures sensitivity of the predicted shape. Additionally, we demonstrate that when $M_\theta(x)$ is further restricted to be lower-triangular (e.g., via Cholesky decomposition), for any perturbation $h \in \mathbb{R}^p$, $h^\top G_\theta(x)h$ coincides with the squared metric derivative at 0 of the curve $t \mapsto \mu_{\theta + t h}(x)$ under the Knothe--Rosenblatt (KR) transport metric (Appendix~\ref{app:KR}). This suggests that in the setting of linear maps, the parameter sensitivity matrix $G_\theta(x)$ effectively captures shape variations.

%


\subsection{Active Learning Methods via the Transport NTK}

Let $\mathcal{X}_\textrm{train}$ denote the conditions for which we have observed output samples, and $\mathcal{X}_{\textrm{pool}}$ the set of inputs for which experimental data is not available. Treating the tNTK as the covariance function of a Gaussian process, the GP posterior variance at a test point $x'$ after conditioning on a set $\mathcal{X}_S \subseteq \mathcal{X}$ is:
\begin{equation}
    \sigma^2_\theta(x' \mid \mathcal{X}_S) = k_\theta(x', x') - k_\theta(\mathcal{X}_S, x')^\top \left(k_\theta(\mathcal{X}_S, \mathcal{X}_S) + \lambda I\right)^{-1} k_\theta(\mathcal{X}_S, x'),
    \label{eq:gp_posterior_variance}
\end{equation}
where $\lambda > 0$ is a regularization parameter. We greedily select the next candidate as the one that minimizes the average posterior variance over the whole dataset:
\begin{equation}
    x^* = \argmin_{x \in \mathcal{X}_\text{pool}} \; \frac{1}{|\mathcal{X}_\text{pool}| + |\mathcal{X}_\text{train}|} \sum_{x' \in \mathcal{X}_\text{pool}\cup\mathcal{X}_\text{train}} \sigma^2_\theta\!\left(x' \mid \mathcal{X}_\text{train} \cup \{x\}\right).
    \label{eq:acquisition}
\end{equation}
We apply this rule sequentially to design an entire batch (Algorithm~\ref{alg:al_transportkernel}).

\begin{algorithm}[H]
\caption{Active learning with the transport neural tangent kernel}
\label{alg:al_transportkernel}
\begin{algorithmic}[1]
\Require Pool of conditions $\mathcal{X}$, batch size $B$, number of rounds $T$, regularization $\lambda > 0$
\State Initialize $\mathcal{X}_\text{train} \subset \mathcal{X}$ with $|\mathcal{X}_\text{train}| = B_0$; set $\mathcal{X}_\text{pool} = \mathcal{X} \setminus \mathcal{X}_\text{train}$
\For{$t = 1, \ldots, T$}
    \State Fit generative model on $\mathcal{X}_\text{train}$ to obtain parameters $\theta_t$
    \State $\mathcal{B} \gets \emptyset$ \Comment{current batch}
    \State $\mathcal{X}_\mathcal{C} \gets \mathcal{X}_\text{train}$ \Comment{conditioning set}
    \For{$b = 1, 2, \ldots, B$}
    \State $x^* \gets \displaystyle\argmin_{x \in \mathcal{X}_\text{pool}\setminus\mathcal{B}}\; \sum_{x' \in \mathcal{X}} \sigma^2_{\theta_t}\!\left(x' \mid \mathcal{X}_\mathcal{C} \cup \{x\}\right)$ \Comment{select point}
    \State $\mathcal{B} \gets \mathcal{B} \cup \{x^*\}$
    \State $\mathcal{X}_\mathcal{C} \gets \mathcal{X}_\mathcal{C} \cup \{x^*\}$ 
    \EndFor
    \State $\mathcal{X}_\text{train} \gets \mathcal{X}_\text{train} \cup \mathcal{B}$ \Comment{update training set}
    \State $\mathcal{X}_\text{pool} \gets \mathcal{X}_\text{pool} \setminus \mathcal{B}$ \Comment{update pool}
\EndFor
\State \Return Final model parameters $\theta_T$
\end{algorithmic}
\end{algorithm}

\section{Experiments}
\label{sec:experiments}
We investigate the empirical performance of our active learning strategy based on the tNTK, following Algorithm \ref{alg:al_transportkernel}, across four conditional generative modeling tasks. We consider as baseline the selection rule in \eqref{eq:acquisition} applied to the NTK computed on a neural network fitting the mean of the distribution (``NTK'') and a random sampling baseline (see Appendix \ref{app:ntk_from_pushforward} for more details). The NTK baseline matches BAIT~\citep{Ash2021-zv} as defined in \citet{Holzmuller2022-do}, and is among the best active learning methods for regression tasks in \citet{Holzmuller2022-do}. Unless mentioned otherwise, we use the Monte Carlo estimate of the last layer tNTK for all experiments with finite data (using $M=4,096$ samples). Full experimental details can be found in Appendix~\ref{app:exp}.

\begin{figure}[b]
  \centering
  \includegraphics[width=\textwidth]{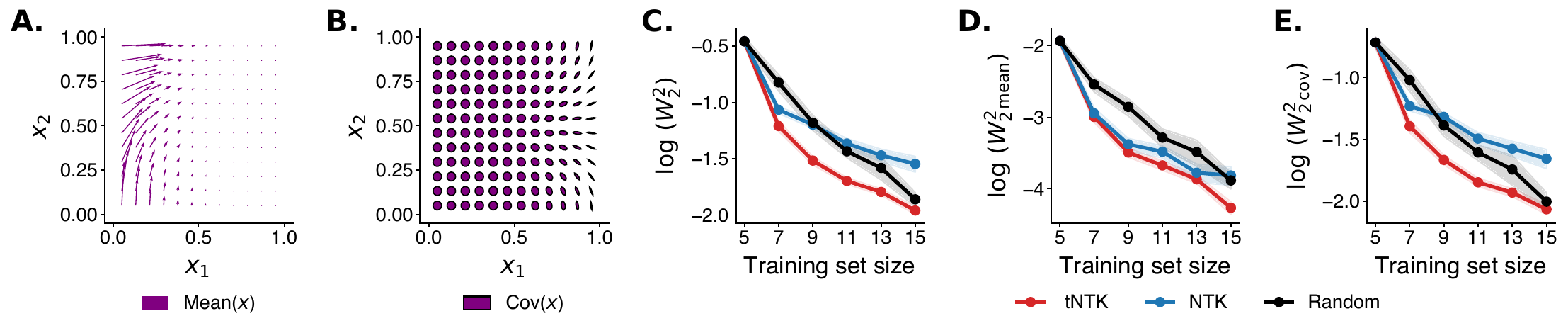}
  \caption{Results on the synthetic Gaussian 2D dataset. (A) Ground truth conditional mean $\mathbb{E}[y \mid x]$. (B) Ground truth conditional covariance $\text{Cov}[y \mid x]$. (C) Test Wasserstein error after active learning as a function of training set size (lower is better). (D-E) Mean (D) and covariance (E) components of the test Wasserstein error. Results are averaged over $30$ independent random initializations; shaded bands show standard errors. 
  }
  \label{fig:gaussian2d}
\end{figure}

\subsection{Affine transport of 2D Gaussian distributions}
\label{sec:result_affine_2d_gaussian}
We first investigate the active learning performance of the kernel when $G_\theta(x)$ can be computed in closed form (Lemma \ref{lem:kernel_decomposition}). 
We consider 2-dimensional isotropic Gaussian $y \sim \mathcal{N}(0, I_2)$ and conditional transport maps of the form
\begin{equation}
  \label{eq:affine-transport}
  T_\theta(x)(y) \;=\; b_\theta(x) \;+\; L_\theta(x)y\, ,
\end{equation}
where $b_\theta(x) \in \mathbb{R}^2$ is the predicted mean and
$L_\theta(x) \in \mathbb{R}^{2 \times 2}$ is a lower-triangular matrix with strictly positive diagonal entries, so that $\Sigma_\theta(x) = L_\theta(x)\,L_\theta(x)^\top$ is a valid covariance matrix. $b_\theta(x)$ and $L_\theta(x)$ are parameterized by neural networks, and trained against the analytical Gaussian parameters observed for each input $x$ (under an infinite-sample assumption). We generate data so that the input region $x_1 \approx 0$ has high variation in the mean but nearly constant covariance, with the opposite pattern for the region $x_1 \approx 1$ (Figure~\ref{fig:gaussian2d}A-B). While the mean-based NTK does not improve over random selection, the tNTK achieves a 1.2x lower Wasserstein loss than random by the end of the campaign (Figure~\ref{fig:gaussian2d}C). As in the one-dimensional example of Section~\ref{sec:histogram_1d}, NTK improves over random in mean loss but not in covariance loss (Figure~\ref{fig:gaussian2d}D-E). We further show that this result also holds for the affine transport of $d$-dimensional Gaussian distributions, where the tNTK outperforms NTK up to $d=50$ (Appendix~\ref{app:gaussian_nd}). Model and experimental details appear in Appendix~\ref{app:affine-kernel}.

\subsection{Conditional quantile prediction}
\label{sec:exp_quantile}
We next consider the case where $\text{dim}(\mathcal{Y}) = 1$ with $T_\theta(x)$ monotone in $y$. As shown in Appendix~\ref{app:proof_sensitivity_bound}, this yields the quantile representation $T_\theta(x) = F^{-1}_\theta(x;\cdot) \circ F_\nu$, where $F^{-1}_\theta(x;\cdot)$ is the quantile function of $\mu_\theta(x)$. In this setting, the bound from Theorem~\ref{thm:sensitivitybound} is tight at the leading order, and the tNTK admits the following closed-form expression:
\begin{equation}
    k_\theta^{\text{1D}}(x, x') = \int_0^1 \nabla_\theta F^{-1}_\theta(x;\tau)^\top \nabla_\theta F^{-1}_\theta(x';\tau)\, d\tau.
\end{equation}
We discretize $\tau$ on a uniform grid and parameterize conditional quantiles $F_\theta^{-1}(x;\tau)$ with non-crossing quantile networks~\citep{Moon2021-mu}. We evaluate the performance of our method on one simulated and two real datasets. Simulated data are based on three-component Gaussian mixture models whose parameters are functions of the input $x$. Real data are based on yearly distributions of surface temperature and dew point~\citep{noaa_gsod_1999}, predicted from the location of the measuring station. 

In this experiment, we include as baseline the NTK from multiple networks that were fitted on summary statistics of the data, such as the mean, the mean and the variance (mean-var), as well as all quartiles (quartile) of the distribution. We also report results for Bayesian Active Learning By Distribution Disagreement (BALSA\textsuperscript{EMD})~\citep{Werner2025-qo}. 

The tNTK consistently outperforms all baselines across differing numbers of initial conditions ($B_0$) and the number of conditions selected in each iteration ($B$) (Table \ref{tab:quantile_result} and Figure~\ref{fig:supp_quantile_w2}). 
Model and dataset details appear in Appendix~\ref{app:exp_quantile}.

\begin{table}[t]
\centering
\small
\caption{Change in Wasserstein test risk from iteration 0 (lower is better). For each seed, we average over $T=5$ iterations and $K=5$ folds; we then report mean $\pm$ standard error across $N=10$ seeds, in units of $10^{-3}$. Column headers indicate $B_0 + B$, the initial training set size and the number of conditions added per AL round. Best mean per column shown in bold.}
\label{tab:quantile_result} \vspace{0.2cm}
\begin{tabular}{llccc}
\toprule
Dataset & Strategy & 5 + 1 & 5 + 5 & 10 + 1 \\
\midrule
\multirow{6}{*}{Gaussian mixture} & Random & $-296.12 \pm 16.29$ & $-609.61 \pm 17.10$ & $-121.02 \pm 10.93$ \\
 & NTK & $-416.43 \pm 16.15$ & $-685.12 \pm 17.58$ & $-168.28 \pm 11.23$ \\
 & NTK (mean-var) & $-367.99 \pm 17.59$ & $-686.29 \pm 17.39$ & $-162.85 \pm 11.04$ \\
 & NTK (quartile) & $-388.93 \pm 16.43$ & $-681.29 \pm 17.62$ & $-176.86 \pm 10.72$ \\
 & BALSA-EMD & $-332.16 \pm 16.06$ & $-552.52 \pm 17.07$ & $-133.59 \pm 12.12$ \\
 & tNTK & $\mathbf{-429.52 \pm 16.39}$ & $\mathbf{-695.82 \pm 17.89}$ & $\mathbf{-205.01 \pm 12.38}$ \\
\midrule
\multirow{6}{*}{Temperature} & Random & $-89.04 \pm 12.88$ & $-219.96 \pm 14.43$ & $-41.02 \pm 7.44$ \\
 & NTK & $-185.55 \pm 14.66$ & $-255.65 \pm 15.29$ & $-84.57 \pm 9.70$ \\
 & NTK (mean-var) & $-186.51 \pm 15.31$ & $-258.48 \pm 15.55$ & $-86.88 \pm 9.69$ \\
 & NTK (quartile) & $-182.63 \pm 14.97$ & $-255.01 \pm 15.86$ & $-88.36 \pm 9.66$ \\
 & BALSA-EMD & $-155.37 \pm 13.64$ & $-199.14 \pm 15.87$ & $-71.14 \pm 9.57$ \\
 & tNTK & $\mathbf{-189.94 \pm 15.22}$ & $\mathbf{-267.93 \pm 15.36}$ & $\mathbf{-89.84 \pm 10.01}$ \\
\midrule
\multirow{6}{*}{Dew Point} & Random & $-79.90 \pm 14.55$ & $-280.77 \pm 21.75$ & $-76.55 \pm 10.15$ \\
 & NTK & $-178.44 \pm 18.98$ & $-364.19 \pm 21.53$ & $-158.94 \pm 10.97$ \\
 & NTK (mean-var) & $-172.47 \pm 17.97$ & $-372.49 \pm 22.06$ & $-159.24 \pm 10.32$ \\
 & NTK (quartile) & $-173.21 \pm 18.59$ & $-375.47 \pm 22.57$ & $-158.70 \pm 11.14$ \\
 & BALSA-EMD & $-79.51 \pm 14.23$ & $-178.00 \pm 17.91$ & $-124.13 \pm 9.39$ \\
 & tNTK & $\mathbf{-221.18 \pm 17.13}$ & $\mathbf{-442.25 \pm 22.34}$ & $\mathbf{-212.74 \pm 11.73}$ \\
\bottomrule
\end{tabular}
\end{table}

\subsection{Conditional Flow Matching}
\label{sec:exp_cflow}
In conditional flow matching, the transport map is the time-1 flow $T_\theta(x)(y) = \phi^{1, x}(y)$ obtained by integrating the ODE from Equation~\ref{eq:cfm_ode}. We approximate the total derivative by the integration of $D_\theta v_\theta^{t, x}$, and compute the tNTK as
\begin{equation}
k^{\text{Flow}}_\theta(x, x') = \int \mathrm{Tr}\!\left[\left(\int_0^1 D_\theta v_\theta^{t, x}(\phi^{t, x}(y))\, dt\right)^{\!\top} \left(\int_0^1 D_\theta v_\theta^{t, x'}(\phi^{t, x'}(y))\, dt\right)\right] d\nu(y).
\label{eq:flow_kernel}
\end{equation}
We estimate the integral over $\nu$ in \eqref{eq:flow_kernel} with Monte Carlo sampling, and compute both the inner time integration and the flow ODE for $\phi^{t,x}(y)$ with Tsitouras' 5/4 adaptive Runge--Kutta method~\citep{tsitouras2011runge}. For this experiment, we use a Gaussian conditional probability path with mean $\mu_t = t\, y_1$ and standard deviation $\sigma_t = 1 - t$, where $y_1 \sim \mu(x)$ is a target sample and the path interpolates between $\mathcal{N}(0, I)$ at $t = 0$ and a Dirac at $y_1$ at $t = 1$, following \citet{Lipman2022-mt}.

We evaluate our kernel on two simulated and one real-world dataset. The simulated systems generate particle distributions in $\mathbb{R}^2$: radial flows with attractors at varying radii and steepness, and Van der Pol oscillators~\citep{vanderpol1926relaxation} with varying parameters and initial particle distributions; we use 100 conditions for each. For the real-world data, we use the CLINC150 intent classification dataset~\citep{larson2019,clinc150_570}: each conditioning input is the fine-tuned MINILM~\citep{10.5555/3495724.3496209} text embedding of an intent class label, and the target distribution is the empirical distribution of sentences in that class, embedded into an 8-dimensional latent space using PCA on BART~\citep{Lewis2019-kc} sentence embeddings. 

tNTK outperforms NTK across 8 out of 9 (dataset, budget) configurations (Table~\ref{tab:cflow_result} and Figure~\ref{fig:supp_cflow_sinkhorn}). In the remaining configuration (CLINC 150; $B_0=5$; $B=5$), NTK does not significantly outperform tNTK ($p>0.5$, paired t-test). This result demonstrates that the kernel remains useful even when its computation requires numerical integration of the flow ODE. Full experimental details appear in Appendix~\ref{app:exp_cflow}.

\begin{table}[t]
\centering
\small
\caption{Change in Sinkhorn divergence from iteration 0 (lower is better). For each seed, we average over $T=3$ iterations and $K=5$ folds; we then report mean $\pm$ standard error across $N=10$ seeds, in units of $10^{-3}$. Column headers indicate $B_0 + B$, the initial training set size and the number of conditions added per AL round. Best mean per column shown in bold.}
\label{tab:cflow_result} \vspace{0.2cm}
\begin{tabular}{llccc}
\toprule
Dataset & Strategy & 5 + 1 & 5 + 5 & 10 + 5 \\
\midrule
\multirow{3}{*}{Radial flow} & Random & $-17.70 \pm 3.97$ & $-40.24 \pm 8.42$ & $-24.60 \pm 4.16$ \\
 & NTK & $-26.71 \pm 6.35$ & $-43.07 \pm 7.61$ & $-24.24 \pm 4.86$ \\
 & tNTK & $\mathbf{-35.50 \pm 5.36}$ & $\mathbf{-53.55 \pm 6.83}$ & $\mathbf{-38.39 \pm 5.03}$ \\
\midrule
\multirow{3}{*}{Van der Pol oscillator} & Random & $-121.87 \pm 31.66$ & $-336.33 \pm 34.17$ & $-171.39 \pm 28.31$ \\
 & NTK & $-196.36 \pm 32.59$ & $-408.15 \pm 37.03$ & $-220.98 \pm 33.08$ \\
 & tNTK & $\mathbf{-200.73 \pm 37.15}$ & $\mathbf{-441.51 \pm 24.25}$ & $\mathbf{-259.98 \pm 37.37}$ \\
\midrule
\multirow{3}{*}{CLINC 150} & Random & $-350.07 \pm 91.54$ & $-883.23 \pm 155.74$ & $-436.98 \pm 133.64$ \\
 & NTK & $-306.01 \pm 111.84$ & $\mathbf{-884.02 \pm 162.84}$ & $-480.66 \pm 96.75$ \\
 & tNTK & $\mathbf{-672.85 \pm 107.66}$ & $-852.31 \pm 133.11$ & $\mathbf{-580.94 \pm 93.76}$ \\
\bottomrule
\end{tabular}
\end{table}

\begin{wrapfigure}{r}{0.35\linewidth}
    \centering
    \vspace{-0.55in}
    \includegraphics[width=\linewidth]{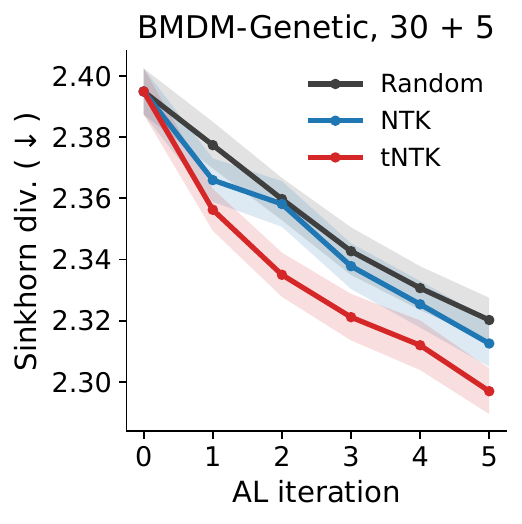}
    \caption{Mean Sinkhorn divergence per active learning iteration across 5 folds and 10 random seeds. Shade shows standard error.}
    \label{fig:cmonge_example_panel}
    \vspace{-0.5in}
\end{wrapfigure}
\subsection{Conditional Monge Gap}
\label{sec:exp_cmonge}

We use the conditional Monge gap~\citep{Driessen2025-vw,Uscidda2023-as} to predict single-cell responses in two perturbation datasets, where mapping unperturbed cell states to perturbed cell states has often been modeled using optimal transport~\citep{Schiebinger2019-ij,Bunne2023-jf}. The K562-Chemical task uses the SciPlex3 dataset~\citep{Srivatsan2020-vn}: $x$ is the mean transcriptomic response in MCF7 cells (adenocarcinoma cell line) under a small-molecule drug treatment, and the target is the single-cell transcriptomic response in K562 cells (leukemia cell line) under the same treatment, embedded into a 10-dimensional latent space. The BMDM-Genetic task uses a genetic screen of bone marrow-derived macrophages: $x$ is the imaging outcome from a matched optical pooled screen, and the target is the transcriptomic response in a 10-dimensional latent space. Data processing details appear in Appendix~\ref{app:exp_cmonge}. 

tNTK outperforms NTK and random baselines in 9 out of 12 (dataset, budget) configurations (Table~\ref{tab:cmonge_result} and Figure~\ref{fig:supp_cmonge_sinkhorn}). In the remaining cases, NTK did not outperform significantly the tNTK ($p > 0.1$, paired t-test).

\begin{table}[t]
\centering
\small
\caption{Change in Sinkhorn divergence from iteration 0 (lower is better). For each seed, we average over $T=5$ iterations and $K=5$ folds; we then report mean $\pm$ standard error across $N=10$ seeds, in units of $10^{-3}$. Column headers indicate $B_0 + B$, the initial training set size and the number of conditions added per AL round. Best mean per column shown in bold.}
\label{tab:cmonge_result} \vspace{0.2cm}
\begin{tabular}{llccc}
\toprule
Dataset & Strategy & \multicolumn{3}{c}{$B_0 + B$} \\
\midrule
\multirow{8}{*}{K562-Chemical} & & 10 + 1 & 10 + 5 & 20 + 5 \\
\cmidrule(lr){3-5}
 & Random & $\mathbf{-9.56 \pm 0.69}$ & $-29.06 \pm 1.09$ & $-15.27 \pm 0.69$ \\
 & NTK & $-9.33 \pm 0.75$ & $\mathbf{-30.44 \pm 1.45}$ & $-20.04 \pm 0.80$ \\
 & tNTK & $-8.65 \pm 1.16$ & $-30.26 \pm 1.48$ & $\mathbf{-21.00 \pm 1.30}$ \\
\cmidrule(lr){2-5}
 & & 20 + 10 & 30 + 5 & 30 + 10 \\
\cmidrule(lr){3-5}
 & Random & $-24.14 \pm 1.29$ & $-10.70 \pm 1.19$ & $-17.19 \pm 1.34$ \\
 & NTK & $-30.86 \pm 0.84$ & $-17.57 \pm 1.06$ & $-25.53 \pm 0.97$ \\
 & tNTK & $\mathbf{-31.63 \pm 1.22}$ & $\mathbf{-19.30 \pm 1.18}$ & $\mathbf{-26.13 \pm 1.13}$ \\
\midrule
\multirow{8}{*}{BMDM-Genetic} & & 10 + 1 & 10 + 5 & 20 + 5 \\
\cmidrule(lr){3-5}
 & Random & $-23.67 \pm 5.56$ & $-98.81 \pm 6.38$ & $-67.70 \pm 5.29$ \\
 & NTK & $\mathbf{-38.16 \pm 4.98}$ & $-127.30 \pm 6.24$ & $-92.80 \pm 5.00$ \\
 & tNTK & $-32.01 \pm 4.33$ & $\mathbf{-130.46 \pm 7.60}$ & $\mathbf{-102.63 \pm 5.20}$ \\
\cmidrule(lr){2-5}
 & & 20 + 10 & 30 + 5 & 30 + 10 \\
\cmidrule(lr){3-5}
 & Random & $-111.13 \pm 3.87$ & $-48.73 \pm 2.41$ & $-79.70 \pm 3.97$ \\
 & NTK & $-130.32 \pm 5.70$ & $-54.90 \pm 4.17$ & $-90.86 \pm 4.54$ \\
 & tNTK & $\mathbf{-144.90 \pm 3.50}$ & $\mathbf{-70.59 \pm 4.84}$ & $\mathbf{-106.25 \pm 4.91}$ \\
\bottomrule
\end{tabular}
\end{table}

\section{Discussion}
We introduced an active learning framework for conditional distributions based on the \textit{transport neural tangent kernel}. The tNTK captures the sensitivity of a conditional generative model's predicted distribution to perturbations of its parameters, and generalizes the NTK to distributional outputs. We derived the tNTK as an upper bound on the posterior Fréchet variance (Theorem~\ref{thm:sensitivitybound}) and provided a geometric interpretation on the Wasserstein manifold (Appendix~\ref{app:geometry}). Across four conditional generative model classes and on simulated and real datasets, total-posterior-variance-minimizing selection with the tNTK as GP covariance function outperforms the same strategy applied to the NTK of the conditional mean, as well as random sampling.

Sequential greedy total posterior variance minimization, while a principled choice, may be outperformed by selection methods explicitly designed for batch diversity~\citep{Holzmuller2022-do}, especially in large-batch regimes. However, the tNTK can be paired with any kernel-based selection method, so combining it with existing or novel selection strategies offers a natural extension of our framework to large-batch settings. 

The framework extends naturally to other model classes, including diffusion models: in their probability-flow ODE formulation~\citep{Song2021-ff}, the model is deterministic and the same Jacobian-integration argument as in CFM applies. Conditional normalizing flows~\citep{Winkler2019-yr} provide direct access to the inverse map and pushforward density, and score-based kernels with theoretical guarantees in the regression setting~\citep{Chaudhuri2015-xv,ash2020deep,Holzmuller2022-do} could be extended to the distributional case as an alternative to the tNTK.

In dimensions greater than one, neither the Fréchet variance bound (Theorem~\ref{thm:sensitivitybound}) nor the metric derivative bound (Lemma~\ref{lem:bound_gap}) is generally tight. While we have characterized the gap and tightness conditions for the metric derivative, our experiments evaluate output dimensions only up to $d=50$. How the tNTK behaves at substantially higher dimensions remains an open question.

\begin{ack}
We thank Jiequn Han for early discussions about distribution-aware active learning. This work was
supported through the NYU IT High Performance Computing (HPC) resources, services, and staff
expertise. Jayoung Ryu was partially supported by a Betty Moore Postdoctoral Fellowship, a Google TPU Research \& Education Award. Romain Lopez and Jayoung Ryu were partially supported by a Schmidt Sciences AI2050 Fellowship Award. 
Kyunghyun Cho was supported by the Institute of Information \& Communications Technology Planning \& Evaluation (IITP) with a grant funded by the Ministry of Science and ICT (MSIT) of the Republic of Korea in connection with the Global AI Frontier Lab International Collaborative Research Agreement. (No. RS-2024-00469482 \& RS-2024-00509279) Kyunghyun Cho was partly supported in part by the NYUAD Center for Interdisciplinary Data Science \& AI (CIDSAI), funded by Tamkeen under the NYUAD Research Institute Award CG016.

Disclosures: Romain Lopez has been an employee of Genentech since August 31st, 2021.


\end{ack}

\bibliographystyle{unsrtnat}
{\bibliography{biblio}}

\newpage 

\appendix

\appendix

\renewcommand{\thefigure}{S\arabic{figure}}
\renewcommand{\thetable}{S\arabic{table}}
\setcounter{figure}{0}
\setcounter{table}{0}

\subsection*{Code Availability Statement}
The code to reproduce the figures and tables, as well as to run the model can be found at
\url{https://github.com/biologicalML/transport_ntk}.

\section*{Supplementary Materials}

This appendix is organized as follows. 
\begin{itemize}
    \item Appendix~\ref{app:proofs_tntk}: Theoretical derivations for the motivation for distribution-aware active learning and the properties of the tNTK.
        \begin{itemize}
            \item Appendix~\ref{app:proof_risk_decomposition}: Proof of Proposition~\ref{prop:risk_decomposition} on risk decomposition.
            \item Appendix~\ref{app:proof_risk_1d}: Proof of Lemma~\ref{lem:quantile_variance} on mean-variance decomposition of risk in one dimension.
            \item Appendix~\ref{app:proof_sensitivity_bound}: Proof of Theorem~\ref{thm:sensitivitybound} on the upper bound of the Fréchet variance.
            \item Appendix~\ref{app:kernel_psd}: Positive semi-definiteness of the tNTK.
            \item Appendix~\ref{app:last_layer_kernel}: Simplified kernel computation for models with a linear output layer, using sensitivity to last-layer parameters.
            \item Appendix~\ref{app:proof_kernel_decomposition}: Proof of Lemma~\ref{lem:kernel_decomposition} on kernel decomposition for affine transport.
        \end{itemize}
    
    \item Appendix~\ref{app:geometry}: Geometric interpretation of the tNTK as an upper bound on the metric derivative on the 2-Wasserstein space, and relationship to metric derivative on the Knothe–Rosenblatt distance.
        \begin{itemize}
            \item Appendix~\ref{app:metric_background}: Background on curves and metric derivatives on Wasserstein space.
            \item Appendix~\ref{app:metric_bound}: The norm induced by $G_\theta(x)$ upper-bounds the metric derivative of a curve induced by parameter perturbation.
            \item Appendix~\ref{app:gap_analysis}: Analysis of the gap between the bound and the metric derivative, and conditions under which it is tight.
        \end{itemize}
    \item Appendix~\ref{app:KR}: Exact metric tensor under the Knothe--Rosenblatt distance: the tNTK recovers the squared metric derivative for lower-triangular affine maps.
    \item Appendix~\ref{app:exp}: Experimental details.
        \begin{itemize}
            \item Appendix~\ref{app:condhistdetails}: Conditional histogram example from Section~\ref{sec:histogram_1d}.
            \item Appendix~\ref{app:ntk_from_pushforward}: NTK baseline calculation.
            \item Appendix~\ref{app:exp_2d_gaussian}: Affine transport of 2D Gaussians (Section~\ref{sec:result_affine_2d_gaussian}).
            \item Appendix~\ref{app:exp_quantile}: Conditional quantile prediction (Section~\ref{sec:exp_quantile}).
            \item Appendix~\ref{app:exp_cflow}: Conditional flow matching (Section~\ref{sec:exp_cflow}).
            \item Appendix~\ref{app:exp_cmonge}: Conditional Monge gap (Section~\ref{sec:exp_cmonge}).
        \end{itemize}
    \item{Appendix~\ref{app:additional_results}: Additional results.}
        \begin{itemize}
            \item Appendix~\ref{app:numerical_gap_analysis}: Empirical calculation of gap of the upper bound in Theorem~\ref{thm:sensitivitybound}.
            \item Appendix~\ref{app:gaussian_nd}: Active learning results on the higher-dimensional affine Gaussian transport.
            \item Appendix~\ref{app:tntk_computation_results}: Results on tNTK computation.
            \item Appendix~\ref{app:additional_baselines}: Additional baselines for the experiments presented in the main text.
        \end{itemize}
    \item{Appendix~\ref{app:supplementary_figs}: Supplementary Figures.}
\end{itemize}

\clearpage
\section{Theory of the Transport Neural Tangent Kernel}
\label{app:proofs_tntk}
\subsection{Proof of Proposition~\ref{prop:risk_decomposition}}
\label{app:proof_risk_decomposition}
\riskdecomposition*
\begin{proof}
We drop the dependence on $x$ for readability. For any vector $v \in \mathcal{Y}$, we denote by $\tau_v : \mathcal{Y} \to \mathcal{Y}$ the translation operator $\tau_v(y) = y - v$. For any measure $\rho \in \mathcal{P}_2(\mathcal{Y})$ with mean $m_\rho$, the centered measure is then $\rho^c = (\tau_{m_\rho})\# \rho$. We extend this notation to product spaces by writing $\tau_{v, v'}(y, y') = (y - v, y' - v')$ for the translation acting on couplings.

Let $\pi \in \Pi(\mu, \mu_\theta)$ be an optimal coupling for $W_2^2(\mu, \mu_\theta)$, and let $\pi^c \in \Pi(\mu^c, \mu^c_\theta)$ be an optimal coupling for $W_2^2(\mu^c, \mu^c_\theta)$. We will use the following algebraic identity twice:
\begin{equation}
    \| (y - m) - (y' - m_\theta)\|^2 = \|y - y' \|^2 + \|m_\theta - m\|^2 + 2(m_\theta - m)^\top(y - y'). \label{eq:identity}
\end{equation}

\paragraph{Step 1: $(\tau_{m, m_\theta})_\# \pi$ is admissible for $\mu^c, \mu^c_\theta$.} Define $\hat{\pi} = (\tau_{m, m_\theta})_\# \pi$. Since $\pi$ has marginals $\mu$ and $\mu_\theta$, the pushforward $\hat{\pi}$ has marginals $(\tau_m)\# \mu = \mu^c$ and $(\tau_{m_\theta})_\# \mu_\theta = \mu^c_\theta$, so $\hat{\pi} \in \Pi(\mu^c, \mu^c_\theta)$.

Applying~\eqref{eq:identity} with $u = y - m$ and $u' = y' - m_\theta$, and integrating against $\pi$, using $\mathbb{E}_\pi[y] = m$ and $\mathbb{E}_\pi[y'] = m_\theta$:
\begin{align}
    \iint \|u - u'\|^2 \, d\hat{\pi}(u, u') 
    &= \iint \|(y - m) - (y' - m_\theta)\|^2 \, d\pi(y, y') \\
    &= \underbrace{\iint \|y - y'\|^2 \, d\pi}_{= \, W_2^2(\mu, \mu_\theta)} + \|m_\theta - m\|^2 + 2(m_\theta - m)^\top \underbrace{\mathbb{E}_\pi[y - y']}_{= \, m - m_\theta} \\
    &= W_2^2(\mu, \mu_\theta) - \|m_\theta - m\|^2.
\end{align}
Since $\hat{\pi}$ is admissible for $\mu^c, \mu^c_\theta$ but not necessarily optimal,
\begin{equation}
    W_2^2(\mu^c, \mu^c_\theta) \leq W_2^2(\mu, \mu_\theta) - \|m_\theta - m\|^2. \label{eq:ineq1}
\end{equation}

\paragraph{Step 2: $(\tau_{-m, -m_\theta})_\# \pi^c$ is admissible for $\mu, \mu_\theta$.} Symmetrically, define $\tilde{\pi} = (\tau_{-m, -m_\theta})\# \pi^c$. Since $\pi^c$ has marginals $\mu^c$ and $\mu^c_\theta$, the pushforward $\tilde{\pi}$ has marginals $\mu$ and $\mu_\theta$, so $\tilde{\pi} \in \Pi(\mu, \mu_\theta)$.

Applying~\eqref{eq:identity} with $y = u + m$ and $y' = u' + m_\theta$, and integrating against $\pi^c$, using $\mathbb{E}_{\pi^c}[u] = 0$ and $\mathbb{E}_{\pi^c}[u'] = 0$:
\begin{align}
    \iint \|y - y'\|^2 \, d\tilde{\pi}(y, y') 
    &= \iint \|(u + m) - (u' + m_\theta)\|^2 \, d\pi^c(u, u') \\
    &= \underbrace{\iint \|u - u'\|^2 \, d\pi^c}_{= \, W_2^2(\mu^c, \mu^c_\theta)} + \|m_\theta - m\|^2 + 2(m_\theta - m)^\top \underbrace{\mathbb{E}_{\pi^c}[u' - u]}_{= \, 0} \\
    &= W_2^2(\mu^c, \mu^c_\theta) + \|m_\theta - m\|^2.
\end{align}
Since $\tilde{\pi}$ is admissible for $\mu, \mu_\theta$ but not necessarily optimal,
\begin{equation}
    W_2^2(\mu, \mu_\theta) \leq W_2^2(\mu^c, \mu^c_\theta) + \|m_\theta - m\|^2. \label{eq:ineq2}
\end{equation}

\paragraph{Step 3: Conclusion.} Combining~\eqref{eq:ineq1} and~\eqref{eq:ineq2},
\begin{equation}
    W_2^2(\mu, \mu_\theta) = W_2^2(\mu^c, \mu^c_\theta) + \|m_\theta - m\|^2.
\end{equation}
Reinstating the dependence on $x$ and taking expectations over $\theta \mid \mathcal{D}$:
\begin{align}
    \mathcal{R}(x, \mathcal{D}) 
    &= \mathbb{E}_{\theta \mid \mathcal{D}}\!\left[W_2^2(\mu_\theta(x), \mu(x))\right] \\
    &= \mathbb{E}_{\theta \mid \mathcal{D}}\!\left[\|m_\theta(x) - m(x)\|^2\right] + \mathbb{E}_{\theta \mid \mathcal{D}}\!\left[W_2^2(\mu^c_\theta(x), \mu^c(x))\right] \\
    &= \mathcal{R}_\text{mean}(x, \mathcal{D}) + \mathcal{R}_\text{shape}(x, \mathcal{D}),
\end{align}
which proves the proposition.
\end{proof}

\subsection{Proof of Lemma \ref{lem:quantile_variance}}
\label{app:proof_risk_1d}
\quantilevariance*
\begin{proof}
We drop the dependence on $x$ for readability. Throughout, all variances, covariances, and expectations are taken with respect to the posterior $\theta \mid \mathcal{D}$.

\paragraph{Step 1: Quantile representation of $W_2^2$.} For one-dimensional distributions, the squared 2-Wasserstein distance admits the closed-form quantile representation:
\begin{equation}
    W_2^2(\mu_\theta, \mu) = \int_0^1 \left( F_\theta^{-1}(\tau) - F^{-1}(\tau) \right)^2 d\tau, \label{eq:quantile_w2}
\end{equation}
where we remind the reader that $F^{-1}(\tau)$ (resp. $F_\theta^{-1}(\tau)$) corresponds to the $\tau$ quantile of the distribution $\mu$ (resp. $\mu_\theta$).

\paragraph{Step 2: Derivation of the mean risk.} Under the assumption $\mathbb{E}[m_\theta] = m$, the squared error reduces to the posterior variance:
\begin{equation}
    \mathcal{R}_\text{mean}(x, \mathcal{D}) = \mathbb{E}\!\left[(m_\theta - m)^2\right] = \operatorname{Var}[m_\theta]. \label{eq:mean_risk_1d}
\end{equation}

\paragraph{Step 3: Derivation of the shape risk.} The centered measures $\mu^c_\theta$ and $\mu^c$ have quantile functions $g_\theta(\tau) = F_\theta^{-1}(\tau) - m_\theta$ and $g(\tau) = F^{-1}(\tau) - m$, respectively. Applying~\eqref{eq:quantile_w2} to the centered measures:
\begin{equation}
    W_2^2(\mu^c_\theta, \mu^c) = \int_0^1 (g_\theta(\tau) - g(\tau))^2 \, d\tau. \label{eq:1d_shape_loss}
\end{equation}
Taking expectations and exchanging the integral and expectation by Fubini's theorem:
\begin{equation}
    \mathcal{R}_\text{shape}(x, \mathcal{D}) = \int_0^1 \mathbb{E}\!\left[(g_\theta(\tau) - g(\tau))^2\right] d\tau.
\end{equation}

Because we have that $\mathbb{E}[m_\theta] = m$ and that for all $\tau \in [0, 1]$, $\mathbb{E}[F_\theta^{-1}(\tau)] = F^{-1}(\tau)$, it implies that $\mathbb{E}[g_\theta(\tau)] = g(\tau)$ by   linearity of the expectation. Therefore, the squared error reduces to the posterior variance:
\begin{equation}
    \mathbb{E}\!\left[(g_\theta(\tau) - g(\tau))^2\right] = \operatorname{Var}[g_\theta(\tau)],
\end{equation}
and consequently
\begin{equation}
    \mathcal{R}_\text{shape}(x, \mathcal{D}) = \int_0^1 \operatorname{Var}[g_\theta(\tau)]\, d\tau. \label{eq:shape_risk_1d}
\end{equation}

Combining~\eqref{eq:mean_risk_1d} and~\eqref{eq:shape_risk_1d} proves the lemma.
\end{proof}

\subsection{Proof of Theorem \ref{thm:sensitivitybound}}
\label{app:proof_sensitivity_bound}
\sensitivitybound*
\begin{proof} 
The proof has three parts: (i) we reduce $V_F(x)$ to an integrated $L^2(\nu)$ distance between transport maps, (ii) we expand the integrand in $\Sigma$ and bound each term and collect the bounds, (iii) we analyze tightness.

\subsubsection*{Part 1: Reduction to an $L^2(\nu)$ functional}

\paragraph{Reducing $V_F$ to a single-draw expectation.}
By definition of the infimum, taking $\rho = \mu_{\hat\theta}(x)$ in \eqref{eq:frechet_variance} yields
\begin{equation}
    V_F(x) \leq \mathbb{E}_{\theta \mid \mathcal{D}}\!\left[W_2^2(\mu_\theta(x), \mu_{\hat\theta}(x))\right]. \label{eq:VF_barycenter_bound}
\end{equation}
The remainder of the proof bounds the right-hand side.

\paragraph{Synchronized coupling.}
We momentarily drop the dependence on $x$ and derive an upper bound on $W_2^2(\mu_\theta, \mu_{\hat\theta})$ for fixed $\theta$. Let $\Phi_{\theta, \hat\theta}: \mathcal{Y} \to \mathcal{Y} \times \mathcal{Y}$ denote the map $y \mapsto (T_\theta(y), T_{\hat\theta}(y))$, and introduce the synchronized coupling $\pi = (\Phi_{\theta, \hat\theta})_\# \nu$. The measure $\pi$ is a coupling of $\mu_\theta$ and $\mu_{\hat\theta}$: for any test function $\varphi$, the first marginal satisfies
\begin{align}
    \int \varphi(u)\, d\pi_1(u) = \int \varphi(T_\theta(y))\, d\nu(y) = \int \varphi(u)\, d\mu_\theta(u),
\end{align}
where we applied the pushforward identity twice, using $\mu_\theta = T_\theta\# \nu$. The same argument yields $\pi_2 = \mu_{\hat\theta}$, so $\pi \in \Pi(\mu_\theta, \mu_{\hat\theta})$. Applying the pushforward identity to the cost function $(u, u') \mapsto \|u - u'\|^2$ gives the transport cost of $\pi$:
\begin{align}
    \int \|u - u'\|^2\, d\pi(u, u') = \int \|T_\theta(y) - T_{\hat\theta}(y)\|^2\, d\nu(y).
\end{align}
Because $W_2^2$ is the infimum of the transport cost over all admissible couplings,
\begin{equation}
    W_2^2(\mu_\theta, \mu_{\hat\theta}) \leq \int \|T_\theta(y) - T_{\hat\theta}(y)\|^2\, d\nu(y). \label{eq:synchronized_bound}
\end{equation}
Reinstating the dependence on $x$ and combining with \eqref{eq:VF_barycenter_bound},
\begin{equation}
    V_F(x) \leq \mathbb{E}_{\theta \mid \mathcal{D}}\!\left[\int \|T_\theta(x)(y) - T_{\hat\theta}(x)(y)\|^2\, d\nu(y)\right]. \label{eq:V_upper}
\end{equation}

\paragraph{Taylor expansion around $\hat\theta$.}
We continue to drop the dependence on $x$ and write $J(y) := D_\theta T_{\hat\theta}(y)$ for the Jacobian at the posterior mean. By the smoothness assumption, for $\nu$-a.e.\ $y$ and all $\theta \in \mathbb{R}^p$, a second-order Taylor expansion gives
\begin{equation}
    T_\theta(y) = T_{\hat\theta}(y) + J(y)(\theta - \hat\theta) + r(\theta, y),
\end{equation}
where the remainder satisfies
\begin{equation}
    \|r(\theta, y)\| \leq \tfrac{1}{2}\, \kappa(y)\, \|\theta - \hat\theta\|^2, \qquad \kappa(y) := \sup_{\theta} \|D_\theta^2 T_\theta(y)\|_{\mathrm{op}}, \label{eq:remainder_bound}
\end{equation}
and $\kappa \in L^2(\nu)$ by assumption. Squaring and expanding the inner product,
\begin{equation}
    \|T_\theta(y) - T_{\hat\theta}(y)\|^2 = \underbrace{\|J(y)(\theta - \hat\theta)\|^2}_{=: A(y, \theta)} + \underbrace{2\, \langle J(y)(\theta - \hat\theta),\, r(\theta, y)\rangle}_{=: B(y, \theta)} + \underbrace{\|r(\theta, y)\|^2}_{=: C(y, \theta)}. \label{eq:ABC}
\end{equation}

\subsubsection*{Part 2: Term-by-term bounds}

\paragraph{Leading term.}
Since $\theta \sim \mathcal{N}(\hat\theta, \Sigma)$, we have $\mathbb{E}[(\theta - \hat\theta)(\theta - \hat\theta)^\top] = \Sigma$. Using the cyclicity of the trace,
\begin{equation}
    \mathbb{E}[A(y, \theta)] = \mathbb{E}\!\left[(\theta - \hat\theta)^\top J(y)^\top J(y)\, (\theta - \hat\theta)\right] = \operatorname{Tr}\!\left(J(y)^\top J(y)\, \Sigma\right). \label{eq:A_pointwise}
\end{equation}
By linearity of the trace operator,
\begin{equation}
    \int \mathbb{E}[A(y, \theta)]\, d\nu(y) = \operatorname{Tr}\!\left(\left(\int J(y)^\top J(y)\, d\nu(y)\right)\Sigma\right) = \operatorname{Tr}\!\left(G_{\hat\theta}(x)\, \Sigma\right). \label{eq:A_final}
\end{equation}

\paragraph{Cross term.}
By Cauchy–Schwarz applied pointwise in $y$ and then in $L^2(\mathbb{P})$,
\begin{equation}
    \mathbb{E}|B(y, \theta)| \leq 2\, \mathbb{E}\!\left[\|J(y)(\theta - \hat\theta)\|^2\right]^{1/2}\, \mathbb{E}\!\left[\|r(\theta, y)\|^2\right]^{1/2}. \label{eq:B_CS}
\end{equation}
The first factor equals $\sqrt{\operatorname{Tr}(J(y)^\top J(y)\, \Sigma)}$ by \eqref{eq:A_pointwise}. For the second factor, applying \eqref{eq:remainder_bound},
\begin{equation}
    \mathbb{E}\!\left[\|r(\theta, y)\|^2\right] \leq \tfrac{1}{4}\, \kappa(y)^2\, \mathbb{E}\!\left[\|\theta - \hat\theta\|^4\right]. \label{eq:r_squared}
\end{equation}
For $\theta \sim \mathcal{N}(\hat\theta, \Sigma)$, the standard fourth-moment identity gives $\mathbb{E}\|\theta - \hat\theta\|^4 = (\operatorname{Tr}\Sigma)^2 + 2 \operatorname{Tr}(\Sigma^2) \leq 3(\operatorname{Tr}\Sigma)^2 \leq 3 p^2 \|\Sigma\|_{\mathrm{op}}^2$, so
\begin{equation}
    \mathbb{E}\!\left[\|r(\theta, y)\|^2\right] \leq \tfrac{3}{4}\, p^2\, \kappa(y)^2\, \|\Sigma\|_{\mathrm{op}}^2. \label{eq:r_squared_bound}
\end{equation}
Substituting into \eqref{eq:B_CS} and integrating over $\nu$ via Cauchy--Schwarz,
\begin{align}
    \int \mathbb{E}|B(y, \theta)|\, d\nu(y) &\leq \sqrt{3}\, p\, \|\Sigma\|_{\mathrm{op}} \int \kappa(y)\, \sqrt{\operatorname{Tr}(J(y)^\top J(y)\, \Sigma)}\, d\nu(y) \\
    &\leq \sqrt{3}\, p\, \|\Sigma\|_{\mathrm{op}}\, \|\kappa\|_{L^2(\nu)}\, \sqrt{\operatorname{Tr}(G_{\hat\theta}(x)\, \Sigma)} \\
    &\leq \sqrt{3}\, p\, \|\kappa\|_{L^2(\nu)}\, \sqrt{\operatorname{Tr}(G_{\hat\theta}(x))}\, \|\Sigma\|_{\mathrm{op}}^{3/2}, \label{eq:B_final}
\end{align}
where the last step uses $\operatorname{Tr}(G_{\hat\theta}(x)\, \Sigma) \leq \operatorname{Tr}(G_{\hat\theta}(x))\, \|\Sigma\|_{\mathrm{op}}$ (since $G_{\hat\theta}(x)$ is positive semi-definite) and $\operatorname{Tr}(G_{\hat\theta}(x)) < \infty$ by the integrability assumption.

\paragraph{Quadratic remainder.}
Using \eqref{eq:r_squared_bound} and integrating over $\nu$,
\begin{equation}
    \int \mathbb{E}[C(y, \theta)]\, d\nu(y) \leq \tfrac{3}{4}\, p^2\, \|\kappa\|_{L^2(\nu)}^2\, \|\Sigma\|_{\mathrm{op}}^2 \leq \tfrac{3}{4}\, p^2\, \|\kappa\|_{L^2(\nu)}^2\, \|\Sigma\|_{\mathrm{op}}^{3/2}, \label{eq:C_final}
\end{equation}
where the last step uses $\|\Sigma\|_{\mathrm{op}} < 1$.

\paragraph{Combining the bounds.}
Combining \eqref{eq:V_upper}, \eqref{eq:ABC}, \eqref{eq:A_final}, \eqref{eq:B_final}, and \eqref{eq:C_final},
\begin{align}
    V_F(x) &\leq \int \mathbb{E}\!\left[A(y, \theta) + B(y, \theta) + C(y, \theta)\right] d\nu(y) \\
    &\leq \operatorname{Tr}\left(G_{\hat\theta}(x)\, \Sigma\right) + \sqrt{3}\, p\, \|\kappa\|_{L^2(\nu)}\, \sqrt{\operatorname{Tr}(G_{\hat\theta}(x))}\, \|\Sigma\|_{\mathrm{op}}^{3/2} + \tfrac{3}{4}\, p^2\, \|\kappa\|_{L^2(\nu)}^2\, \|\Sigma\|_{\mathrm{op}}^{3/2} \\
    &= \operatorname{Tr}\left(G_{\hat\theta}(x)\, \Sigma\right) + C(x)\, \|\Sigma\|_{\mathrm{op}}^{3/2},
\end{align}
where
\begin{equation}
    C(x) = \sqrt{3}\, p\, \|\kappa\|_{L^2(\nu)}\, \sqrt{\operatorname{Tr}(G_{\hat\theta}(x))} + \tfrac{3}{4}\, p^2\, \|\kappa\|_{L^2(\nu)}^2,
\end{equation}
which is the claimed bound.

\subsubsection*{Part 3: The 1D monotone case}

We show that when $\dim(\mathcal{Y}) = 1$ and $T_\theta(x)$ is monotone in $y$, the inequality \eqref{eq:sensitivity_bound} becomes tight in the leading order. In this setting, $V_F(x)$ admits a closed-form expression that we compute directly.

We drop the dependence on $x$ throughout, and write $Q_\theta(\tau) := F^{-1}_{\mu_\theta}(\tau)$ for the quantile function of $\mu_\theta$. Monotonicity of $T_\theta$ gives the quantile representation $T_\theta = Q_\theta \circ F_\nu$, and the change of variables $\tau = F_\nu(y)$ yields the isometry
\begin{equation}
    W_2^2(\mu_\theta, \mu_{\theta'}) = \int_0^1 \big(Q_\theta(\tau) - Q_{\theta'}(\tau)\big)^2\, d\tau = \|Q_\theta - Q_{\theta'}\|_{L^2[0,1]}^2. \label{eq:1d_isometry}
\end{equation}
The Wasserstein-2 geometry on $\{\mu_\theta\}$ is therefore the Euclidean geometry of $L^2[0,1]$ on $\{Q_\theta\}$, and standard $L^2$ identities give a closed form for the Fréchet barycenter and variance. The barycenter quantile function is $\bar{Q}(\tau) := \mathbb{E}_{\theta \mid \mathcal{D}}[Q_\theta(\tau)]$, and the Fréchet variance is
\begin{equation}
    V_F(x) = \mathbb{E}_{\theta \mid \mathcal{D}}\!\left[\|Q_\theta - \bar{Q}\|_{L^2[0,1]}^2\right] = \int_0^1 \operatorname{Var}_{\theta \mid \mathcal{D}}[Q_\theta(\tau)]\, d\tau, \label{eq:1d_VF}
\end{equation}
by Fubini and the $L^2$ bias-variance decomposition.

The argument of Parts 1–2 now applies pointwise in $\tau$ to $Q_\theta(\tau)$ instead of to $T_\theta(y)$, with the synchronized-coupling step replaced by the isometry \eqref{eq:1d_isometry} (which is exact rather than an inequality). Because Equation~\ref{eq:A_final} is an equality, the upper bound from Theorem~\ref{thm:sensitivitybound} is tight at leading order. More precisely, when $\|\Sigma\|_{\mathrm{op}} \rightarrow 0$:
\begin{equation}
    V_F(x) = \operatorname{Tr}\left(G_{\hat\theta}(x)\Sigma\right) + O(\|\Sigma\|_{\mathrm{op}}^{3/2}).
\end{equation}
\end{proof}

\subsection{Positive semi-definiteness of the tNTK}
\label{app:kernel_psd}

\begin{prop}
The tNTK $k_\theta : \mathcal{X} \times \mathcal{X} \to \mathbb{R}$ defined in Equation~\ref{eq:kernel_def} is positive semi-definite.
\label{prop:tntk_psd}
\end{prop}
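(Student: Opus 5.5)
The plan is to exhibit the tNTK as an inner product of feature maps in a Hilbert space, since any kernel of the form $k(x,x') = \langle \Phi(x), \Phi(x')\rangle_{\mathcal{H}}$ is positive semi-definite. Take the feature map
\[
\Phi(x) := D_\theta T_\theta(x)(\cdot),
\]
which sends each input to a $d\times p$ matrix-valued function of $y$. This is an element of the Hilbert space $\mathcal{H} = L^2(\nu; \mathbb{R}^{d\times p})$, equipped with the inner product
\[
\langle A, B\rangle_{\mathcal{H}} = \int \operatorname{Tr}\left[A(y)^\top B(y)\right] d\nu(y).
\]
Membership in $\mathcal{H}$ uses the integrability assumption from Theorem~\ref{thm:sensitivitybound}, namely that the Jacobian is in $L^2(\nu)$. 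With this choice, Equation~\eqref{eq:kernel_def} reads exactly $k_\theta(x,x') = \langle \Phi(x), \Phi(x')\rangle_{\mathcal{H}}$.

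The direct verification runs as follows. Fix finitely many inputs $x_1,\dots,x_n$ and real coefficients $c_1,\dots,c_n$. By linearity of the trace and of the integral,
\[
\sum_{i,j} c_i c_j\, k_\theta(x_i,x_j) = \int \operatorname{Tr}\!\left[S(y)^\top S(y)\right] d\nu(y), \qquad S(y) = \sum_i c_i\, D_\theta T_\theta(x_i)(y).
\]
The integrand equals the squared Frobenius norm $\|S(y)\|_F^2 \ge 0$ pointwise, so the integral is nonnegative. This shows every Gram matrix is positive semi-definite. Symmetry, $k_\theta(x,x') = k_\theta(x',x)$, follows from $\operatorname{Tr}(A^\top B) = \operatorname{Tr}(B^\top A)$.

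The only point needing care is that the kernel is finite, so that exchanging the finite sum with the integral is legitimate. This follows from Cauchy--Schwarz in $\mathcal{H}$:
\[
|k_\theta(x,x')| \le \sqrt{\operatorname{Tr} G_\theta(x)\,\operatorname{Tr} G_\theta(x')} < \infty .
\]
Beyond this integrability check, no real obstacle is expected.
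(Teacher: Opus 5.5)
Your proof is correct and follows essentially the same route as the paper: expand $\sum_{i,j} c_i c_j\, k_\theta(x_i,x_j)$ by linearity into $\int \bigl\|\sum_i c_i\, D_\theta T_\theta(x_i)(y)\bigr\|_F^2\, d\nu(y) \geq 0$. Your feature-map framing in $L^2(\nu;\mathbb{R}^{d\times p})$ and the Cauchy--Schwarz finiteness check are harmless additions that the paper leaves implicit.
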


\begin{proof}
Let $n \in \mathbb{N}^*$, and $x_1, \dots, x_n \in \mathcal{X}$ denote $n$ input points. Let $c_1, \dots, c_n \in \mathbb{R}$ be weights. Writing $J_x(y) = D_\theta T_\theta(x)(y) \in \mathbb{R}^{d \times p}$ for the parameter Jacobian, we have that:
\begin{align}
    \sum_{i=1}^n \sum_{j=1}^n c_i c_j \, k_\theta(x_i, x_j) 
    &= \sum_{i,j} c_i c_j \int \operatorname{Tr}\!\left[J_{x_i}(y)^\top J_{x_j}(y)\right] d\nu(y) \\
    &= \int \sum_{i,j} c_i c_j \left\langle J_{x_i}(y),\, J_{x_j}(y) \right\rangle_F d\nu(y) \\
    &= \int \left\| \sum_{i=1}^n c_i \, J_{x_i}(y) \right\|_F^2 d\nu(y) \geq 0,
\end{align}
where $\langle \cdot, \cdot \rangle_F$ and $\| \cdot \|_F$ denote the Frobenius inner product and norm. The final inequality holds because the squared Frobenius norm is non-negative and $\nu$ is a positive measure.
\end{proof}

\subsection{Closed-form expression for the last-layer tNTK}
\label{app:last_layer_kernel}
\begin{prop}
\label{prop:last_layer_kernel}
Let the transport map have a linear last layer, $T_\theta(x)(y) = W^\top h_\theta(x, y)$, where $h_\theta(x, y) \in \mathbb{R}^m$ is the penultimate-layer output and $W \in \mathbb{R}^{m \times d}$ is the last-layer weight matrix. Then the tNTK~\eqref{eq:kernel_def} restricted to the last-layer parameters admits the closed form:
\begin{equation}
    k^{\mathrm{ll}}_\theta(x, x') = d \int h_\theta(x, y)^\top h_\theta(x', y) \, d\nu(y).
\end{equation}
\end{prop}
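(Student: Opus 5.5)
The plan is to compute the parameter Jacobian of $T_\theta(x)(y) = W^\top h_\theta(x,y)$ with respect to the last-layer parameters $W$ only, with $h_\theta$ held fixed. Then we substitute it into the definition~\eqref{eq:kernel_def}. We vectorize $W \in \mathbb{R}^{m\times d}$ as $\operatorname{vec}(W) \in \mathbb{R}^{md}$, stacking columns $w_1,\dots,w_d$. The $k$-th output coordinate is $T_\theta(x)(y)_k = w_k^\top h_\theta(x,y)$. It depends only on $w_k$, and linearly so. Hence $\partial T_k / \partial w_l = \delta_{kl}\, h_\theta(x,y)^\top$, and the Jacobian has the Kronecker form
\begin{equation*}
D_W T_\theta(x)(y) = I_d \otimes h_\theta(x,y)^\top \in \mathbb{R}^{d \times md}.
\end{equation*}

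Next we compute the integrand of the tNTK. The Frobenius inner product of the two Jacobians at $x$ and $x'$ is
\begin{equation*}
\operatorname{Tr}\!\left[(I_d \otimes h_\theta(x,y)^\top)^\top (I_d \otimes h_\theta(x',y)^\top)\right]
= \operatorname{Tr}\!\left[I_d \otimes h_\theta(x,y) h_\theta(x',y)^\top\right].
\end{equation*}
By the mixed-product rule and $\operatorname{Tr}(A\otimes B) = \operatorname{Tr}(A)\operatorname{Tr}(B)$, this equals $d\, h_\theta(x,y)^\top h_\theta(x',y)$. Equivalently, and more elementarily, the product is block diagonal with $d$ identical blocks $h_\theta(x,y) h_\theta(x',y)^\top$, each contributing trace $h_\theta(x,y)^\top h_\theta(x',y)$.

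Finally, we integrate against $\nu$ and pull out the constant $d$, which gives the claimed formula. The only point requiring care is conceptual rather than computational. The restriction to last-layer parameters means the Jacobian is taken with respect to $W$ alone, so the dependence of $h_\theta$ on the other parameters never enters. We should also note an integrability condition: we need $h_\theta(x,\cdot) \in L^2(\nu)$ so that the integral is finite, and this follows from the $L^2(\nu)$ assumption on the Jacobian used elsewhere in the paper. No step is a real obstacle. The main thing to get right is the bookkeeping of the vectorization, so that the factor $d$, arising from the $d$ independent output columns, is correctly accounted for.
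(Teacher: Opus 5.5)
Your proposal is correct and follows essentially the same route as the paper. Both compute the last-layer Jacobian, whose only nonzero entries are $\partial T^k_\theta(x)(y)/\partial W_{ik} = h^i_\theta(x,y)$, and obtain the factor $d$ by summing over the $d$ identical output columns; your form $I_d \otimes h_\theta(x,y)^\top$ is a compact restatement of the paper's Kronecker-delta index computation.
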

\begin{proof}
For $(i, j, k) \in [m] \times [d] \times [d]$, the last-layer Jacobian satisfies
\begin{equation}
    \frac{\partial T^k_\theta(x)(y)}{\partial W_{ij}} = \delta_{kj} \, h^i_\theta(x, y),
\end{equation}
where $\delta_{kj}$ is the Kronecker delta and superscripts denote scalar components: $T^k_\theta(x)(y)$ is the $k$-th component of $T_\theta(x)(y)$, and $h^i_\theta(x, y)$ is the $i$-th component of $h_\theta(x, y)$. Substituting into \eqref{eq:kernel_def}:
\begin{align}
    k_\theta^{\mathrm{ll}}(x, x') &= \int \sum_{i,j,k} \delta_{kj} h_\theta^i(x, y) \, \delta_{kj} h_\theta^i(x', y) \, d\nu(y) \\
    &= \int \sum_{i,j} h_\theta^i(x, y) \, h_\theta^i(x', y) \, d\nu(y) \\
    &= d \int h_\theta(x, y)^\top h_\theta(x', y) \, d\nu(y),
\end{align}
where the second equality uses $\sum_k \delta_{kj}^2 = 1$, and the third uses that the summand is independent of $j$, so the sum across $j$ contributes a factor of $d$.
\end{proof}

\subsection{Proof of Lemma \ref{lem:kernel_decomposition}}
\label{app:proof_kernel_decomposition}
\kerneldecomposition*
\begin{proof}
For an affine map $T_\theta(x)(y) = b_\theta(x) + M_\theta(x) y$, the parameter Jacobian is linear in $y$:
\begin{equation}
    D_\theta T_\theta(x)(y) = D_\theta b_\theta(x) + \sum_{j=1}^d y_j \, D_\theta M^j_\theta(x), \label{eq:linear_jacobian}
\end{equation}
where $M^j_\theta(x) \in \mathbb{R}^d$ is the $j$-th column of $M_\theta(x)$ and $y_j$ is the $j$-th component of $y$. Substituting \eqref{eq:linear_jacobian} into the kernel definition \eqref{eq:kernel_def} and expanding the inner product, we get four groups of terms:
\begin{equation}
    k_\theta(x, x') = \int \operatorname{Tr}\!\left[A(x, x') + B(x, x'; y) + B(x', x; y) + C(x, x'; y)\right] d\nu(y),
\end{equation}
where
\begin{align*}
    A(x, x') &= D_\theta b_\theta(x)^\top D_\theta b_\theta(x'), \\
    B(x, x'; y) &= D_\theta b_\theta(x)^\top \sum_{j=1}^d y_j \, D_\theta M^j_\theta(x'), \\
    C(x, x'; y) &= \left(\sum_{j=1}^d y_j \, D_\theta M^j_\theta(x)\right)^\top \left(\sum_{j'=1}^d y_{j'} \, D_\theta M^{j'}_\theta(x')\right).
\end{align*}
We compute each term in turn.

\paragraph{Constant term $A$.}
$A(x, x')$ is independent of $y$, so the integral leaves it unchanged:
\begin{equation}
    \int \operatorname{Tr}[A(x, x')] \, d\nu(y) = \operatorname{Tr}\!\left[D_\theta b_\theta(x)^\top D_\theta b_\theta(x')\right].
\end{equation}

\paragraph{Cross terms $B$.}
The cross terms are linear in $y$, so under the centering assumption $\mathbb{E}[y] = 0$,
\begin{equation}
    \int \operatorname{Tr}[B(x, x'; y)] \, d\nu(y) = \operatorname{Tr}\!\left[D_\theta b_\theta(x)^\top \sum_{j=1}^d \underbrace{\mathbb{E}[y_j]}_{=\, 0} D_\theta M^j_\theta(x')\right] = 0,
\end{equation}
and similarly $\int \operatorname{Tr}[B(x', x; y)] \, d\nu(y) = 0$.

\paragraph{Quadratic term $C$.}
Expanding the product and using the trace's linearity,
\begin{equation}
    \operatorname{Tr}[C(x, x'; y)] = \sum_{j, j'=1}^d y_j y_{j'} \operatorname{Tr}\!\left[D_\theta M^j_\theta(x)^\top D_\theta M^{j'}_\theta(x')\right].
\end{equation}
Integrating against $\nu$ and using the isotropy assumption $\mathbb{E}[y y^\top] = I$, i.e., $\mathbb{E}[y_j y_{j'}] = \delta_{j j'}$:
\begin{equation}
    \int \operatorname{Tr}[C(x, x'; y)] \, d\nu(y) = \sum_{j, j'=1}^d \delta_{jj'} \operatorname{Tr}\!\left[D_\theta M^j_\theta(x)^\top D_\theta M^{j'}_\theta(x')\right] = \sum_{j=1}^d \operatorname{Tr}\!\left[D_\theta M^j_\theta(x)^\top D_\theta M^j_\theta(x')\right].
\end{equation}

\paragraph{Combining the terms.}
Summing the contributions of $A$, $B$, $B^\top$,
\begin{equation}
    k_\theta(x, x') = \mathrm{Tr}[D_\theta b_\theta(x)^\top D_\theta b_\theta(x')] + \sum_{j=1}^d \mathrm{Tr}[D_\theta M^j_\theta(x)^\top D_\theta M^j_\theta(x')]. \qedhere
\end{equation}
\end{proof}

\clearpage

\section{Geometry of parameter perturbations on the Wasserstein manifold}
\label{app:geometry}

The parameter sensitivity matrix $G_{\hat\theta}(x)$ that appears in Theorem~\ref{thm:sensitivitybound} admits a geometric interpretation as an (approximate) metric tensor on the Wasserstein manifold. This appendix makes this interpretation precise. We view a parameter perturbation $h \in \mathbb{R}^p$ as inducing a curve $t \mapsto \mu_{\theta + th}(x)$ on the 2-Wasserstein space $W_2(\mathcal{Y})$ and study its metric derivative at $t=0$. Section~\ref{app:metric_background} reviews the necessary background on metric derivatives, absolutely continuous curves, and the tangent bundle of $W_2$. Section~\ref{app:metric_bound} bounds the squared metric derivative of this curve by $h^\top G_\theta(x) h$, providing an alternative derivation of $G_\theta(x)$ via a different proof technique than Theorem~\ref{thm:sensitivitybound}. Section~\ref{app:gap_analysis} characterizes when this bound is tight: in dimension one and for monotone maps it is always exact, and in higher dimensions the gap is governed by the divergence-free component of the velocity field. The latter result points to a concrete avenue for constructing a tighter proxy in future work. 

\subsection{Background on metric derivatives in Wasserstein space}
\label{app:metric_background}

We restate the key facts about absolutely continuous curves in Wasserstein space and their characterization via the continuity equation. We work throughout in $W_2(\mathcal{Y})$ with $\mathcal{Y} = \mathbb{R}^d$. We need three preliminary objects: absolutely continuous curves, the metric derivative, and the tangent space of $\mathcal{P}_2(\mathcal{Y})$ at a measure $\mu$.

\begin{definition}[Absolutely continuous curve]
\label{def:ac_curve}
A curve $(\mu_t)_{t \in (a, b)} : (a, b) \to \mathcal{P}_2(\mathcal{Y})$ is absolutely continuous of second order on the open interval $(a, b)$ if there exists $m \in L^2(a, b)$ such that
\begin{equation}
    W_2(\mu_s, \mu_t) \leq \int_s^t m(r)\, dr \qquad \forall\, a < s \leq t < b. \label{eq:w2_bound_curve}
\end{equation}
The set of such curves is denoted by $AC^2(a, b; \mathcal{P}_2(\mathcal{Y}))$.
\end{definition}

\begin{restatable}[Metric derivative]{definition}{metricderivative}
\label{def:metric_derivative}
Let $(\mu_t)_{t \in (a, b)} \in AC^2(a, b; \mathcal{P}_2(\mathcal{Y}))$. The \emph{metric derivative} at $t \in (a, b)$ is
\begin{equation}
    |\mu'|(t) := \lim_{s \to t} \frac{W_2(\mu_s, \mu_t)}{|s - t|},
\end{equation}
whenever this limit exists, and the \emph{upper metric derivative} is
\begin{equation}
    |\mu'|^+(t) := \limsup_{s \to t} \frac{W_2(\mu_s, \mu_t)}{|s - t|},
\end{equation}
which is always defined and coincides with $|\mu'|(t)$ whenever the latter exists.
\end{restatable}

\begin{restatable}[Tangent space]{definition}{tangent}
\label{def:tangent}
Let $\mu \in \mathcal{P}_2(\mathcal{Y})$. The \emph{tangent space} of $\mathcal{P}_2(\mathcal{Y})$ at $\mu$ is
\begin{equation}
    \mathrm{Tan}_\mu \mathcal{P}_2(\mathcal{Y}) := \overline{\{\nabla \phi : \phi \in C_c^\infty(\mathcal{Y})\}}^{L^2(\mu;\, \mathbb{R}^d)},
\end{equation}
where $C_c^\infty(\mathcal{Y})$ denotes the space of smooth functions with compact support, and the closure is taken in $L^2(\mu; \mathbb{R}^d)$.
\end{restatable}

Results from~\citet{Ambrosio2004-oz} tie these definitions together. First, the metric derivative of an absolutely continuous curve exists almost everywhere and is its minimal speed.

\begin{restatable}[{\citealp[Theorem~1.1.2]{Ambrosio2004-oz}}]{prop}{mdexists}
\label{prop:md_exists}
Let $(\mu_t)_{t \in (a,b)} \in AC^2(a, b; \mathcal{P}_2(\mathcal{Y}))$. Then $|\mu'|(t)$ exists for a.e.\ $t \in (a,b)$, and belongs to $L^2(a, b)$. Moreover, the function $t \mapsto |\mu'|(t)$ satisfies \eqref{eq:w2_bound_curve}, and $|\mu'|(t) \leq m(t)$ for a.e.\ $t$ for any other $m$ satisfying~\eqref{eq:w2_bound_curve}. In this sense, $|\mu'|$ is the smallest admissible speed bound.
\end{restatable}

Second, absolutely continuous curves are exactly those driven by a velocity field of square-integrable norm through the \emph{continuity equation}. Every absolutely continuous curve admits such a field:

\begin{restatable}[{\citealp[Theorem~8.3.1]{Ambrosio2004-oz}}]{theorem}{accurvevelocity}
\label{thm:ac_curve}
Let $(\mu_t)_{t \in (a,b)} \in AC^2(a, b; \mathcal{P}_2(\mathcal{Y}))$. Then there exists a Borel vector field $v\colon (y, t) \mapsto v_t(y)$ such that
\begin{equation}
    v_t \in L^2(\mu_t; \mathbb{R}^d), \qquad \|v_t\|_{L^2(\mu_t)} \leq |\mu'|(t) \quad \text{for a.e.\ } t \in (a,b), \label{eq:vf_lt_metricderivative}
\end{equation}
and the continuity equation
\begin{equation}
    \partial_t \mu_t + \nabla \cdot (v_t \mu_t) = 0 \qquad \text{in } \mathcal{Y} \times (a,b) \label{eq:continuity_equation}
\end{equation}
holds in the distributional sense. Moreover, $v_t \in \mathrm{Tan}_{\mu_t} \mathcal{P}_2(\mathcal{Y})$ for a.e.\ $t \in (a,b)$.
\end{restatable}

Conversely, any curve driven by a velocity field with square-integrable norm is absolutely continuous, and the norm of that field bounds the metric derivative:

\begin{restatable}[{\citealp[Theorem~8.3.1]{Ambrosio2004-oz}}]{prop}{velocityac}
\label{prop:velocity_ac}
Let $(\mu_t)_{t \in (a,b)}$ be a curve in $\mathcal{P}_2(\mathcal{Y})$ that is continuous with respect to $W_2$ and satisfies~\eqref{eq:continuity_equation} for some Borel vector field $v_t$ with $\int_a^b \|v_t\|^2_{L^2(\mu_t)}\, dt < \infty$. Then $(\mu_t) \in AC^2(a, b; \mathcal{P}_2(\mathcal{Y}))$ and
\begin{equation}
    |\mu'|(t) \leq \|v_t\|_{L^2(\mu_t)} \qquad \text{for a.e.\ } t \in (a,b). \label{eq:md_lt_vf}
\end{equation}
\end{restatable}

Third, the velocity field driving a curve is not unique: adding any field $w_t$ with $\nabla \cdot (w_t \mu_t) = 0$ leaves~\eqref{eq:continuity_equation} unchanged. By Proposition~\ref{prop:velocity_ac}, every such field satisfies $\|v_t\|_{L^2(\mu_t)} \geq |\mu'|(t)$ for a.e.\ $t$. The next result characterizes the fields achieving equality: they are exactly the tangent ones, and they are unique.

\begin{restatable}[{\citealp[Proposition~8.4.5]{Ambrosio2004-oz}}]{prop}{uniquetangent}
\label{prop:unique_tangent}
Let $(\mu_t)_{t \in (a,b)} \in AC^2(a, b; \mathcal{P}_2(\mathcal{Y}))$ and let $v_t \in L^2(\mu_t; \mathbb{R}^d)$ be a vector field solving~\eqref{eq:continuity_equation}. Then
\begin{equation}
    \|v_t\|_{L^2(\mu_t)} = |\mu'|(t) \quad \text{for a.e.\ } t \in (a,b)
\end{equation}
if and only if $v_t \in \mathrm{Tan}_{\mu_t}\mathcal{P}_2(\mathcal{Y})$ for a.e.\ $t \in (a,b)$. Moreover, such a $v_t$ is uniquely determined for a.e.\ $t$.
\end{restatable}

We use this characterization in Lemma~\ref{lem:bound_gap} to identify the gap in the upper bound derived in Theorem~\ref{thm:upper_bound}.

\subsection{Metric-derivative bound}
\label{app:metric_bound}

We now apply Theorem~\ref{thm:ac_curve} to the curve traced out on $W_2(\mathcal{Y})$ by perturbing the parameters of the transport map. For a fixed input $x \in \mathcal{X}$ and a direction $h \in \mathbb{R}^p$ in parameter space, we consider the curve
\begin{equation}
    t \mapsto \mu^h_t := T_{\theta + th}(x)_\#\, \nu.
\end{equation}
The main theorem below shows that the curve is absolutely continuous in $W_2(\mathcal{Y})$ under mild regularity on $T$. The metric derivative at $t=0$ measures the local Wasserstein-rate of change of the predicted distribution under the perturbation $h$, which we bound by a quadratic form in $h$ involving the parameter sensitivity matrix $G_\theta(x)$.
\begin{restatable}[Metric-derivative bound]{theorem}{upperbound}
\label{thm:upper_bound}
Fix $x \in \mathcal{X}$ and $h \in \mathbb{R}^p$. Assume there exists an open neighborhood $U \ni \theta$ such that $(\theta', y) \mapsto T_{\theta'}(x)(y)$ is $C^1$ on $U \times \mathcal{Y}$, that $T_{\theta'}(x) : \mathcal{Y} \to \mathcal{Y}$ is a $C^1$ diffeomorphism for every $\theta' \in U$, and that $\int \sup_{\theta' \in U}\|D_\theta T_{\theta'}(x)(y)\|_{\mathrm{op}}^2\, d\nu(y) < \infty$. Then in a neighborhood of $t = 0$, the curve
\begin{equation}
    \mu_t^h := T_{\theta + th}(x)_\#\, \nu
\end{equation}
is absolutely continuous in $W_2(\mathcal{Y})$, and its upper metric derivative at $t = 0$ satisfies
\begin{equation}
    |\mu_h'|^+(0)^2 \leq h^\top G_\theta(x)\, h, \label{eq:metric_bound}
\end{equation}
where $G_\theta(x)$ is given by \eqref{eq:G_def}. In particular, whenever the metric derivative $|\mu_h'|(0)$ exists, \eqref{eq:metric_bound} holds also for $|\mu_h'|(0)$. 
\end{restatable}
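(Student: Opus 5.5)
Drop the dependence on $x$ and write $T_s := T_{\theta+sh}(x)$, with $\mu_s := \mu^h_s = (T_s)_\#\nu$. The plan is to construct an explicit velocity field driving the curve, invoke Proposition~\ref{prop:velocity_ac} to get absolute continuity, and then bound the upper metric derivative at $0$ directly through the synchronized coupling already used in the proof of Theorem~\ref{thm:sensitivitybound}. The direct coupling bound is what makes the statement hold at $t=0$ itself rather than only almost everywhere.

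\textbf{Step 1 (velocity field and absolute continuity).} Choose $\varepsilon>0$ so that $\theta+sh\in U$ for $|s|<\varepsilon$. Since each $T_s$ is a $C^1$ diffeomorphism, we can define
\[
v_s := \big(D_\theta T_s(\cdot)\,h\big)\circ T_s^{-1},
\]
which is Borel and lies in $L^2(\mu_s;\mathbb{R}^d)$. The change of variables gives
\[
\|v_s\|^2_{L^2(\mu_s)}=\int\|D_\theta T_s(y)h\|^2\,d\nu(y)\le\|h\|^2\int\sup_{\theta'\in U}\|D_\theta T_{\theta'}(y)\|_{\mathrm{op}}^2\,d\nu(y)<\infty,
\]
uniformly in $s$. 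To check the continuity equation, take $\varphi\in C_c^\infty$; then
\[
\frac{d}{ds}\int\varphi\,d\mu_s=\int\nabla\varphi(T_s(y))^\top D_\theta T_s(y)h\,d\nu(y)=\int\nabla\varphi\cdot v_s\,d\mu_s .
\]
Differentiation under the integral is justified by dominated convergence: $\nabla\varphi$ is bounded and $\sup_U\|D_\theta T\|_{\mathrm{op}}\in L^2(\nu)\subset L^1(\nu)$. $W_2$-continuity of $s\mapsto\mu_s$ follows from the coupling estimate in Step 2. Proposition~\ref{prop:velocity_ac} then gives $(\mu_s)\in AC^2(-\varepsilon,\varepsilon;\mathcal{P}_2)$.

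\textbf{Step 2 (bound at $t=0$).} Use the coupling $(T_s,T_0)_\#\nu$, exactly as in \eqref{eq:synchronized_bound}:
\[
\frac{W_2^2(\mu_s,\mu_0)}{s^2}\le\int\Big\|\frac{T_s(y)-T_0(y)}{s}\Big\|^2d\nu(y).
\]
By the mean value inequality, the integrand is bounded by $\|h\|^2\sup_{U}\|D_\theta T_{\theta'}(y)\|_{\mathrm{op}}^2$, which is $\nu$-integrable. Pointwise, by $C^1$ regularity, the integrand converges to $\|D_\theta T_0(y)h\|^2$. Dominated convergence therefore gives
\[
\limsup_{s\to0}\frac{W_2^2(\mu_s,\mu_0)}{s^2}\le\int\|D_\theta T_\theta(y)h\|^2\,d\nu(y)=h^\top G_\theta(x)\,h,
\]
using \eqref{eq:G_def}. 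This is exactly $|\mu_h'|^+(0)^2\le h^\top G_\theta h$. When the metric derivative exists it coincides with the upper one, which gives the last claim. The same estimate with $s,t$ in place of $s,0$ yields $W_2(\mu_s,\mu_t)\le C|s-t|$, supplying the continuity needed in Step 1 (and in fact giving AC directly with a constant $m$).

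\textbf{Main obstacle.} The delicate point is the rigor of Step 1: making sure $v_s$ is a genuine Borel field in $L^2(\mu_s)$, and that the distributional continuity equation holds jointly in $(y,s)$, not only for each fixed $s$. I would handle this by integrating the identity above against test functions $\psi(s)\varphi(y)$ and applying Fubini. The key remark, however, is that the pointwise bound at $t=0$ does not depend on Proposition~\ref{prop:velocity_ac}, which only holds almost everywhere; it comes from the coupling argument of Step 2. The velocity construction is mainly needed to establish absolute continuity and to set up the gap analysis via Proposition~\ref{prop:unique_tangent}.
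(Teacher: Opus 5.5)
Your proof is correct, but it reaches the key inequality by a different route than the paper.

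\emph{The paper's route.} The paper uses the velocity field $v^h_t$ for both conclusions. Proposition~\ref{prop:velocity_ac} gives absolute continuity together with $|\mu_h'|(r)\le \|v^h_r\|_{L^2(\mu^h_r)}$, but only for a.e.\ $r$. To reach $t=0$, the paper therefore integrates,
$W_2(\mu^h_s,\mu^h_0)\le \bigl|\int_0^s \sqrt{h^\top G_{\theta+rh}(x)\,h}\,dr\bigr|$.
It then uses continuity of $r\mapsto h^\top G_{\theta+rh}(x)\,h$ (itself a dominated-convergence argument) before dividing by $|s|$.

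\emph{Your route.} You bound the difference quotient $W_2^2(\mu_s,\mu_0)/s^2$ directly through the synchronized coupling and apply dominated convergence to $(T_s-T_0)/s$. The almost-everywhere issue therefore never arises. This is more elementary and slightly more general. With $M=\int\sup_{\theta'\in U}\|D_\theta T_{\theta'}(x)(y)\|_{\mathrm{op}}^2\,d\nu(y)$, the mean-value estimate $W_2(\mu_s,\mu_t)\le \|h\|\sqrt{M}\,|s-t|$ already makes the curve Lipschitz. It is then in $AC^2$ straight from Definition~\ref{def:ac_curve}. As a result, the diffeomorphism hypothesis, the joint $C^1$ regularity in $y$, and the results of Ambrosio et al.\ are all unnecessary for the statement as written. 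Your argument also shows that the bound is just the infinitesimal form of \eqref{eq:synchronized_bound} in Theorem~\ref{thm:sensitivitybound}.

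\emph{What the paper's route buys.} It produces an explicit field $v^h_t$ solving the continuity equation with $\|v^h_t\|^2_{L^2(\mu^h_t)}=h^\top G_{\theta+th}(x)\,h$ for every $t$. This is exactly the object analysed in Lemma~\ref{lem:1d_bound} and Lemma~\ref{lem:bound_gap}. In your write-up, Step~1 is logically redundant for the theorem itself and serves only that downstream gap analysis, as you already note.
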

\begin{proof}
We drop the dependence on $x$ throughout. Let $U \ni \theta$ be the open neighborhood from the hypotheses. Choose $\delta > 0$ small enough that $\theta + th \in U$ for all $t \in [-\delta, \delta]$. We define
\begin{equation}
    M := \int \sup_{\theta' \in U} \|D_\theta T_{\theta'}(y)\|_{\mathrm{op}}^2\, d\nu(y), \label{eq:M_def}
\end{equation}
which is finite by the integrability assumption.

Consider the candidate vector field
\begin{equation}
    v^h_t(y) := D_\theta T_{\theta + th}\big(T^{-1}_{\theta + th}(y)\big)\, h, \qquad t \in (-\delta, \delta). \label{eq:velocity_field}
\end{equation}
This is well-defined: $T_{\theta + th}$ is a $C^1$ diffeomorphism for $t \in (-\delta, \delta)$, so $T^{-1}_{\theta + th}(y) \in \mathcal{Y}$ exists for every $y \in \mathcal{Y}$, and $D_\theta T_{\theta + th}$ exists at every point of $\mathcal{Y}$ by joint $C^1$ regularity on $U \times \mathcal{Y}$. The curve $(\mu^h_t)$ is continuous with respect to $W_2$, since $W_2(\mu^h_s, \mu^h_t) \leq \|T_{\theta + sh} - T_{\theta + th}\|_{L^2(\nu)} \to 0$ as $s \to t$ and dominated convergence. We verify the remaining hypotheses of Proposition~\ref{prop:velocity_ac}.

\paragraph{Step 1: $v^h_t$ has bounded $L^2(\mu^h_t)$ norm.}
By definition of the $L^2(\mu^h_t)$ norm and the pushforward identity $\mu^h_t = T_{\theta + th}\#\nu$,
\begin{align}
    \|v^h_t\|_{L^2(\mu^h_t)}^2
    &= \int \big\|D_\theta T_{\theta + th}\big(T^{-1}_{\theta + th}(y)\big)\, h\big\|^2\, d\mu^h_t(y) \\
    &= \int \|D_\theta T_{\theta + th}(z)\, h\|^2\, d\nu(z) \label{eq:vt_norm_pre} \\
    &= h^\top G_{\theta + th}(x)\, h, \label{eq:vt_norm}
\end{align}
where the second equality uses the change of variables $z = T^{-1}_{\theta + th}(y)$ (justified by the diffeomorphism assumption), and the third equality applies the definition of $G$ from \eqref{eq:G_def}. Bounding the integrand in \eqref{eq:vt_norm_pre} by $\|h\|^2 \sup_{\theta' \in U} \|D_\theta T_{\theta'}(z)\|_{\mathrm{op}}^2$ and using \eqref{eq:M_def},
\begin{equation}
    \|v^h_t\|_{L^2(\mu^h_t)}^2 \leq \|h\|^2 M. \label{eq:vt_uniform_bound}
\end{equation}
Hence $\int_{-\delta}^{\delta} \|v^h_t\|^2_{L^2(\mu^h_t)}\, dt \leq 2\delta\,\|h\|^2 M < \infty$.

\paragraph{Step 2: $v^h_t$ solves the continuity equation.}
For any test function $\phi \in C_c^\infty(\mathcal{Y})$, the pushforward identity $\mu^h_t = T_{\theta + th}\#\nu$ gives
\begin{equation}
    \frac{d}{dt} \int \phi(y)\, d\mu^h_t(y) = \frac{d}{dt} \int \phi(T_{\theta + th}(z))\, d\nu(z).
\end{equation}
Since $\nabla \phi$ is bounded, the derivative of the integrand is dominated by $\|\nabla \phi\|_\infty \|h\| \sup_{\theta' \in U}\|D_\theta T_{\theta'}(z)\|_{\mathrm{op}}$, which is $\nu$-integrable by the integrability assumption. By dominated convergence, we interchange $d/dt$ and the integral, and apply the chain rule:
\begin{equation}
    \frac{d}{dt} \int \phi(T_{\theta + th}(z))\, d\nu(z) = \int \nabla \phi(T_{\theta + th}(z))^\top D_\theta T_{\theta + th}(z)\, h\, d\nu(z).
\end{equation}
Changing variables back via $y = T_{\theta + th}(z)$ and using \eqref{eq:velocity_field}:
\begin{equation}
    \int \nabla \phi(T_{\theta + th}(z))^\top D_\theta T_{\theta + th}(z)\, h\, d\nu(z) = \int \nabla \phi(y)^\top v^h_t(y)\, d\mu^h_t(y).
\end{equation}
Integration by parts yields:
\begin{equation}
    \int \nabla \phi(y)^\top v^h_t(y)\, d\mu^h_t(y) = -\int \phi(y)\, \nabla \cdot (v^h_t \mu^h_t)(y)\, dy.
\end{equation}
Combining all the equalities, we have
\begin{equation}
    \int \phi\, \partial_t \mu^h_t = -\int \phi\, \nabla \cdot (v^h_t \mu^h_t) \qquad \text{for all } \phi \in C_c^\infty(\mathcal{Y}),
\end{equation}
which is the weak form of the continuity equation
\begin{equation}
    \partial_t \mu^h_t + \nabla \cdot (v^h_t \mu^h_t) = 0 \qquad \text{on } \mathcal{Y} \times (-\delta, \delta). \label{eq:continuity_equation_proof}
\end{equation}

\paragraph{Step 3: Conclusion.}
By Steps 1 and 2, Proposition~\ref{prop:velocity_ac} applies: $(\mu^h_t)_{t \in (-\delta, \delta)}$ is absolutely continuous in $W_2(\mathcal{Y})$, and
\begin{equation}
    |\mu_h'|(r) \leq \|v^h_r\|_{L^2(\mu^h_r)} \qquad \text{for a.e.\ } r \in (-\delta, \delta). \label{eq:metric_deriv_bound}
\end{equation}
Since this bound only holds almost everywhere, we integrate it to obtain the limit superior around zero. By Proposition~\ref{prop:md_exists} and~\eqref{eq:metric_deriv_bound}, for $0 < |s| < \delta$,
\begin{equation}
    W_2(\mu^h_s, \mu^h_0) \leq \left| \int_0^s |\mu_h'|(r)\, dr \right| \leq \left| \int_0^s \|v^h_r\|_{L^2(\mu^h_r)}\, dr \right|.
\end{equation}
By~\eqref{eq:vt_norm}, $\|v^h_r\|_{L^2(\mu^h_r)} = \sqrt{h^\top G_{\theta + rh}(x)\, h}$, which is continuous in $r$: the integrand in~\eqref{eq:vt_norm_pre} is continuous in $r$ by joint $C^1$ regularity and dominated by $\|h\|^2 \sup_{\theta' \in U}\|D_\theta T_{\theta'}(z)\|^2_{\mathrm{op}}$, which is integrable by~\eqref{eq:M_def}. Dividing by $|s|$, the right-hand side converges to $\sqrt{h^\top G_\theta(x)\, h}$ as $s \to 0$. Taking the $\limsup$ of the left-hand side gives
\begin{equation}
    |\mu_h'|^+(0)^2 \leq h^\top G_\theta(x)\, h,
\end{equation}
which is the claimed bound~\eqref{eq:metric_bound}. Finally, since $|\mu_h'|^+(0) = |\mu_h'|(0)$ whenever the latter exists (Definition~\ref{def:metric_derivative}), the bound also holds for $|\mu_h'|(0)$ in that case.

\end{proof}

Theorem~\ref{thm:upper_bound} establishes that the parameter sensitivity matrix $G_\theta(x)$ approximately controls the local Wasserstein-rate of change of the predicted distribution under any parameter perturbation $h$: along the curve $t \mapsto T_{\theta+th}(x)\#\nu$, the metric derivative at $t=0$ is bounded by the quadratic form $\sqrt{h^\top G_\theta(x)\, h}$. 

This recovers the parameter sensitivity matrix that appears in Theorem~\ref{thm:sensitivitybound}, but via a route that is purely geometric: rather than passing through a posterior expectation and Taylor expansion, the bound emerges from the velocity field that the parameter perturbation induces on the Wasserstein manifold. 

\subsection{Gap analysis}
\label{app:gap_analysis}

Theorem~\ref{thm:upper_bound} bounds the upper metric derivative $|\mu_h'|^+(0)$ by $\|h\|_{G_\theta}$, but the inequality is strict in general. We characterize the gap in two cases. First, in dimension one with monotone transport maps, the bound is exact: $G_\theta(x)$ recovers the true metric derivative. Second, in higher dimensions the gap is governed by the orthogonal decomposition of the velocity field into a tangent component and a divergence-free component, with the tangent component achieving equality. The two results are summarized in Lemma~\ref{lem:1d_bound} and Lemma~\ref{lem:bound_gap}, respectively.

\subsubsection{Tightness in dimension one}
\label{app:1d_tightness}
\begin{lemma}[1D tightness]
\label{lem:1d_bound}
Under the hypotheses of Theorem~\ref{thm:upper_bound} with $\dim(\mathcal{Y}) = 1$, assume further that $\nu$ is absolutely continuous with respect to the Lebesgue measure, and that $T_{\theta'}(x)$ monotone in $y$ for every $\theta' \in U$. Then, the inequality \eqref{eq:metric_bound} becomes an equality:
\begin{equation}
    |\mu_h'|(t)^2 = h^\top G_{\theta + th}(x)\, h \qquad \text{for all } t \in (-\delta, \delta).
\end{equation}
\end{lemma}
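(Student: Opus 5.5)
We propose to combine the velocity-field machinery of Theorem~\ref{thm:upper_bound} with the characterization of tangent fields in Proposition~\ref{prop:unique_tangent}. Fix $t \in (-\delta,\delta)$ and write $T_t := T_{\theta+th}(x)$. From the proof of Theorem~\ref{thm:upper_bound}, the field $v^h_t = (D_\theta T_t\, h)\circ T_t^{-1}$ solves the continuity equation for $\mu^h_t$. By \eqref{eq:vt_norm}, $\|v^h_t\|^2_{L^2(\mu^h_t)} = h^\top G_{\theta+th}(x)\,h$. It therefore suffices to show that $v^h_t \in \mathrm{Tan}_{\mu^h_t}\mathcal{P}_2(\mathcal{Y})$. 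Proposition~\ref{prop:unique_tangent} then gives $|\mu_h'|(t) = \|v^h_t\|_{L^2(\mu^h_t)}$, at least for a.e.\ $t$.

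The key one-dimensional observation is that every $L^2(\mu)$ vector field on $\mathbb{R}$ is a gradient. Given $v \in L^2(\mu)$, set $\phi(y) = \int_0^y v(s)\,ds$. Then $v = \phi'$ pointwise, and $v$ is continuous here because $T$ is jointly $C^1$ and $T_t^{-1}$ is continuous. We then approximate $\phi$ by $C_c^\infty$ functions, using a cutoff $\chi_R$ and mollification, so that $(\chi_R\phi)' \to v$ in $L^2(\mu)$. The cutoff error is $\phi\,\chi_R'$, and we control it by taking slowly varying cutoffs: $|\chi_R'| \lesssim 1/R$ on $[R,2R]$. A cleaner route is to note that the $L^2(\mu)$-closure of $\{\phi' : \phi \in C_c^\infty\}$ contains every $\psi \in C_c(\mathbb{R})$. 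Such a $\psi$ equals $\phi'$ with $\phi(y)=\int_{-\infty}^y \psi$, and we correct for the nonzero total integral using a small bump placed far away, whose $L^2(\mu)$ contribution vanishes as the bump is moved to infinity, since $\mu$ has finite mass there. Since $C_c$ is dense in $L^2(\mu)$, we get $\mathrm{Tan}_\mu = L^2(\mu)$ in dimension one. This is the standard fact quoted in \citet{Ambrosio2004-oz}, and we would cite or sketch it.

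Alternatively, we can verify tightness directly and recover equality for every $t$, not just a.e. This uses monotonicity and absolute continuity of $\nu$, mirroring Part 3 of the proof of Theorem~\ref{thm:sensitivitybound}. With $\tau = F_\nu(y)$ we have $T_s = Q_s\circ F_\nu$, where $Q_s$ is the quantile function of $\mu^h_s$. The isometry \eqref{eq:1d_isometry} then gives
\begin{equation}
W_2(\mu^h_s,\mu^h_t) = \|T_s - T_t\|_{L^2(\nu)}.
\end{equation}
Dividing by $|s-t|$ and letting $s\to t$, we need
\begin{equation}
\frac{T_s - T_t}{s-t} \to D_\theta T_t\, h \quad\text{in } L^2(\nu).
\end{equation}
Pointwise convergence follows from $C^1$ regularity. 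Domination follows from the mean value theorem: the difference quotient is bounded by $\|h\|\sup_{\theta'\in U}\|D_\theta T_{\theta'}\|_{\mathrm{op}}$, which lies in $L^2(\nu)$ by assumption. Dominated convergence then shows that the metric derivative exists at every $t$ and equals $\sqrt{h^\top G_{\theta+th}(x)h}$. We would present this direct argument as the main proof, since it yields the "for all $t$" statement, and mention the tangent-space argument as the geometric explanation.

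The main obstacle is justifying the quantile representation $T_s = Q_s\circ F_\nu$, and hence the exact isometry, for every $s$. This requires that $\nu$ have no atoms, which holds by absolute continuity, so that $F_\nu(y)$ is uniform under $\nu$. It also requires that each $T_s$ be non-decreasing. A monotone $C^1$ diffeomorphism is strictly monotone, but if some $T_s$ were decreasing we would replace $\tau$ by $1-\tau$. Continuity in $\theta'$ forces the orientation to be constant on the connected neighborhood $U$. If all maps are decreasing, both curves are reflected consistently, the synchronized coupling is still optimal (it is the anti-monotone coupling composed with the same reflection), and the isometry holds. We would handle this by composing with $y\mapsto -y$ in the base space.
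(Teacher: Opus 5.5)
Your direct argument is correct and is essentially the paper's proof. The paper also combines the one-dimensional quantile isometry with dominated convergence to obtain $|\mu_h'|(t)^2 = \int_0^1 |\partial_t F^{-1}_{\theta+th}(x;\tau)|^2\,d\tau$, and then identifies this with $\|v^h_t\|^2_{L^2(\mu^h_t)} = h^\top G_{\theta+th}(x)\,h$ through the change of variables $\tau = F_{\theta+th}(x;y)$. You instead stay in $L^2(\nu)$ coordinates, which lets you read off $h^\top G h$ directly and state the dominating function $\|h\|\sup_{\theta'\in U}\|D_\theta T_{\theta'}\|_{\mathrm{op}}$ explicitly; the paper leaves that domination unspecified.

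Your treatment of decreasing maps is a sound addition, since the paper tacitly assumes non-decreasing maps. The tangent-space remark is also correct in its second, bump-correction form. As you note, though, that route via Proposition~\ref{prop:unique_tangent} only yields equality for a.e.\ $t$.
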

\begin{proof}
Monotonicity of $T_{\theta+th}(x)$ gives the quantile representation $T_{\theta+th}(x) = F^{-1}_{\theta+th}(x; \cdot) \circ F_\nu$, where $F^{-1}_{\theta+th}(x; \tau)$ is the $\tau$-quantile of $\mu^h_t = T_{\theta+th}(x)\#\nu$.

\textbf{Step 1: metric derivative.} The 1D quantile representation of the squared 2-Wasserstein distance gives, for $s, t \in (-\delta, \delta)$,
\begin{equation}
    W_2^2(\mu^h_{t+s}, \mu^h_t) = \int_0^1 \big(F^{-1}_{\theta + (t+s)h}(x; \tau) - F^{-1}_{\theta + th}(x; \tau)\big)^2\, d\tau.
\end{equation}
Dividing by $s^2$ and passing to the limit $s \to 0$ (interchanging limit and integral by dominated convergence),
\begin{equation}
    |\mu_h'|(t)^2 = \int_0^1 \left|\frac{\partial}{\partial t} F^{-1}_{\theta + th}(x; \tau)\right|^2 d\tau. \label{eq:1d_metric_deriv}
\end{equation}
\textbf{Step 2: $L^2$ norm of the velocity field.} By \eqref{eq:vt_norm} from the proof of Theorem~\ref{thm:upper_bound},
\begin{equation}
    h^\top G_{\theta+th}(x)\, h = \|v^h_t\|_{L^2(\mu^h_t)}^2 = \int \big\|v^h_t(y)\big\|^2\, d\mu^h_t(y). \label{eq:1d_step2_def}
\end{equation}
The velocity field, defined in \eqref{eq:velocity_field}, satisfies
\begin{equation}
    v^h_t(y) = D_\theta T_{\theta+th}(x)(T^{-1}_{\theta+th}(x)(y))\, h = \frac{\partial}{\partial t} T_{\theta+th}(x)(T^{-1}_{\theta+th}(x)(y)), \label{eq:1d_velocity_chain}
\end{equation}
where the second equality applies the chain rule to the parameterization $t \mapsto T_{\theta+th}(x)(z)$ at fixed $z$. Substituting,
\begin{equation}
    \|v^h_t\|_{L^2(\mu^h_t)}^2 = \int \left|\frac{\partial}{\partial t} T_{\theta+th}(x)(T^{-1}_{\theta+th}(x)(y))\right|^2 d\mu^h_t(y). \label{eq:1d_vt_norm_explicit}
\end{equation}
Now we use monotonicity to rewrite this in quantile form. Since $T_{\theta+th}(x) = F^{-1}_{\theta+th}(x; \cdot) \circ F_\nu$, we have $T^{-1}_{\theta+th}(x)(y) = F_\nu^{-1}(F_{\theta+th}(x; y))$, where $F_{\theta+th}(x; \cdot)$ is the CDF of $\mu^h_t$. Combining,
\begin{equation}
    \frac{\partial}{\partial t} T_{\theta+th}(x)(T^{-1}_{\theta+th}(x)(y)) = \frac{\partial}{\partial t} F^{-1}_{\theta+th}(x; F_{\theta+th}(x; y)).
\end{equation}
Applying the change of variables $\tau = F_{\theta+th}(x; y)$ (so $d\mu^h_t(y) = d\tau$) in \eqref{eq:1d_vt_norm_explicit},
\begin{equation}
    \|v^h_t\|_{L^2(\mu^h_t)}^2 = \int_0^1 \left|\frac{\partial}{\partial t} F^{-1}_{\theta + th}(x; \tau)\right|^2 d\tau. \label{eq:1d_vt_norm}
\end{equation}
Comparing \eqref{eq:1d_metric_deriv} and \eqref{eq:1d_vt_norm} gives $|\mu_h'|(t)^2 = h^\top G_{\theta+th}(x)\, h$, which is the claimed equality.
\end{proof}

\subsubsection{Decomposition of the velocity field in higher dimensions}
\label{app:helmholtz}

Let $L^2(\mu^h_t; \mathbb{R}^d)$ denote the Hilbert space of $\mathbb{R}^d$-valued vector fields $v : \mathcal{Y} \to \mathbb{R}^d$ that are square-integrable with respect to $\mu^h_t$, equipped with the inner product $\langle u, v \rangle_{L^2(\mu^h_t)} = \int \langle u(y), v(y) \rangle\, d\mu^h_t(y)$. The tangent space $\mathrm{Tan}_{\mu^h_t} \mathcal{P}_2(\mathcal{Y})$ is a closed subspace of $L^2(\mu^h_t; \mathbb{R}^d)$, and its orthogonal complement consists of the vector fields $w$ that are divergence-free with respect to $\mu^h_t$, i.e., $\nabla \cdot (w\, \mu^h_t) = 0$ in the distributional sense; equivalently, $\int \langle \nabla \phi, w \rangle\, d\mu^h_t = 0$ for every $\phi \in C_c^\infty(\mathcal{Y})$. This is the Helmholtz–Hodge decomposition in $L^2(\mu^h_t)$ \citep{Ambrosio2004-oz, villani2009optimal}. 

We may therefore consider the orthogonal projection operator
\begin{equation}
    \Pi : L^2(\mu^h_t; \mathbb{R}^d) \to \mathrm{Tan}_{\mu^h_t} \mathcal{P}_2(\mathcal{Y}),
\end{equation}
that sends each vector field to its component along $\mathrm{Tan}_{\mu^h_t} \mathcal{P}_2(\mathcal{Y})$. Given that we characterized the tangent space as the closure of gradient fields (Definition~\ref{def:tangent}), the projection $\Pi(v)$ returns the best $L^2(\mu_t^h)$-approximation of $v$ by such fields. The next lemma shows that this projection characterizes the gap in Theorem~\ref{thm:upper_bound}. 

\begin{lemma}[Gap as the divergence-free component]
\label{lem:bound_gap}
Let $v^h_t$ be the velocity field defined in \eqref{eq:velocity_field}, and write
\begin{equation}
    v^h_t = v^{\mathrm{tan}}_t + w_t, \qquad v^{\mathrm{tan}}_t := \Pi(v^h_t), \qquad w_t := v^h_t - v^{\mathrm{tan}}_t,
\end{equation}
so that $v_t^{\tan} \in \mathrm{Tan}_{\mu_t^h}\mathcal{P}_2(\mathcal{Y})$ and $w_t$ is divergence-free with respect to $\mu_t^h$. Then the gap in \eqref{eq:metric_bound} satisfies
\begin{equation}
       h^\top G_{\theta + th}(x) h = \|w_t\|^2_{L^2(\mu^h_t)} +\; |\mu_h'|(t)^2, \qquad a.e.\; t.
\end{equation}
In particular, equality in \eqref{eq:metric_bound} holds if and only if $w_t = 0$, i.e., the velocity field $v^h_t$ lies in the tangent space.
\end{lemma}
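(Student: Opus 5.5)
The proof combines the Pythagorean identity in the Hilbert space $L^2(\mu^h_t;\mathbb{R}^d)$ with the characterization of tangent velocity fields in Proposition~\ref{prop:unique_tangent}. We work on the interval $(-\delta,\delta)$ from the proof of Theorem~\ref{thm:upper_bound}. There, Steps 1--2 show that $v^h_t \in L^2(\mu^h_t;\mathbb{R}^d)$, that it solves the continuity equation \eqref{eq:continuity_equation_proof}, and, by \eqref{eq:vt_norm}, that $\|v^h_t\|^2_{L^2(\mu^h_t)} = h^\top G_{\theta+th}(x)h$. Since $\mathrm{Tan}_{\mu^h_t}\mathcal{P}_2(\mathcal{Y})$ is a closed subspace and $\Pi$ is the orthogonal projection onto it, we have $\langle v^{\tan}_t, w_t\rangle_{L^2(\mu^h_t)}=0$. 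Therefore
\begin{equation*}
h^\top G_{\theta+th}(x)h = \|v^{\tan}_t\|^2_{L^2(\mu^h_t)} + \|w_t\|^2_{L^2(\mu^h_t)}.
\end{equation*}
It remains to identify $\|v^{\tan}_t\|_{L^2(\mu^h_t)}$ with $|\mu_h'|(t)$ for a.e.\ $t$.

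For this we show that $v^{\tan}_t$ also drives the curve. By the description of the orthogonal complement of the tangent space, $w_t$ satisfies $\int \langle\nabla\phi, w_t\rangle\, d\mu^h_t = 0$ for every $\phi\in C_c^\infty(\mathcal{Y})$. Subtracting this from the weak form of the continuity equation (Step 2 of Theorem~\ref{thm:upper_bound}) gives
\begin{equation*}
\frac{d}{dt}\int\phi\, d\mu^h_t = \int\langle\nabla\phi, v^{\tan}_t\rangle\, d\mu^h_t,
\end{equation*}
so $v^{\tan}_t$ solves \eqref{eq:continuity_equation} as well. The curve is already known to lie in $AC^2$ by Theorem~\ref{thm:upper_bound}. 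Moreover, $v^{\tan}_t \in \mathrm{Tan}_{\mu^h_t}\mathcal{P}_2(\mathcal{Y})$ and $\|v^{\tan}_t\|\le \|v^h_t\|$, so it is square integrable in time. Proposition~\ref{prop:unique_tangent} then yields $\|v^{\tan}_t\|_{L^2(\mu^h_t)} = |\mu_h'|(t)$ for a.e.\ $t$. Substituting this into the Pythagorean identity gives the claimed decomposition.

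The equality statement follows directly. For a.e.\ $t$, the bound $|\mu_h'|(t)^2 \le h^\top G_{\theta+th}(x)h$ is an equality exactly when $\|w_t\|_{L^2(\mu^h_t)}=0$, that is, when $w_t=0$ in $L^2(\mu^h_t)$, or equivalently when $v^h_t$ lies in the tangent space. At $t=0$ itself one must additionally assume that the metric derivative exists and that the a.e.\ identity holds there. One option is to read \eqref{eq:metric_bound} along a.e.\ $t$. Another is to use continuity of $t\mapsto h^\top G_{\theta+th}h$ (shown in Theorem~\ref{thm:upper_bound}), which I would note as a caveat.

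I expect the main obstacle to be measurability in $t$ of $t\mapsto v^{\tan}_t$. Proposition~\ref{prop:unique_tangent} requires a Borel field in $(y,t)$, but the projection is taken separately for each $t$ onto a $t$-dependent subspace. I would avoid this issue rather than prove measurability directly. Theorem~\ref{thm:ac_curve} supplies a Borel tangent field $\tilde v_t$ solving the continuity equation with $\|\tilde v_t\|=|\mu_h'|(t)$. For a.e.\ $t$, the difference $\tilde v_t - v^h_t$ then satisfies $\nabla\cdot((\tilde v_t - v^h_t)\mu^h_t)=0$; establishing this pointwise-in-$t$ statement from the distributional equation is the delicate step, done by testing against a countable dense family of $\phi$. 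Given it, $\tilde v_t - v^h_t$ lies in the orthogonal complement of the tangent space, and therefore $\tilde v_t = \Pi(v^h_t) = v^{\tan}_t$ for a.e.\ $t$. This gives the identification directly, without invoking Proposition~\ref{prop:unique_tangent} on a possibly non-measurable field.
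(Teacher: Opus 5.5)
Your proposal is correct and follows essentially the same route as the paper. The paper also uses Pythagoras for the orthogonal splitting in $L^2(\mu^h_t;\mathbb{R}^d)$, observes that $v^{\mathrm{tan}}_t$ still solves the continuity equation because $w_t$ is divergence-free with respect to $\mu^h_t$, and then invokes Proposition~\ref{prop:unique_tangent} together with \eqref{eq:vt_norm}.

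Your two caveats are legitimate points that the paper passes over. The first is Borel measurability of $(t,y)\mapsto v^{\mathrm{tan}}_t(y)$, which you handle cleanly by identifying $\Pi(v^h_t)$ with the tangent field from Theorem~\ref{thm:ac_curve}. The second is that an a.e.-$t$ identity says nothing by itself about \eqref{eq:metric_bound} at $t=0$. Note, however, that continuity of $t\mapsto h^\top G_{\theta+th}(x)h$ alone would not carry the identity to $t=0$; you would also need continuity of $|\mu_h'|(t)$ or $\|w_t\|_{L^2(\mu^h_t)}$ at $0$.
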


\begin{proof}
By orthogonality of the decomposition $v^h_t = v^{\mathrm{tan}}_t + w_t$ in $L^2(\mu^h_t)$,
\begin{equation}
    \|v^h_t\|^2_{L^2(\mu^h_t)} = \|v^{\mathrm{tan}}_t\|^2_{L^2(\mu^h_t)} + \|w_t\|^2_{L^2(\mu^h_t)}. \label{eq:helmholtz_pythagoras}
\end{equation}
$v_t^{\tan}$ still solves the continuity equation, as $\nabla \cdot (\mu_t w_t) = 0$. By Theorem~\ref{prop:unique_tangent}, the metric derivative coincides with the $L^2(\mu^h_t)$-norm of the tangent component:
\begin{equation}
    |\mu_h'|(t)^2 = \|v^{\mathrm{tan}}_t\|^2_{L^2(\mu^h_t)}. \label{eq:tan_is_metric_deriv}
\end{equation}
Combining \eqref{eq:helmholtz_pythagoras} and \eqref{eq:tan_is_metric_deriv}, and using $\|v^h_t\|^2_{L^2(\mu^h_t)} = h^\top G_{\theta+th}(x)\, h$ from \eqref{eq:vt_norm},
\begin{equation}
    h^\top G_{\theta + th}(x)\, h  = |\mu_h'|(t)^2 + \|w_t\|^2_{L^2(\mu^h_t)}.
\end{equation}
The "if and only if" claim follows since $\|w_t\|^2 \geq 0$ with equality iff $w_t = 0$.
\end{proof}

Lemma~\ref{lem:bound_gap} suggests a refinement of $G_\theta(x)$: replacing $v^h_0$ with its tangent projection $\Pi(v^h_0)$ gives a quadratic form $h^\top \tilde G_\theta(x)\, h := \|\Pi(v^h_0)\|^2_{L^2(\mu^h_0)}$ that exactly recovers the squared metric derivative $|\mu_h'|(0)^2$. We leave this as a future research direction. 

\clearpage

\section{Exact metric tensor under the Knothe–Rosenblatt distance}
\label{app:KR}

Theorem~\ref{thm:upper_bound} bounds the metric derivative on the Wasserstein-2 manifold of the probability curve $t \mapsto \mu_{\theta + th}(x)$, evaluated at $t=0$ by the quadratic form $h^\top G_\theta(x) h$. We now show that for lower-triangular affine transport maps (e.g., obtained via Cholesky parameterization), this bound is exact under the Knothe--Rosenblatt (KR) distance~\citep{rosenblatt1952remarks, knothe1957contributions}. In this setting, $G_\theta(x)$ is the exact metric tensor, suggesting that the parameter sensitivity matrix has a precise geometric meaning beyond the loose Wasserstein-2 upper bound.

\subsection{Background: KR rearrangement, KR distance, and metric derivative}

\begin{definition}[Knothe--Rosenblatt rearrangement]
\label{def:KR_rearrangement}
Let $\nu, \mu$ be probability measures on $\mathbb{R}^d$. A \emph{Knothe--Rosenblatt rearrangement} from $\nu$ to $\mu$ is a triangular map $T^{\mathrm{KR}}: \mathbb{R}^d \to \mathbb{R}^d$ that is component-wise monotone in its arguments and satisfies $T^{\mathrm{KR}}_\# \nu = \mu$.
\end{definition}
When $\nu$ is absolutely continuous with respect to the Lebesgue measure, the KR rearrangement from $\nu$ to any target $\mu$ exists and is $\nu$-a.e. unique~\citep{rosenblatt1952remarks, knothe1957contributions}. 

\begin{definition}[Knothe--Rosenblatt distance]
\label{def:KR_distance}
Let $\nu$ be a probability measure on $\mathbb{R}^d$ absolutely continuous with respect to the Lebesgue measure, and let $\mu_1, \mu_2$ be probability measures on $\mathbb{R}^d$ with finite second moments. Let $T_1^{\mathrm{KR}}, T_2^{\mathrm{KR}}$ be the KR rearrangements from $\nu$ to $\mu_1, \mu_2$ respectively. The squared \emph{Knothe--Rosenblatt distance} between $\mu_1$ and $\mu_2$ is
\begin{equation}
    W_{\mathrm{KR}}^2(\mu_1, \mu_2) := \int \| T_1^{\mathrm{KR}}(y) - T_2^{\mathrm{KR}}(y) \|_2^2 \, d\nu(y).
\end{equation}
\end{definition}

\begin{definition}[Metric derivative under $W_{\mathrm{KR}}$]
\label{def:KR_metric_derivative}
For the curve $t \mapsto \mu_\theta^t$ along a perturbation direction $h \in \mathbb{R}^p$, the KR metric derivative at $t = 0$ is
\begin{equation}
    |\mu'_h|_{\mathrm{KR}}(0) := \lim_{t \to 0} \frac{W_{\mathrm{KR}}( \mu_\theta^t,\,  \mu_\theta^0)}{|t|},
\end{equation}
when the limit exists.
\end{definition}

\subsection{KR metric derivative for lower-triangular affine pushforwards}

We first specialize the KR distance to the family of lower-triangular affine pushforwards relevant to our setting.

\begin{lemma}[KR distance for lower-triangular affine pushforwards]
\label{lem:KR_affine}
Let $\nu$ be a probability measure on $\mathbb{R}^d$ absolutely continuous with respect to the Lebesgue measure, with finite second moments. Let $T_1, T_2$ be lower-triangular affine maps $T_i(y) = b_i + L_i y$ with $L_i$ having strictly positive diagonals. Then $T_i$ is the KR rearrangement from $\nu$ to $T_{i\#}\nu$, and
\begin{equation}
    W_{\mathrm{KR}}^2(T_{1\#}\nu,\, T_{2\#}\nu) = \int \| T_1(y) - T_2(y) \|_2^2 \, d\nu(y).
\end{equation}
\end{lemma}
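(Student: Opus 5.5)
The plan is to verify directly that each $T_i$ satisfies the two defining properties of a Knothe--Rosenblatt rearrangement in Definition~\ref{def:KR_rearrangement}: it is triangular, and it is component-wise monotone with $T_{i\#}\nu$ as its pushforward. The distance formula then follows from Definition~\ref{def:KR_distance} by substitution, using the uniqueness of the KR map.

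First, write the $k$-th component of $T_i$ as
\begin{equation*}
T_i^k(y) = b_i^k + \sum_{j \le k} (L_i)_{kj}\, y_j .
\end{equation*}
Because $L_i$ is lower triangular, $T_i^k$ depends only on $(y_1,\dots,y_k)$, so $T_i$ is triangular. For fixed $(y_1,\dots,y_{k-1})$, the map $y_k \mapsto T_i^k(y)$ is affine with slope $(L_i)_{kk} > 0$, hence strictly increasing. This is exactly the component-wise monotonicity required in Definition~\ref{def:KR_rearrangement}. The identity $T_{i\#}\nu = T_{i\#}\nu$ is trivial, so $T_i$ qualifies as a KR rearrangement from $\nu$ to $T_{i\#}\nu$.

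Second, I invoke the $\nu$-a.e.\ uniqueness of the KR rearrangement stated after Definition~\ref{def:KR_rearrangement}, which holds because $\nu$ is absolutely continuous. This identifies $T_i^{\mathrm{KR}} = T_i$ $\nu$-a.e. Finite second moments of $\nu$ guarantee that $T_{i\#}\nu \in \mathcal{P}_2$, so Definition~\ref{def:KR_distance} applies. Substituting into that definition gives the claimed integral, since modifications on $\nu$-null sets do not change it.

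The main subtlety is the precise meaning of ``component-wise monotone'' and of uniqueness. The KR map is usually defined via conditional quantile maps, so one should check that the conditional law of the $k$-th coordinate given the previous ones is pushed correctly. If needed, I would argue this by induction on $k$. A triangular map whose $k$-th component is increasing in $y_k$, with the first $k-1$ components already matching, transports the conditional of $\nu$ given $y_{1:k-1}$ to the conditional of the target given $T^{1:k-1}(y)$. By the 1D uniqueness of monotone transport, the analogue of the quantile representation used in the proof of Lemma~\ref{lem:1d_bound}, this forces the map to coincide with the conditional quantile map. That identifies $T_i$ with the canonical KR map.
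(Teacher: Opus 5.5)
Your proposal is correct and follows the same route as the paper: check that each $T_i$ is triangular with $k$-th component strictly increasing in $y_k$ (slope $(L_i)_{kk}>0$), invoke the $\nu$-a.e.\ uniqueness of the KR rearrangement for absolutely continuous $\nu$, and substitute into Definition~\ref{def:KR_distance}. Your inductive sketch via conditional quantile maps only fills in the uniqueness fact that the paper cites without proof. That sketch implicitly relies on the leading $k-1$ components being an invertible affine map of $y_{1:k-1}$, and on the conditionals of $\nu$ being atomless for a.e.\ $y_{1:k-1}$.
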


\begin{proof}
By the discussion following Definition~\ref{def:KR_rearrangement}, $T_i$ is triangular and component-wise monotone (since $L_i$ has positive diagonal entries), hence $T_i$ is a KR rearrangement from $\nu$ to $T_{i\#}\nu$. By the $\nu$-a.e. uniqueness of the KR rearrangement under the absolute continuity assumption on $\nu$, $T_i$ is the unique such map. Substituting into Definition~\ref{def:KR_distance} gives the claimed formula.
\end{proof}

We now show that in contrast to the Wasserstein-2 bound (Theorem~\ref{thm:upper_bound}), $G_\theta$ exactly captures the squared metric derivative under the KR distance for lower-triangular affine pushforwards.

\begin{theorem}[Exact metric tensor for lower-triangular affine maps]
\label{thm:KR_exact}
Let $T_\theta(y) = b_\theta + L_\theta y$ be a lower-triangular affine transport map with $L_\theta$ having strictly positive diagonals, and assume $\theta \mapsto (b_\theta, L_\theta)$ is differentiable. Assume an absolutely continuous measure $\nu$ has zero mean and identity covariance. Then for any perturbation direction $h \in \mathbb{R}^p$,
\begin{equation}
    |\mu'(\theta, h)|_{\mathrm{KR}}^2 = h^\top G_\theta\, h, \quad \text{where} \quad G_\theta = \int D_\theta T_\theta(y)^\top D_\theta T_\theta(y) \, d\nu(y).
\end{equation}
\end{theorem}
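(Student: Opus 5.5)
The plan is to reduce the KR metric derivative to a difference quotient of the transport maps in $L^2(\nu)$, using Lemma~\ref{lem:KR_affine}, and then pass to the limit. Fix $h \in \mathbb{R}^p$ and write $\mu^t_\theta := T_{\theta+th\#}\nu$. Differentiability of $\theta\mapsto(b_\theta,L_\theta)$ means the diagonal entries of $L_{\theta+th}$ depend continuously on $t$. Hence they stay strictly positive for $|t|$ small, and $T_{\theta+th}$ remains a lower-triangular affine map with positive diagonal. Lemma~\ref{lem:KR_affine} then applies to the pair $(T_{\theta+th}, T_\theta)$ and gives
\begin{equation*}
\frac{W_{\mathrm{KR}}^2(\mu^t_\theta,\mu^0_\theta)}{t^2} = \int \Big\| \frac{(b_{\theta+th}-b_\theta) + (L_{\theta+th}-L_\theta)y}{t}\Big\|_2^2\, d\nu(y).
\end{equation*}

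Next I expand the square and integrate using the moment assumptions on $\nu$. Set $\beta_t := (b_{\theta+th}-b_\theta)/t$ and $\Lambda_t := (L_{\theta+th}-L_\theta)/t$. Since $\mathbb{E}_\nu[y]=0$ and $\mathbb{E}_\nu[yy^\top]=I$, the integral equals exactly $\|\beta_t\|^2 + \|\Lambda_t\|_F^2$: the cross term vanishes and the quadratic term gives $\operatorname{Tr}(\Lambda_t^\top\Lambda_t)$. This finite-dimensional expression removes any need for dominated convergence over $y$. By differentiability, $\beta_t \to D_\theta b_\theta\, h$ and $\Lambda_t \to \dot L := \sum_k h_k\, \partial_{\theta_k} L_\theta$ as $t\to 0$. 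So the limit exists and equals $\|D_\theta b_\theta h\|^2 + \|\dot L\|_F^2$, and taking square roots gives the metric derivative in Definition~\ref{def:KR_metric_derivative}.

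Finally I identify this limit with $h^\top G_\theta h$. The Jacobian satisfies $D_\theta T_\theta(y)h = D_\theta b_\theta h + \dot L y$, as in \eqref{eq:linear_jacobian}. Therefore
\begin{equation*}
h^\top G_\theta h = \int \|D_\theta b_\theta h + \dot L y\|^2\, d\nu(y) = \|D_\theta b_\theta h\|^2 + \|\dot L\|_F^2,
\end{equation*}
by the same moment computation, which is also the content of Lemma~\ref{lem:kernel_decomposition}. This matches the limit above.

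The only delicate point is checking that Lemma~\ref{lem:KR_affine} may be invoked along the whole curve, i.e.\ that positivity of the diagonal persists for small $|t|$; continuity of $L_\theta$ settles it. The rest is bookkeeping.
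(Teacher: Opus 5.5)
Your proof is correct and takes essentially the paper's route: you use Lemma~\ref{lem:KR_affine} and the moment conditions on $\nu$ to reduce $W_{\mathrm{KR}}^2$ to $\|b_{\theta+th}-b_\theta\|^2 + \|L_{\theta+th}-L_\theta\|_F^2$, pass to the limit by differentiability, and identify the result with $h^\top G_\theta h$ via the computation in Lemma~\ref{lem:kernel_decomposition}. Your explicit check that the diagonal of $L_{\theta+th}$ stays positive for small $|t|$, so that the lemma applies along the whole curve, is a detail the paper leaves implicit and is a welcome addition.
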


\begin{proof}
By Lemma~\ref{lem:KR_affine}, for any two affine maps $T_i(y) = b_i + L_i y$:
\begin{equation}
    W_{\mathrm{KR}}^2(T_{1\#}\nu, T_{2\#}\nu) = \int \|T_1(y) - T_2(y)\|^2 \, d\nu(y).
\end{equation}
Expanding $T_1(y) - T_2(y) = (b_1 - b_2) + (L_1 - L_2) y$ and using $\mathbb{E}_\nu[y] = 0$ and $\mathbb{E}_\nu[y y^\top] = I$, the cross-term vanishes and the integral evaluates to
\begin{equation}
\label{eq:KR_affine_closed}
    W_{\mathrm{KR}}^2(T_{1\#}\nu, T_{2\#}\nu) = \|b_1 - b_2\|^2 + \|L_1 - L_2\|_F^2.
\end{equation}
Applying \eqref{eq:KR_affine_closed} to the parameter curve $T_{\theta+th}$ vs.\ $T_\theta$,
\begin{equation}
    W_{\mathrm{KR}}^2(\mu(\theta + th), \mu(\theta)) = \|b_{\theta+th} - b_\theta\|^2 + \|L_{\theta+th} - L_\theta\|_F^2.
\end{equation}
A first-order Taylor expansion of $b_\theta$ and $L_\theta$ in $\theta$ gives, as $t \to 0$,
\begin{align}
    \|b_{\theta+th} - b_\theta\|^2 &= t^2 \|D_\theta b_\theta\, h\|^2 + o(t^2), \\
    \|L_{\theta+th} - L_\theta\|_F^2 &= t^2 \|D_\theta L_\theta\, h\|_F^2 + o(t^2).
\end{align}
Dividing by $t^2$ and taking the limit,
\begin{equation}
    |\mu'(\theta, h)|_{\mathrm{KR}}^2 = \|D_\theta b_\theta\, h\|^2 + \|D_\theta L_\theta\, h\|_F^2.
\end{equation}
By Lemma~\ref{lem:kernel_decomposition}, this equals $h^\top G_\theta h$ where $G_\theta = \int D_\theta T_\theta(y)^\top D_\theta T_\theta(y)\, d\nu(y)$.
\end{proof}

\clearpage
\section{Experimental details}
\label{app:exp}
\subsection{Conditional Histogram Experimental Details}
\label{app:condhistdetails}

\subsubsection{Data Simulation}
The input space is $\mathcal{X} = [0,1]$. For each $x$, the conditional distribution $\mu_x$ is a symmetric two-component Gaussian mixture
\begin{align}
    \mu_x = \frac{1}{2}\,\mathcal{N}\left(m(x) - d(x),\;\sigma^2(x)\right)
          + \frac{1}{2}\,\mathcal{N}\left(m(x) + d(x),\;\sigma^2(x)\right),
    \label{eq:mixture}
\end{align}
with conditional mean $m(x) = x$, mode half-separation
$d(x) = \exp\left(-20\,(x - 0.5)^2\right)$, and component standard
deviation $\sigma(x) = 0.2\,(1+x)^2$.

This design is chosen so that $m(x)$ is smooth while the quantile functions are localized. With this setting, we anticipate that a mean-based active learning method will sample the space evenly while a quantile-based active learning method will be sensitive to areas of $\mathcal{X}$ where the variance changes.

\subsubsection{Gaussian Process Regression}
We fit a quantile function $x \mapsto F_x^{-1}(\tau_j)$ with $F_x^{-1}(\tau_j) = m_\theta(x) + g_\theta(x; \tau_j)$ for each quantile $\tau_j \in \{\tau_1, \dots, \tau_M\}$ on a uniform grid ($M = 30$, splitting $[0.03, 0.97]$ for stability, boundaries included) independently with a Gaussian process. We estimate posterior expected variance in \eqref{eq:shape_risk_1d} by integrating the variance across the quantiles. For the mean strategy, we fit a Gaussian process predicting the mean and obtain the posterior expected variance. Kernel hyperparameters (amplitude and bandwidth of an RBF kernel) are pre-fitted by maximum likelihood on a dense ground-truth grid and then fixed throughout the active learning loop; this isolates the effect of the acquisition strategy from hyperparameter estimation artifacts.

\subsubsection{Active Learning Strategies}

\paragraph{Wasserstein AL} Select
\begin{align}
    x_{\mathrm{new}} &= {\arg\min}_{x' \in\mathcal{X}_{\mathrm{cand}}}
    \; \mathbb{E}_{x^* \sim p_{\mathrm{test}}}\int_0^1
    \operatorname{Var}\left[F_{x^*}^{-1}(\tau) \mid
    \mathcal{D} \cup \{x'\}\right]\,d\tau,
\end{align}
    targeting the full distributional risk.
\paragraph{Mean AL} Select
    \begin{align}
    x_{\mathrm{new}} = \arg\min_{x' \in \mathcal{X}_{\mathrm{cand}}}
    \; \mathbb{E}_{x^* \sim p_{\mathrm{test}}}
    \operatorname{Var}\!\left[m(x^*) \mid
    \mathcal{D} \cup \{x'\}\right],
    \end{align}
    targeting only the first term (mean) of the risk decomposition.
\paragraph{Random } Select $x_{\mathrm{new}}$ uniformly at random.

\subsubsection{Experiments}
Each strategy starts from $N_{\mathrm{init}} = 5$ shared initial observations (including the boundary points $x = 0.005$ and $x = 0.995$) and acquires $5$ additional points sequentially. 

Mean Wasserstein loss \eqref{eq:quantile_w2}, mean squared error between the predicted mean $m_\theta(x) = \int F^{-1}_\theta(x)d\tau$ and the true mean, and the mean shape error \eqref{eq:1d_shape_loss} across conditions are reported.

\subsection{Baselines}
\label{app:ntk_from_pushforward}

The NTK baseline benchmarked throughout Section~\ref{sec:experiments} is the standard finite-width NTK \eqref{eq:ntk} for last-layer parameters, applied to the mean-fitting network that is trained to predict the sample mean. The size of the mean-fitting network is set as identical to the transport map trained. The NTK kernels are paired with the same acquisition rule \eqref{eq:acquisition} as tNTK.

For the conditional quantile prediction, we have two additional baselines that use NTK of networks that each predict mean and variance, and quartiles. We also apply BALSA\textsuperscript{EMD}from \citet{Werner2025-qo}, which is the only work that takes the distribution into account to the best of our knowledge. BALSA\textsuperscript{EMD} was benchmarked only for conditional quantile prediction, as it is only designed for 1-dimensional distributions.

\subsection{Affine transport of 2D Gaussian} \label{app:exp_2d_gaussian}
\label{app:affine-kernel}
 
In this section, we derive a fully analytic expression for tNTK when the conditional generative model is a two-dimensional affine map whose covariance factor is parameterized via a Cholesky decomposition, along with the experimental details for the results presented in Section \ref{sec:result_affine_2d_gaussian}. The resulting kernel requires no Monte Carlo integration over the base measure and no automatic differentiation---only inner products of intermediate features and
pointwise sigmoid evaluations. We then describe the details about training and data.
 

\subsubsection{Affine Cholesky Parameterization}
\label{app:affine-param}
 
We first show the general result on the affine map with Cholesky decomposition on a $d$-dimensional Gaussian, with experimental details deferred to Appendix~\ref{app:exp_2d_gaussian_training}. Let $x \in \mathcal{X}$ denote the conditioning variable and
$y = (y_1, \dots, y_d)^\top \sim \nu$ a sample from the base measure $\nu= \mathcal{N}(0, I_d)$.  We consider a conditional transport map of the form
\begin{equation}
  T_\theta(x)(y) \;=\; b_\theta(x) \;+\; L_\theta(x)\, y \;\in\; \mathbb{R}^d,
\end{equation}
where $b_\theta(x) \in \mathbb{R}^d$ is the predicted mean and
$L_\theta(x) \in \mathbb{R}^{d \times d}$ is a lower-triangular matrix with strictly
positive diagonal entries, so that
$\Sigma_\theta(x) = L_\theta(x)\,L_\theta(x)^\top$ is a valid covariance matrix.  The
pushforward distribution is
$\mu(\theta, x) = \mathcal{N}\!\bigl(b_\theta(x),\, \Sigma_\theta(x)\bigr)$.
 
Both $b_\theta(x)$ and $L_\theta(x)$ are produced by a neural network with the following
last-layer structure.
 
\paragraph{Backbones.}
Two feature extractors (which may or may not share parameters) map the input to
representations
\begin{equation}
  h_\theta(x) \;\in\; \mathbb{R}^H, \qquad g_\theta(x) \;\in\; \mathbb{R}^H,
\end{equation}
where $h_\theta$ feeds the mean head and $g_\theta$ feeds the Cholesky head.  Their
parameters are part of $\theta$ but are held fixed when defining the \emph{last-layer}
kernel.

\paragraph{Mean head.}
A linear layer with weight rows $m_j^\top, \in \mathbb{R}^H,\; j = 1, \dots, d$ produces the mean vector:
\begin{equation}
  \label{eq:mean-head}
  b^j_\theta(x)
  \;=\;
    m_j^\top h_\theta(x) + b_j.
\end{equation}
 
\paragraph{Cholesky head.}
A linear layer with weight rows
$\ell_{ji}^\top \in \mathbb{R}^H$ and biases
$k_{ji} \in \mathbb{R}$ for $j = 1, \dots, d,\; i = 1, \dots, j$ produces the elements of the lower-triangular factor, where the softplus function $\sigma(t) = \log(1 + e^t)$ is applied to the diagonal entries ($i = j$).

\begin{equation}
  L_\theta(x)
  \;=\;
  \begin{bmatrix}
    \sigma\!\bigl(\ell_{11}^\top g_\theta(x) + k_{11}\bigr) & 0 & \dots & 0 \\[5pt]
    \ell_{21}^\top g_\theta(x) + k_{21} & \sigma\!\bigl(\ell_{22}^\top g_\theta(x) + k_{22}\bigr) & \dots & 0 \\
    \vdots & & & \vdots \\
    \ell_{d1}^\top g_\theta(x) + k_{d1} & \dots & & \sigma\!\bigl(\ell_{dd}^\top g_\theta(x) + k_{dd}\bigr)
  \end{bmatrix}.
\end{equation}

\paragraph{Last-layer gradients.} Conditional transport can be written as
\begin{equation}
    T_\theta^j(x)(y) = m_j^\top h(x) + b_j + \sum_{i=1}^{j-1} (\ell_{ji}^\top g_\theta(x) + k_{ji}) y_i + \sigma\left(l_{jj}^\top g_\theta(x) + k_{jj} \right)y_j
\end{equation}
where $T_\theta^j(x)(y)$ denotes $j$-th dimension of the output.
We compute the gradient of $T_\theta^j(x)(y)$ with respect to the last-layer parameters $\theta=\{m, b,\ell, k\}$. Let $\sigma'$ denote the sigmoid function (the derivative of softplus).
\begin{align}
    & \nabla_{m_j} T_\theta^j(x)(y) = h_\theta(x) \\
    & \nabla_{b_j} T_\theta^j(x)(y) = 1 \\
    & \nabla_{\ell_{ji}} T_\theta^j(x)(y) = g_\theta(x) y_i, \quad i = 1,\dots ,j-1 \\
    & \nabla_{k_{ji}} T_\theta^j(x)(y) = y_i, \quad i=1, \dots, j-1 \\
    & \nabla_{\ell_{jj}} T_\theta^j(x)(y) = \sigma'\left(\ell_{jj}^\top g_\theta(x) + k_{jj} \right)y_jg_\theta(x)\\
    & \nabla_{k_{jj}} T_\theta^j(x)(y) = \sigma'\left(\ell_{jj}^\top g_\theta(x) + k_{jj} \right)y_j\\
\end{align}
and we have the closed-form tNTK:
\begin{align}
  k_\theta(x, x')
  \;&=\; \int \operatorname{Tr}\!\bigl[D_w T_\theta(x)(y)^\top\, D_w T_\theta(x')(y)\bigr]
  \,d\nu(y), \\
  &= \int \sum_{j=1}^d \left(h_\theta(x)^\top h_\theta(x') + 1 + \sum_{i=1}^{j-1}\left(g_\theta(x)^\top g_\theta(x') + 1 \right)y_i^2 \right.\nonumber \\
  & \quad \left. + \sigma'(\ell_{jj}^\top g_\theta(x) + k_{jj})\sigma'(\ell_{jj}^\top g_\theta(x') + k_{jj})\left(g_\theta(x)^\top g_\theta(x') + 1\right) y_j^2 \right) d\nu(y) \\
  &= d\left(h_\theta^\top h_\theta(x') + 1 \right) + \\ &\qquad\left(\left(\frac{d(d-1)}{2} + \sum_j \sigma'(\ell_{jj}^\top g_\theta(x) + k_{jj} )\sigma'(\ell_{jj}^\top g_\theta(x') + k_{jj} )\right)(g_\theta(x)^\top g_\theta(x') + 1)\right)
\end{align}

\subsubsection{Training}
\label{app:exp_2d_gaussian_training}
We use a training pool of 200 conditions and 100 test conditions, with active learning batches of size $B = 2$ over $5$ rounds starting from $B_0 = 5$ initial conditions. The mean and covariance heads are separate MLPs, each with 2 hidden layers and ReLU activations followed by a linear output layer of dimension 2 (mean) and 3 (Cholesky factor entries), respectively. We train both networks jointly using the analytical KL divergence between the predicted and ground-truth Gaussian as the loss:
\begin{align}
D_\mathrm{KL}\!\left(\mathcal{N}(\mu_t, \Sigma_t) \,\|\, \mathcal{N}(\mu_p, \Sigma_p)\right)
= \frac{1}{2}\left[
    \operatorname{tr}(\Sigma_p^{-1}\Sigma_t)
    + (\mu_p - \mu_t)^\top \Sigma_p^{-1} (\mu_p - \mu_t)
    - d
    + \log\frac{\det \Sigma_p}{\det \Sigma_t}
\right] \\
= \frac{1}{2}\left[
    \| L_p^{-1} L_t \|_F^2
    + \| L_p^{-1}(\mu_p - \mu_t) \|_2^2
    - d
    + 2\sum_{i=1}^{d} \log (L_p)_{ii}
    - 2\sum_{i=1}^{d} \log (L_t)_{ii}
\right].
\end{align}
The loss is optimized with Adam using a learning rate of $0.001$ for 2000 steps. For the evaluation, the Wasserstein-2 distance between predicted and ground truth Gaussian distributions was calculated with its closed form~\citep{MR752258}.

\subsubsection{Data}
Data is generated with condition covariates $x = (x_1, x_2)$ that specifies $b_\theta(x)$ and $\Sigma_\theta(x)$:
\begin{align}
    b_{\theta}(x) &= 2.0 [e^{-6x_1 x_2}, e^{-6x_1(1-x_2)}]^\top \\
    \Sigma_\theta(x) &= \begin{bmatrix}
\lambda_1^2 \cos^2\varphi + \lambda_2^2 \sin^2\varphi & (\lambda_1^2 - \lambda_2^2)\sin\varphi\cos\varphi \ \\
(\lambda_1^2 - \lambda_2^2)\sin\varphi\cos\varphi & \lambda_1^2 \sin^2\varphi + \lambda_2^2 \cos^2\varphi
\end{bmatrix}
\end{align}
with $\varphi = \pi x_2$, $\lambda_1 = 0.2 + 2 e^{-10(1 - x_1)}$, and $\lambda_2 = 0.2$. We construct $196$ training conditions on a $14 \times 14$ Cartesian grid in $[0, 1]^2$, and $100$ test conditions on a $10 \times 10$ grid uniformly spaced in $[1/28,\, 1 - 1/28]^2$ to avoid overlap with the training grid. The Cholesky factor $L_\theta(x)$ is computed from $\Sigma_\theta(x)$.

\subsection{Conditional quantile prediction}
\label{app:exp_quantile}
\subsubsection{Model and training}
We train MLPs that produce 32 quantile values per condition following \citet{Moon2021-mu}. The MLP outputs a vector $z \in \mathbb{R}^{H}$ from its last hidden layer. Two linear heads then produce: (i) the smallest predicted quantile value $\hat{F}^{-1}_{\theta}(\tau_1) = h_{\min}^\top z + b_{\min} \in \mathbb{R}$, and (ii) $31$ non-negative increments $\delta = \mathrm{softplus}(W z + b) \in \mathbb{R}^{31}_{\geq 0}$. The full vector of $32$ predicted quantiles is then
\begin{equation}
    \hat{F}^{-1}_{\theta}(\tau_k) = \hat{F}^{-1}_{\theta}(\tau_1) + \sum_{j=1}^{k-1} \delta_j, \quad k = 1, \ldots, 32,
\end{equation}
ensuring monotonicity by construction (\emph{non-crossing quantiles}). The targeted quantile levels are $\tau_k = (2k - 1)/64$ for $k = 1, \ldots, 32$, i.e., $32$ uniformly spaced points in $[1/64, 1 - 1/64]$. Empirical quantiles at these levels are computed from the per-condition samples and used as training targets.

The model is trained with squared Wasserstein-2 loss with the Adam optimizer for 200 minimum and 3000 maximum epochs with early stopping patience of 10 with validation loss. Validation loss is calculated as the loss with empirical quantile values calculated from 20\% of held-out per-condition validation samples. 

For each dataset, we select the model configuration and weight decay by 5-fold cross-validation over $10$ conditions, choosing the combination that minimizes the squared Wasserstein-2 distance on held-out conditions. The search grid is:
\begin{itemize}
    \item Model configurations $(\text{width}, \text{depth})$: $(64, 2)$, $(32, 3)$, $(128, 3)$.
    \item Weight decay: $\{0.1, 0.01, 10^{-3}, 10^{-4}\}$.
\end{itemize}
The selected hyperparameters are reported in Table~\ref{tab:quantile_hparams}.

\begin{table}[h]
\centering
\caption{Selected hyperparameters for the conditional quantile network, per dataset.}
\label{tab:quantile_hparams}
\begin{tabular}{lcccc}
\toprule
Dataset & $B_0$ & $(\text{width}, \text{depth})$ & \# bins & Weight decay \\
\midrule
\multirow{2}{*}{Simulated Gaussian mixture} & 5  & $(128, 3)$ & 32 & $10^{-4}$ \\
& 10 & $(32, 3)$  & 32 & $10^{-3}$ \\
\midrule
\multirow{2}{*}{Temperature}                & 5  & $(64, 2)$  & 32 & $10^{-3}$ \\
& 10 & $(32, 3)$  & 32 & $10^{-2}$ \\
\midrule
\multirow{2}{*}{Dew Point}                  & 5  & $(64, 2)$  & 32 & $10^{-4}$ \\
& 10 & $(64, 2)$  & 32 & $10^{-1}$ \\
\bottomrule
\end{tabular}
\end{table}

Active learning loops of iterations 0-5 over 5 folds are run on CPU with 10 GB memory within 2 hours on an internal cluster.

\subsubsection{Data}
\paragraph{Simulated 1D mixture Gaussian}
We simulate 1000 1-dimensional Gaussian mixture distribution with 3 components, where parameters are randomly selected with $\mu_i \sim \text{Unif}(-5, 5), \sigma_i \sim \text{Unif}(1.0, 1.5) \ , i \in (1, 2, 3)$ and $\pi \sim \text{Dirichlet}(0.25, 0.25, 0.25)$. 600 samples are generated from each distribution. Condition embedding of $(\mu_1, \sigma_1, \pi_1, \dots, \mu_3, \sigma_3, \pi_3)$ is generated by ordering components so that $\mu_1 \leq \mu_2 \leq \mu_3$ for identifiability.

\paragraph{Temperature}
We take the daily surface temperature measurements between 2023 and 2025 measured across 11396 stations from \citet{noaa_gsod_1999}(CC0-1.0 license). 1000 stations with no less than 256 days of measurements, with non-NaN station locations, and with the highest Wasserstein distance from the control distribution of a random Gaussian with mean and standard deviation of all measurements were selected. The distance is calculated from 256 random samples from each station and 1000 random samples from the control distribution. The resulting station had 967-971 days of measurements. $(\text{longitude}, \text{latitude}, \text{elevation})$ was used as 3-dimensional condition embedding for each station. Yearly temperature and each of the condition embedding dimensions were globally scaled to unit variance and $0$ mean before training.

\paragraph{Dew point}
We take the daily dew point measurements from the same data source as in the Temperature dataset. 1000 stations with no less than 512 days of measurements and with non-NaN station locations were randomly selected, resulting in 515-970 days of measurements. The same condition embedding and data scaling were used as in the Temperature dataset.

\subsection{Conditional flow matching}
\label{app:exp_cflow}
\subsubsection{Model and training}
We adapted the source code of \citet{Klein2025-qb} \footnote{\url{https://github.com/theislab/CellFlow}, \texttt{v0.0.6}, MIT license} with linear interpolant:
\begin{align}
    \phi^{0, x}(y_i) &= y \sim \mathcal{N}(0, I)  \\
    u_i^{t, x}(\phi^{t, x}(y_i)) &= \frac{d}{dt}\phi^{t, x}(y_i) = \frac{y^{x}_i-\phi^{t, x}(y_i)}{1-t}
\end{align}
where $y_i^x$ is a sample of condition $x$. We learn $v_\theta(t, x, \phi^{t, x}(y_i))$ to approximate $u^{t, x}_i(\phi^{t, x}(y_i))$~\citep{Lipman2022-mt}.
The flow ODE is parameterised by a velocity field
\begin{equation}
    v_\theta(t, x, \phi^{t, x}(y_i)) = g_\theta\!\left(\text{concat}\bigl(
        f_\theta^{\text{time}}(t),\;
        f_\theta^{\text{sample}}(\phi^{t, x}(y_i)),\;
        f_\theta^{\text{cond}}(x),
    \bigr)\right),
\end{equation}
where $f_\theta^{\text{sample}}$, $f_\theta^{\text{cond}}$, and $g_\theta$ are MLPs with SiLU activation~\citep{Elfwing2017-vk}, $f_\theta^{\text{time}}$ is a Fourier-feature time encoder, and the condition embedding is mean-pooled before $f_\theta^{\text{cond}}$. A combination of dropout rates among $(0.0, 0.5)$ and one of three model configurations in Table \ref{tab:cellflow_configs} for each dataset is selected to be the one with the least Sinkhorn divergence in validation conditions from the 5-fold CV of 10 randomly selected conditions. 

\begin{table}[h]
\centering
\caption{CellFlow model configurations}
\label{tab:cellflow_configs}
\begin{tabular}{lccccc}
\toprule
Config & $f_\theta^{\text{sample}}$ & $f_\theta^{\text{cond}}$ & $g_\theta$ & Time frequencies & $f_\theta^{\text{time}}$\\
\midrule
\texttt{1}  & $[64,64]$       & $[64,64]$       & $[128,128,128,32]$  & 32 & [128, 32] \\
\texttt{2} & $[64,64]$       & $[64,64]$       & $[128,128,128,128]$ & 128 & [128, 128] \\
\texttt{3} & $[256,256]$     & $[64,64]$       & $[512,512,512,64]$  & 64 & [512, 64] \\
\bottomrule
\end{tabular}
\end{table}

For a given set of training conditions, we hold out 20\% of samples per condition as validation samples. At each training step, a $256$-cell batch is sampled per condition across $128$ time samples of $t$; one optimization step processes all training conditions jointly. The model is trained with Adam (per-dataset learning rate, Table~\ref{tab:cellflow_hparams}) for up to $n_{\text{eval}}$ steps; validation is run every $1000$ steps on Sinkhorn divergence ($\epsilon = 0.1\sigma$, where $\sigma$ is the standard deviation of cost matrix entries) over the held-out validation samples. We additionally apply a plateau-based learning-rate scheduler (patience $5$, factor $0.5$, $\text{lr}_{\min}=0.1\text{lr}$) to CLINC 150 dataset and early stopping with validation loss. Model parameters from the best validation step are restored when early stopping is triggered before kernel computation and test evaluation.

\begin{table}[h]
\centering
\caption{Per-dataset CellFlow hyperparameters}
\label{tab:cellflow_hparams}
\begin{tabular}{lccccc}
\toprule
Dataset & lr & Config & dropout rate & $n_{\text{eval}}$ & early-stopping patience \\
\midrule
Radial flow             & $5\!\times\!10^{-6}$ & 1 & 0   & $30{,}000$  & 5  \\
Van der Pol oscillation & $5\!\times\!10^{-5}$ & 1 & 0   & $30{,}000$  & 5  \\
CLINC 150               & $5\!\times\!10^{-5}$ & 1 & 0.5 & $300{,}000$ & 30 \\
\bottomrule
\end{tabular}
\end{table}


Active learning uses 5-fold cross-validation over conditions: each fold holds out $1/5$ of conditions for test evaluation, and the remaining train/validation conditions are revealed in batches across 3 AL iterations. Iteration $0$ uses random initial conditions and is therefore strategy-independent; we train it once per (fold, seed)  to eliminate GPU non-determinism drift across strategies.

Sinkhorn divergence for validation samples and test conditions is evaluated by integrating the flow ODE with adaptive Tsit5($\text{rtol}=\text{atol}=10^{-5}$) from 256 standard-normal source samples (sampled with replacement if the number of available samples per condition is less than 256) to $t=1$, with $\epsilon = 0.05\sigma$, where $\sigma$ is the standard deviation of cost matrix entries.

For the kernel calculation, $M=4096$ random samples from the standard normal distribution are integrated with the flow ODE with adaptive Tsit5($\text{rtol}=\text{atol}=10^{-5}$). Active learning loops of iterations 0-3 over 5 folds are run on L40S GPUs with 30GB of memory within 48 hours per seed on an internal cluster.

\subsubsection{Data}
\paragraph{Simulated radial flow}
We simulated radial flow in $\mathds{R}^2$ with a global attractor of concentric circles centered at the origin with radius $\sqrt{\alpha/\beta}$, and the steepness of the attraction is controlled by $\alpha$, where $y \sim \mathcal{N}(0, I)$.
\begin{equation}
    \frac{d}{dt}\phi^{t, x}(y) = (\alpha - \beta ||\phi^{t, x}(y)||^2)\phi^{t, x}(y), \ \ \alpha, \beta > 0
\end{equation}
The pool comprises 100 conditions: 50 sampled with jitter around a nominal point, $\alpha \sim \mathrm{Unif}(0.98, 1.02)$ and $\beta \sim \mathrm{Unif}(2.48, 2.52)$, and 50 from the Cartesian product of $\alpha \in \{0.1, 0.31, 0.52, \ldots, 2.0\}$ ($10$ uniformly spaced points in $[0.1, 2.0]$) and $\beta \in \{0.5, 1.625, 2.75, 3.875, 5.0\}$ ($5$ uniformly spaced points in $[0.5, 5.0]$). For each condition, we generate $1000$ particle trajectories by Euler integration of the radial flow ODE for $100$ steps with $\Delta t = 0.01$, yielding the empirical distribution at $t = 1$. The conditioning input is $(\alpha, \beta) \in \mathbb{R}^2$.

\paragraph{Simulated Van der Pol oscillation}
We simulated Van der Pol oscillation for varying starting particle locations $y \sim \mathcal{N}([c_1, c_2]^\top, \sigma^2I)$ and parameter $\mu$.
\begin{align}
\frac{dx_1}{dt} &= x_2 \\
\frac{dx_2}{dt} &= \mu(1 - x_1^2)\, x_2 - x_1
\end{align}

Random direction and radius in the 4-sphere were sampled, where 
$r  = \min(\text{Exp}(2.5), 1)$, $\text{Exp}$ denotes samples from exponential distribution, $u$ is the uniformly randomly selected unit vector in 4-sphere, and $h$ is the vector of half ranges for each parameter.
\begin{align}
\text{Parameter:} \quad
&\theta = (\mu, c_1, c_2, \sigma) = \text{clip}(\theta_0 + r(u \odot h), \theta_{\min}, \theta_{\max})\\
&\theta_0 = (\mu_0, c_{1,0}, c_{2,0}, \sigma) = (2.0, 0.0, 0.0, 0.5) \\
&\theta_{\min} = (-0.5, -1.5, -1.5, 0.2) \\
&\theta_{\max} = (3.0, 1.5, 1.5, 1.0) \\
&h_i = \max(\theta_{0,i} -\theta_{\min,i}, \theta_{\max, i} - \theta_{0, i})
\end{align}
$\phi^{3,x}(y)$ for 1000 particles per condition is calculated with Euler integration with 300 steps ($\Delta t$ = 0.01). $\theta$ was used as the condition embedding.

\paragraph{CLINC 150} CLINC 150 is an intent classification dataset with 150 intentions and 100 sentences per intention~\citep{larson2019,clinc150_570} (CC BY 4.0 license). We embedded sentences with the BART~\citep{Lewis2019-kc} base-sized pretrained model, then embedded them into an 8-dimensional PC embedding. The label for each intent class was embedded with the fine-tuned MINILM~\cite{10.5555/3495724.3496209}\footnote{all-MiniLM-L6-v2, \url{https://huggingface.co/sentence-transformers/all-MiniLM-L6-v2} under Apache-2.0 license} text encoder into 64 latent dimensions with \verb+truncate_dim=64+, then scaled to have unit variance per dimension.

\subsection{Conditional Monge gap}
\label{app:exp_cmonge}
\subsubsection{Model and training}
We adapted the source code of \citet{Driessen2025-vw}\footnote{Source code at \url{https://github.com/AI4SCR/conditional-monge-gap} under MIT license, accessed Nov 1st, 2025}. For each train condition, 1/5 of the samples are held out as validation samples. We train the model for 100 epochs, with one condition per batch, selecting 256 random samples for the condition with replacement. The network was trained with the Adam optimizer with weight decay of $10^{-5}$. Monge gap regularization parameter was set to $1.0$. OT regularization parameter was set to $\epsilon\sigma$ with $\epsilon=0.05$ for Sinkhorn divergence and $\epsilon=0.1$ for Monge gap, where $\sigma$ is the standard deviation of all values in the cost matrix for OT matching. The learning rate was decayed with a cosine decay schedule with constant decay per step until 50 epochs, down to $1/10$ of the initial learning rate. Parameters of the epoch with the least validation loss were selected for test condition evaluation and kernel calculation.

The model separately encodes condition and sample, and concatenated results are fed to a shared MLP to predict the transported particle:\begin{equation}
    T_\theta(x, y) = g_\theta(\text{concat}(f_\theta^{\text{cond}}(x), f_\theta^{\text{sample}}(y)))
\end{equation}
where $f_\theta^{\text{cond}}$, $f_\theta^{\text{sample}}$, $g_\theta$ are MLPs with GELU activation layer~\citep{Hendrycks2016-ji} and $f_\theta^{\text{cond}}$ and $f_\theta^{\text{sample}}$ has last layer activation.

Model configuration and learning rate combination are selected based on the lowest Sinkhorn divergence on the validation conditions from the 5-fold CV of $n$ conditions, where $n=10$ is the minimum initial batch size. Hyperparameter sweep was across 3 model configurations and 3 learning rates $(0.01, 0.001, 0.0001)$.
\begin{table}[h]
\centering
\caption{Conditional Monge Gap model configurations}
\label{tab:model_configs}
\begin{tabular}{lccc}
\toprule
Config & \texttt{$f_\theta^{\text{sample}}$} & \texttt{$f_\theta^{\text{cond}}$} & \texttt{$g_\theta$} \\
\midrule
1  & $[64,64]$    & $[64,64]$    & $[128,128,128,32]$  \\
2 & $[128,128]$  & $[128,128]$  & $[256,256,256,64]$  \\
3 & $[256,256]$  & $[256,256]$  & $[512,512,512,64]$  \\
\bottomrule
\end{tabular}
\end{table}
\begin{table}[h]
\centering
\caption{Selected model configurations for Conditional Monge map datasets}
\label{tab:model_configs_chose}
\begin{tabular}{lccc}
\toprule
Dataset & Model configuration & Learning rate \\
\midrule
K562-Chemical  & 3    & 0.0001   \\
BMDM-Genetic & 3  & 0.001   \\
\bottomrule
\end{tabular}
\end{table}

Test conditions are evaluated with 3 random samples with replacement of 256 samples per condition, both from the source and target samples.

$M=4096$ samples were used to obtain Monte Carlo estimates of kernels. Active learning loops of iterations 0-5 over 5 folds are run on CPU with 100GB of memory within 48 hours on an internal cluster.

\subsubsection{Datasets}
\paragraph{K562-Chemical} K562 cells and MCF7 cells from the Sciplex3 dataset~\citep{Piran2023,Srivatsan2020-vn} (CC BY 4.0 license) are separately embedded into scVI~\citep{Lopez2018-mv} 10-dimensional embeddings with default parameters, and then the mean response embedding in MCF7 is used as the condition covariates. 436 kinase-dose conditions are selected that have at least 80 cells profiled in K562 and are present in MCF7, yielding 64,971 cells, with 3,359 unperturbed cells (80-255 per condition; median 138).

\paragraph{BMDM-Genetic} 
We take the subset of Perturb-seq outcomes that targeted 514 genes in mouse bone marrow-derived macrophages (BMDMs) from \citet{Kudo2026.05.27.728345}. 3000 highly variable genes are selected, and a 50-dimensional salient representation~\citep{Weinberger2022-eg} is obtained. The imaging outcome of the optical pooled screen~\citep{Feldman2019-vb} with matched perturbations was processed and embedded into 70 principal components and pseudo-bulked per perturbation. The top 198 perturbations that diverge the most from the control cells with the largest Wasserstein distance in the 50-dimensional latent space are selected for the experiment in this paper, leading to 73,161 cells with 25,952 unperturbed cells and 256-1,963 cells for each of the 198 selected perturbations. The response of all cells, including the unperturbed cells, was projected onto the first 10-dimensional principal components. Further details of the dataset and preprocessing are described in the original manuscript~\citet{Kudo2026.05.27.728345}\footnote{Public repository appears at \url{https://github.com/Genentech/MoCAVI/blob/main/demos/tutorial_mocavi_kinase_screen.ipynb}}.

\clearpage

\section{Additional Results}
\label{app:additional_results}
\subsection{Empirical analysis of gap in Theorem~\ref{thm:sensitivitybound}}
\label{app:numerical_gap_analysis}
We assessed how loose the upper bound becomes as the dimension grows by analyzing the transport of $d$-dimensional Gaussian distributions. We compute the analytical kernel as described in Appendix~\ref{app:affine-kernel}. We compute the Fréchet variance by obtaining the barycenter of the posterior distribution with randomly sampled parameters $\theta = \{m, b, \ell, k\}$ with $\theta \sim \mathcal{N}(\hat\theta, \sigma^2 I_p)$. 

\paragraph{Barycenter computation.}
  Barycenters of 
  $M=1000$ random parameter samples are obtained with a fixed-point method in \citet{Alvarez-Esteban2015-sl}, until the relative Frobenius norm $F^\mathrm{rel} = \|\Sigma^{(k+1)} - \Sigma^{(k)}\|_F / \|\Sigma^{(k)}\|_F$ converges ($F^\mathrm{rel} \leq 1e^{-10}$). 
  
  For a condition $x$, write $\Sigma_i$, $i=1,\dots, M$, for the covariances of the $M$ jittered pushforward Gaussians at that condition. The Fr\'echet mean
  $(\mu_F,\Sigma_F)$ is computed as follows.

  The mean component is closed form,
  \begin{equation}
    \mu_F \;=\; \frac{1}{M}\sum_{i=1}^{M}\mu_i .
  \end{equation}

  The covariance is the \'Alvarez-Esteban fixed point, initialized at the
  \emph{arithmetic} mean, and iterated as
  \begin{align}
    \Sigma_F^{(0)} &= \frac{1}{M}\sum_{i=1}^{M}\Sigma_i, \\
    \bar{T}^{(k)}  &= \frac{1}{M}\sum_{i=1}^{M}
       \big(\Sigma_F^{(k)}\big)^{-1/2}
       \Big[\big(\Sigma_F^{(k)}\big)^{1/2}\,\Sigma_i\,\big(\Sigma_F^{(k)}\big)^{1/2}\Big]^{1/2}
       \big(\Sigma_F^{(k)}\big)^{-1/2}, \\
    \Sigma_F^{(k+1)} &= \bar{T}^{(k)}\,\Sigma_F^{(k)}\,\bar{T}^{(k)} .
  \end{align}

  Every matrix square root is evaluated spectrally on the symmetrized argument,
  with eigenvalues clamped before the root is taken:
  \begin{equation}
    S^{1/2} = U\,\mathrm{diag}\!\big(\sqrt{\max(\lambda_j,0)}\big)\,U^{\!\top},
    \qquad
    S^{-1/2} = U\,\mathrm{diag}\!\big(\max(\lambda_j,10^{-10})^{-1/2}\big)\,U^{\!\top},
  \end{equation}
  where $U\,\mathrm{diag}(\lambda)\,U^{\!\top}$ is the eigendecomposition of
  $\tfrac{1}{2}(S+S^{\!\top})$.

  The iteration stops at the first $k$ with
  \begin{equation}
    \frac{\big\|\Sigma_F^{(k+1)}-\Sigma_F^{(k)}\big\|_F}
         {\big\|\Sigma_F^{(k)}\big\|_F} \;<\; \tau ,
    \qquad \tau = 10^{-10},
  \end{equation}
The fixed-point iteration converged within 40 iterations across all settings. The norms are Frobenius norms taken over the \emph{stacked} tensor of all conditions, so the criterion is a single pooled residual rather than one test per condition. The accepted iterate is symmetrized and factorized with a small jitter,
  \begin{equation}
    \Sigma_F \leftarrow \tfrac{1}{2}\big(\Sigma_F+\Sigma_F^{\!\top}\big),
    \qquad
    L_F L_F^{\!\top} = \Sigma_F + 10^{-14} I .
  \end{equation}

For each dimension $d$, we calculate $\mathrm{Tr}(G_{\hat\theta}(x))$ and $V_F^{\sigma^2}(x)$ per condition. 

We calculate Fréchet variance from a Monte-Carlo sampling over $M = 1000$ draws of the last-layer perturbation. A single set of $M$ draws is shared across all conditions and across $\sigma \in \{3\!\times\!10^{-4}, 10^{-3}, 3\!\times\!10^{-3}, 10^{-2}\}$ (common random numbers), and is redrawn independently for each of the three seeds of $\sigma$.
For each $\sigma$ and $x$, we independently fit the ordinary least squares regression
\begin{equation}
    V^{\sigma^{2}}_F(x)/\sigma^{2} = a(x) + b(x)\,\sigma^{2}
\end{equation}
and report $\mathrm{Tr}(G_{\hat{\theta}}(x))/\hat{a}(x) - 1$.

We find that the leading-order of the bound grows from $1$ to $1.7$ fold of the Fréchet variance as dimension $d$ grows from $1$ to $50$ (Figure~\ref{fig:bound_growth}).

\begin{figure}[ht]
  \centering
  \includegraphics[width=0.5\textwidth]{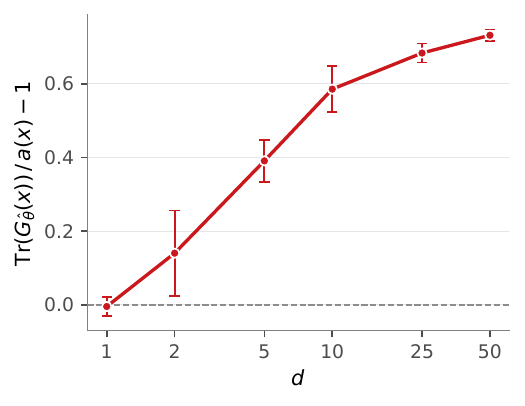}
    \caption{Per-dimension gap in the leading order. Bars are one standard deviation over conditions ($x$).}
    \label{fig:bound_growth}
\end{figure}

\subsection{Multi-dimensional Gaussian affine transport}
\label{app:gaussian_nd}

To test the impact of bound looseness in higher dimensions, we examined active learning performance in the affine transport of higher-dimensional Gaussians in $d \in \{5, 10, 25, 50\}$.

\paragraph{Mean field.}
For $k=0,\dots,d-1$,
\[
\mu_k(x) = \alpha\, e^{-\beta x_1}\cos\Bigl(\frac{2\pi k}{d} + \pi x_2\Bigr),
\qquad \alpha=2,\ \beta=6 .
\]
All $d$ coordinates are driven by the same two scalars, so the mean field stays intrinsically two-dimensional at every $d$.

\paragraph{Covariance field.}
The covariance is a smooth, full-rank random field built from $r=d$ random Fourier
features:
\[
\Sigma(x) = s_d^2\bigl(A(x)A(x)^{\top} + \varepsilon I_d\bigr),\qquad
A(x) = I_d + \sum_{j=1}^{d} f_j(x)\,B_j,
\]
\[
f_j(x) = \sqrt{2/d}\,\cos\bigl(\omega_j^{\top}x + \phi_j\bigr),
\]
with $\varepsilon=10^{-3}$ and
\[
(B_j)_{ab}\overset{\text{iid}}{\sim}\mathcal{N}(0,1/d),\qquad
\omega_j\sim\mathcal{N}\bigl(0,(2\pi\kappa)^2 I_2\bigr),\qquad
\phi_j\sim\mathcal{U}[0,2\pi),\qquad \kappa=1 .
\]
The parameters $\{B_j,\omega_j,\phi_j\}_{j=1}^{d}$ are drawn once per $d$, in that order,
from a dedicated random generator seeded with $d$. They are independent of the experiment
seed, so every seed sees the same ground-truth field. The scale $s_d$ is calibrated so that
the average per-coordinate variance is $\sigma_0^2=1$. Concretely, we draw
$M=2000$ calibration conditions $x^{(m)}\sim\mathcal{U}[0,1]^2$ from a generator with a
fixed seed and set
\[
s_d^2 = \sigma_0^2 \Big/ \Bigl(\frac{1}{Md}\sum_{m=1}^{M}\operatorname{tr}\bigl(A(x^{(m)})A(x^{(m)})^{\top}+\varepsilon I_d\bigr)\Bigr).
\]
With this normalization, the mean and covariance contribute comparably to the variability
of the data at every $d$: the covariance accounts for 41\%, 55\%, 54\% and 51\% of the
pool's Bures--Wasserstein variability at $d=5,10,25,50$. The condition number of
$\Sigma(x)$ grows with $d$, reaching about $3\times10^{3}$ at $d=50$.

\paragraph{Pool, test set and initial set.}
Let $n_{\text{pool}}=\max(200,40d)$, $n_{\text{test}}=\max(100,40d)$ and
$m=\operatorname{round}(\sqrt{n_{\text{pool}}})$.
\begin{itemize}
\item \emph{Pool.} Take the one-dimensional grid $t_i = i/(m-1)$, $i=0,\dots,m-1$, and
  perturb each node independently by $\mathcal{U}(-0.05/m,\,0.05/m)$. The pool is the
  Cartesian product of this perturbed grid with itself ($m^2$ conditions). If $m^2$
  exceeds $n_{\text{pool}}$, $n_{\text{pool}}$ of them are kept uniformly at random. The
  perturbation and the subsample depend on the experiment seed.
\item \emph{Test set.} A fixed, unperturbed $m_t\times m_t$ grid with
  $m_t=\operatorname{round}(\sqrt{n_{\text{test}}})$ and nodes equally spaced on
  $[\tfrac{1}{2m},\,1-\tfrac{1}{2m}]$, i.e.\ offset from the pool by half a cell. It is the
  same for every seed.
\item \emph{Initial labelled set.} $b_0=3d$ pool conditions drawn uniformly at random
  without replacement, depending on the experiment seed.
\end{itemize}
The resulting sizes are:

\begin{center}
\begin{tabular}{lcccc}
\toprule
$d$ & 5 & 10 & 25 & 50 \\
\midrule
pool & 196 & 400 & 1000 & 2000 \\
test & 196 & 400 & 1024 & 2025 \\
initial $b_0$ & 15 & 30 & 75 & 150 \\
\bottomrule
\end{tabular}
\end{center}

\paragraph{Model, training and selection.}
The model has the same two-network structure as in 2D: a mean network with $d$ outputs and
a Cholesky network with $d(d+1)/2$ outputs (softplus on the $d$ diagonal entries). Both are
ReLU MLPs with 2 hidden layers of width 64 at $d=5$ and 3 hidden layers of width 256 for
$d\ge10$. In each round, the networks are re-initialized and trained full-batch on the
labeled conditions with the Bures--Wasserstein loss
$\frac{1}{|S|}\sum_{x\in S}W_2^2\bigl(\mathcal{N}(\mu(x),\Sigma(x)),\mathcal{N}(\hat\mu(x),\hat\Sigma(x))\bigr)$.
Training uses Adam (learning rate $10^{-3}$, weight decay $10^{-2}$, gradient-norm clipping
at 1) for at most 2000 steps, with early stopping allowed after step 1000 once the training
loss has not improved for 10 consecutive steps. Each $d$ is run with 50 seeds, which set the pool perturbation
and subsample, the initial set, the network initializations and the random baseline.
Within a seed, all methods share the pool, test set, and initial set.
\paragraph{Results}
 With $B_0=3d, B=4$, we see the advantage of tNTK over NTK up to $d = 50$ (Table~\ref{tab:gaussian_nd_w2_by_dim}).

\begin{table}[h!]
\centering
\caption{$d$-dimensional Gaussian benchmark with a full-rank covariance field: test $W_2^2$ (total) after the final active-learning round, starting from $b_0=3d$ labeled conditions and acquiring 5 rounds of $b=4$; mean $\pm$ standard error over 50 seeds; lower is better, best per column in bold.}
\label{tab:gaussian_nd_w2_by_dim}
\begin{tabular}{lcccc}
\toprule
Method & $d=5$ & $d=10$ & $d=25$ & $d=50$ \\
\midrule
tNTK & $\mathbf{0.1164} \pm 0.0014$ & $\mathbf{0.7457} \pm 0.0163$ & $\mathbf{1.1017} \pm 0.0169$ & $\mathbf{1.3109} \pm 0.0123$ \\
NTK & $0.1469 \pm 0.0028$ & $0.7942 \pm 0.0141$ & $1.1346 \pm 0.0157$ & $1.4449 \pm 0.0141$ \\
Random & $0.2082 \pm 0.0131$ & $0.7763 \pm 0.0287$ & $1.1150 \pm 0.0222$ & $1.4214 \pm 0.0163$ \\
\bottomrule
\end{tabular}
\end{table}

\subsection{Results on tNTK computation}
\label{app:tntk_computation_results}
In this section, we report compute time and the Monte Carlo stability of the tNTK.
\subsubsection{Kernel compute time}
\label{app:tntk_compute_time}
We report the compute time for the tNTK calculation for the conditional Monge gap and flow matching models reported in the main text in Table~\ref{tab:kernel_time_tntk}.
\begin{table}[h!]
\centering
\footnotesize
\setlength{\tabcolsep}{4pt}
\caption{Wall-clock cost of the tNTK kernel in the reported active-learning runs. Per AL iteration $\geq 1$: kernel construction time (median, with 10th--90th percentile), selection time and model-fit time (medians, seconds), and the kernel's share of the iteration, $t_{\text{kernel}} / (t_{\text{kernel}} + t_{\text{select}} + t_{\text{fit}})$ (median, \%). $N$ is the number of source samples the kernel averages over (conditional Monge: all control cells; CellFlow: fresh $\mathcal{N}(0, I)$ draws). $n$ is the number of AL iterations measured (fit and share use only iterations that trained the model, excluding ones restored from a checkpoint). Times come from the production runs' own instrumentation on shared GPU nodes, including JIT compilation on the first iteration.}
\label{tab:kernel_time_tntk}
\begin{tabular}{llrrrrrr}
\toprule
Dataset & $B_0 + B$ & $N$ & Kernel (s) & Select (s) & Fit (s) & Kernel (\%) & $n$ \\
\midrule
\multicolumn{8}{l}{\emph{Conditional Monge}} \\
\midrule
\multirow{6}{*}{K562-Chemical} & 10 + 1 & 3{,}359 & 16.4 [11.6, 31.3] & 0.2 & 1686 & 1.0 & 247 \\
 & 10 + 5 & 3{,}359 & 21.1 [12.5, 29.9] & 1.5 & 2380 & 0.8 & 177 \\
 & 20 + 5 & 3{,}359 & 15.7 [12.5, 28.2] & 1.3 & 2747 & 0.6 & 153 \\
 & 20 + 10 & 3{,}359 & 37.6 [34.2, 45.1] & 0.6 & 4320 & 0.9 & 250 \\
 & 30 + 5 & 3{,}359 & 45.7 [36.8, 51.8] & 0.3 & 4151 & 1.1 & 250 \\
 & 30 + 10 & 3{,}359 & 44.0 [35.4, 51.6] & 0.7 & 5192 & 0.7 & 250 \\
\cmidrule(lr){1-8}
\multirow{6}{*}{BMDM-Genetic} & 10 + 1 & 25{,}952 & 167.1 [120.3, 199.1] & 1.1 & 1678 & 8.5 & 250 \\
 & 10 + 5 & 25{,}952 & 160.8 [117.3, 222.1] & 6.3 & 3548 & 4.3 & 250 \\
 & 20 + 5 & 25{,}952 & 129.1 [97.5, 179.2] & 6.1 & 4160 & 3.0 & 250 \\
 & 20 + 10 & 25{,}952 & 193.5 [150.7, 232.9] & 0.6 & 4699 & 3.8 & 250 \\
 & 30 + 5 & 25{,}952 & 213.7 [186.7, 251.9] & 0.3 & 4593 & 4.4 & 250 \\
 & 30 + 10 & 25{,}952 & 253.7 [146.2, 361.3] & 0.8 & 6209 & 4.0 & 272 \\
\midrule
\multicolumn{8}{l}{\emph{CellFlow}} \\
\midrule
\multirow{3}{*}{Radial flow} & 5 + 1 & 4{,}096 & 30.9 [28.3, 39.6] & 1.3 & 299 & 9.5 & 150 \\
 & 5 + 5 & 4{,}096 & 31.5 [28.4, 42.1] & 7.7 & 380 & 7.6 & 120 \\
 & 10 + 5 & 4{,}096 & 31.8 [27.8, 45.0] & 6.2 & 359 & 8.2 & 150 \\
\cmidrule(lr){1-8}
\multirow{3}{*}{Van der Pol oscillator} & 5 + 1 & 4{,}096 & 31.6 [26.2, 44.8] & 1.2 & 229 & 13.1 & 150 \\
 & 5 + 5 & 4{,}096 & 30.7 [25.5, 42.9] & 6.4 & 278 & 10.1 & 150 \\
 & 10 + 5 & 4{,}096 & 27.7 [24.8, 33.8] & 6.6 & 255 & 10.1 & 150 \\
\cmidrule(lr){1-8}
\multirow{3}{*}{CLINC 150} & 5 + 1 & 4{,}096 & 40.9 [34.6, 49.1] & 1.5 & 922 & 4.1 & 150 \\
 & 5 + 5 & 4{,}096 & 36.8 [30.1, 51.5] & 7.6 & 1338 & 2.7 & 150 \\
 & 10 + 5 & 4{,}096 & 39.2 [29.1, 59.6] & 6.2 & 1725 & 2.2 & 196 \\
\bottomrule
\end{tabular}
\end{table}

\subsubsection{Kernel stability to the number of Monte Carlo samples}
We tested the Monte Carlo stability of tNTK with varying numbers of samples. We measured the stability metric of the relative Frobenius error ($\|K^\mathrm{full} - K^\mathrm{downsampled}\|_F / \|K^\mathrm{full}\|_F$) and overlap in the selected samples over 20 independent draws and report the mean in Table~\ref{tab:mc_stability_tntk}. Specifically, among the reported runs in the main paper, the MC stability is measured from the fold 0, seed 0, AL iteration 1 model. For conditional Monge gap, the reference uses all control cells (K562 3{,}359, BMDM 25{,}952), which is also what the reported runs use. Draws are without replacement, so $N \geq 3{,}359$ is not available for K562 (--). For conditional flow matching, the source samples are fresh $\mathcal{N}(0, I)$ draws, the reference uses $N = 65{,}536$, and the reported runs use $N = 4{,}096$. 

\begin{table}[h!]
\centering
\small
\setlength{\tabcolsep}{4pt}
\caption{Monte Carlo stability of the tNTK kernel. For each $N$, the kernel is rebuilt from $N$ source samples and compared with a reference kernel. \emph{Err.} is the relative Frobenius error (\%). \emph{Selection \%} is the percentage of conditions in the selected batch of size $B$ that also appear in the batch selected with the reference kernel (out of $B$). Mean over 20 independent draws.}
\label{tab:mc_stability_tntk}
\begin{tabular}{llrrrrrrrr}
\toprule
 & & \multicolumn{2}{c}{$N = 256$} & \multicolumn{2}{c}{$N = 1{,}024$} & \multicolumn{2}{c}{$N = 4{,}096$} & \multicolumn{2}{c}{$N = 16{,}384$} \\
\cmidrule(lr){3-4}\cmidrule(lr){5-6}\cmidrule(lr){7-8}\cmidrule(lr){9-10}
Dataset & $B_0 + B$ & Err. & Selection \% & Err. & Selection \% & Err. & Selection \% & Err. & Selection \% \\
\midrule
\multicolumn{10}{l}{\emph{Conditional Monge}} \\
\midrule
\multirow{6}{*}{K562-Chemical} & 10 + 1 & 2.52 & 100.0 & 1.16 & 100.0 & -- & -- & -- & -- \\
 & 10 + 5 & 2.52 & 70.0 & 1.16 & 82.0 & -- & -- & -- & -- \\
 & 20 + 5 & 4.04 & 100.0 & 1.56 & 100.0 & -- & -- & -- & -- \\
 & 20 + 10 & 4.04 & 99.5 & 1.56 & 99.5 & -- & -- & -- & -- \\
 & 30 + 5 & 4.84 & 97.0 & 2.01 & 100.0 & -- & -- & -- & -- \\
 & 30 + 10 & 4.84 & 96.0 & 2.01 & 100.0 & -- & -- & -- & -- \\
\cmidrule(lr){1-10}
\multirow{6}{*}{BMDM-Genetic} & 10 + 1 & 2.54 & 50.0 & 1.25 & 75.0 & 0.44 & 100.0 & 0.17 & 100.0 \\
 & 10 + 5 & 2.54 & 33.0 & 1.25 & 73.0 & 0.44 & 97.0 & 0.17 & 100.0 \\
 & 20 + 5 & 3.18 & 90.0 & 1.88 & 98.0 & 0.72 & 100.0 & 0.27 & 100.0 \\
 & 20 + 10 & 3.18 & 86.5 & 1.88 & 91.5 & 0.72 & 100.0 & 0.27 & 100.0 \\
 & 30 + 5 & 3.38 & 86.0 & 1.80 & 99.0 & 0.94 & 100.0 & 0.29 & 100.0 \\
 & 30 + 10 & 3.38 & 88.0 & 1.80 & 94.5 & 0.94 & 99.5 & 0.29 & 100.0 \\
\midrule
\multicolumn{10}{l}{\emph{CellFlow}} \\
\midrule
\multirow{3}{*}{Radial flow} & 5 + 1 & 2.53 & 30.0 & 1.90 & 20.0 & 1.06 & 45.0 & 0.71 & 30.0 \\
 & 5 + 5 & 2.68 & 78.0 & 2.02 & 83.0 & 1.24 & 80.0 & 1.03 & 84.0 \\
 & 10 + 5 & 3.68 & 85.0 & 2.52 & 85.0 & 1.55 & 82.0 & 0.87 & 80.0 \\
\cmidrule(lr){1-10}
\multirow{3}{*}{Van der Pol oscillator} & 5 + 1 & 3.62 & 100.0 & 2.72 & 100.0 & 1.47 & 100.0 & 0.99 & 100.0 \\
 & 5 + 5 & 3.62 & 87.0 & 2.71 & 92.0 & 1.47 & 98.0 & 0.99 & 97.0 \\
 & 10 + 5 & 3.50 & 92.0 & 2.44 & 89.0 & 1.44 & 89.0 & 0.87 & 88.0 \\
\cmidrule(lr){1-10}
\multirow{3}{*}{CLINC 150} & 5 + 1 & 1.82 & 100.0 & 1.13 & 100.0 & 0.55 & 100.0 & 0.27 & 100.0 \\
 & 5 + 5 & 1.85 & 74.0 & 1.16 & 87.0 & 0.56 & 92.0 & 0.27 & 98.0 \\
 & 10 + 5 & 1.58 & 71.0 & 0.90 & 91.0 & 0.41 & 100.0 & 0.26 & 100.0 \\
\bottomrule
\end{tabular}
\end{table}
\clearpage

\subsection{Additional baselines}
\label{app:additional_baselines}
We present additional baselines for the results presented in the main text. 
For affine transport of 2D Gaussian distributions, we also benchmarked against BatchBALD (NTK with greedy maximum determinant selection) and BADGE-style (NTK with k-means++ selection) adapted for the regression as in \citet{Holzmuller2022-do} (Table~\ref{tab:gaussian2d_ntk_baselines_iter_mean}). tNTK, with minimization of total posterior variance selection, outperforms all NTK-based baselines. 

\begin{table}[h!]
\centering
\caption{2D Gaussian: NTK-based batch-selection baselines. Decrease of $W_2^2$ relative to the initial training set, averaged over AL iterations 1--5 (units $10^{-3}$; more negative is better), mean $\pm$ s.e.m.\ over 30 seeds.}
\label{tab:gaussian2d_ntk_baselines_iter_mean}
\begin{tabular}{lc}
\toprule
Method & $\Delta W_2^2$ ($\downarrow$) \\
\midrule
tNTK & $\mathbf{-431.58} \pm 26.68$ \\
NTK (BAIT; NTK + total posterior variance minimization)& $-364.15 \pm 26.59$ \\
BatchBALD (NTK + maximum determinant) & $-360.94 \pm 26.30$ \\
BADGE-style (NTK + k-means++) & $-405.88 \pm 27.09$ \\
Random & $-363.93 \pm 26.09$ \\
\bottomrule
\end{tabular}
\end{table}

We also benchmark tNTK in a larger-batch regime with LCMD selection, which \citet{Holzmuller2022-do} has shown to perform the best in a large-batch scenario of $B_0 = 5$, $B = 20$ with 1 active learning iteration. In this scenario, the LCMD selection outperformed total posterior variance minimization, while tNTK outperformed NTK (Table~\ref{tab:gaussian2d_lcmd_vs_bait}). The result shows the importance of choosing the right selection algorithm in various active learning settings, as has been shown before~\citep{NEURIPS2023_1ed4723f}. We defer the investigation of the synergy between the tNTK and selection algorithms to future studies.

\begin{table}[h!]
\centering
\caption{2D Gaussian: LCMD versus minimum total posterior variance (BAIT) as the batch-selection rule. Mean $\pm$ standard error over 50 seeds is shown. The two rules are matched runs (same seeds, initial set, and round-0 model). Random ignores the kernel and the selection rule, so its row is shared.}
\label{tab:gaussian2d_lcmd_vs_bait}
\begin{tabular}{lcc}
\toprule
Kernel & LCMD & Min.\ total posterior variance \\
\midrule
tNTK & $\mathbf{0.0573} \pm 0.0021$ & $0.0788 \pm 0.0026$ \\
NTK & $0.0643 \pm 0.0025$ & $0.2153 \pm 0.0072$ \\
Random & -- & $0.1234 \pm 0.0061$ \\
\bottomrule
\end{tabular}
\end{table}

The experimental settings are identical to those presented in the main text unless otherwise mentioned, and are described in detail in Appendix~\ref{app:exp_2d_gaussian}.

\clearpage
\newpage
\section{Supplementary Figures}
\label{app:supplementary_figs}
\begin{figure}[ht]
  \centering
  \includegraphics[width=\textwidth]{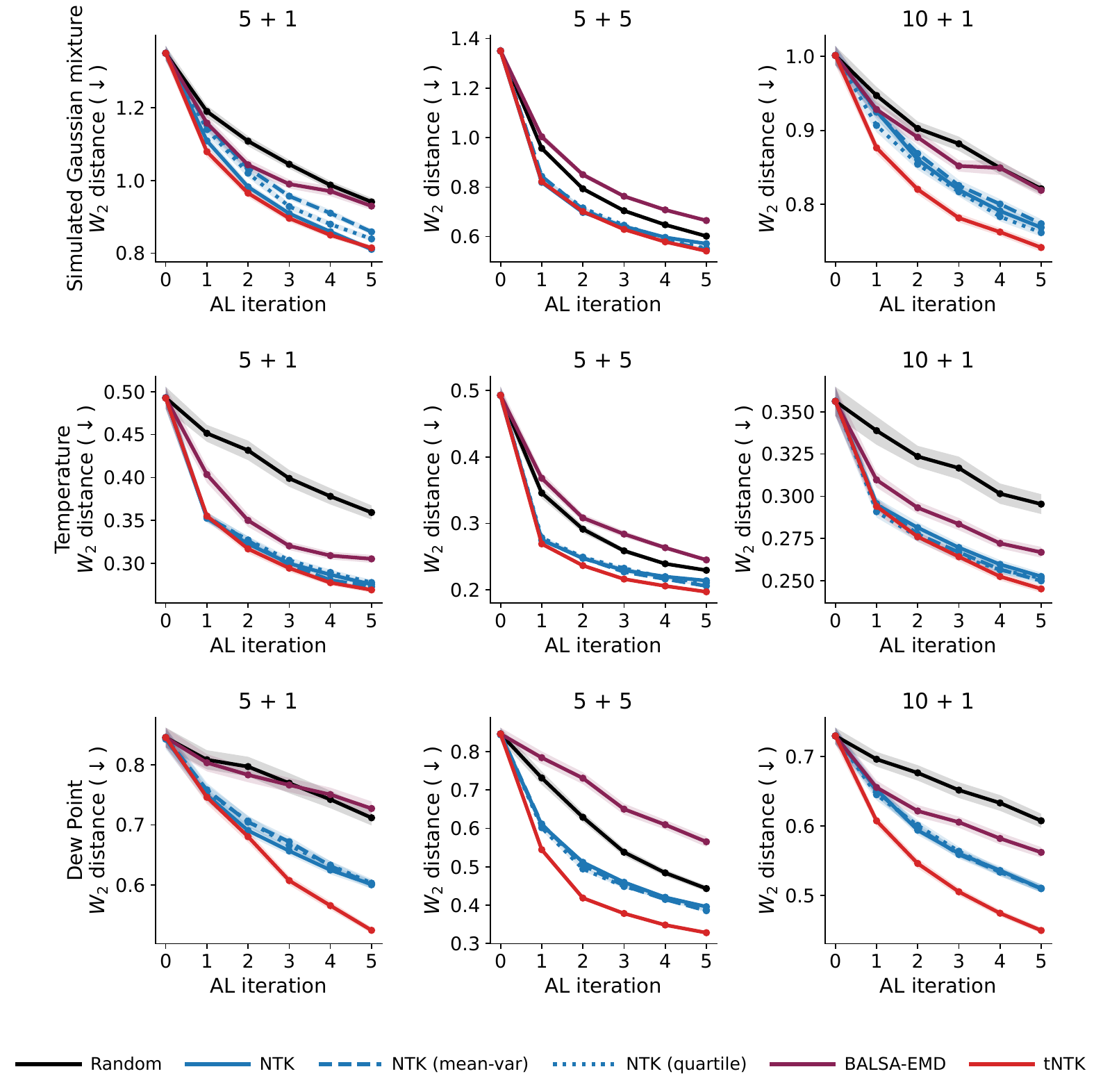}
    \caption{Wasserstein-2 distance over active learning iterations for the conditional quantile regression experiment. Rows correspond to datasets (Simulated Gaussian mixture, Temperature, Dew Point); columns correspond to budget configurations $B_0 + B$ (initial training set size + per-round batch size). Lines show mean across $50$ runs ($5$ folds $\times$ $10$ seeds), with shaded bands showing $\pm$ standard error. Random in black, NTK in blue, tNTK in red.}
  \label{fig:supp_quantile_w2}
\end{figure}

\begin{figure}[ht]
  \centering
  \includegraphics[width=\textwidth]{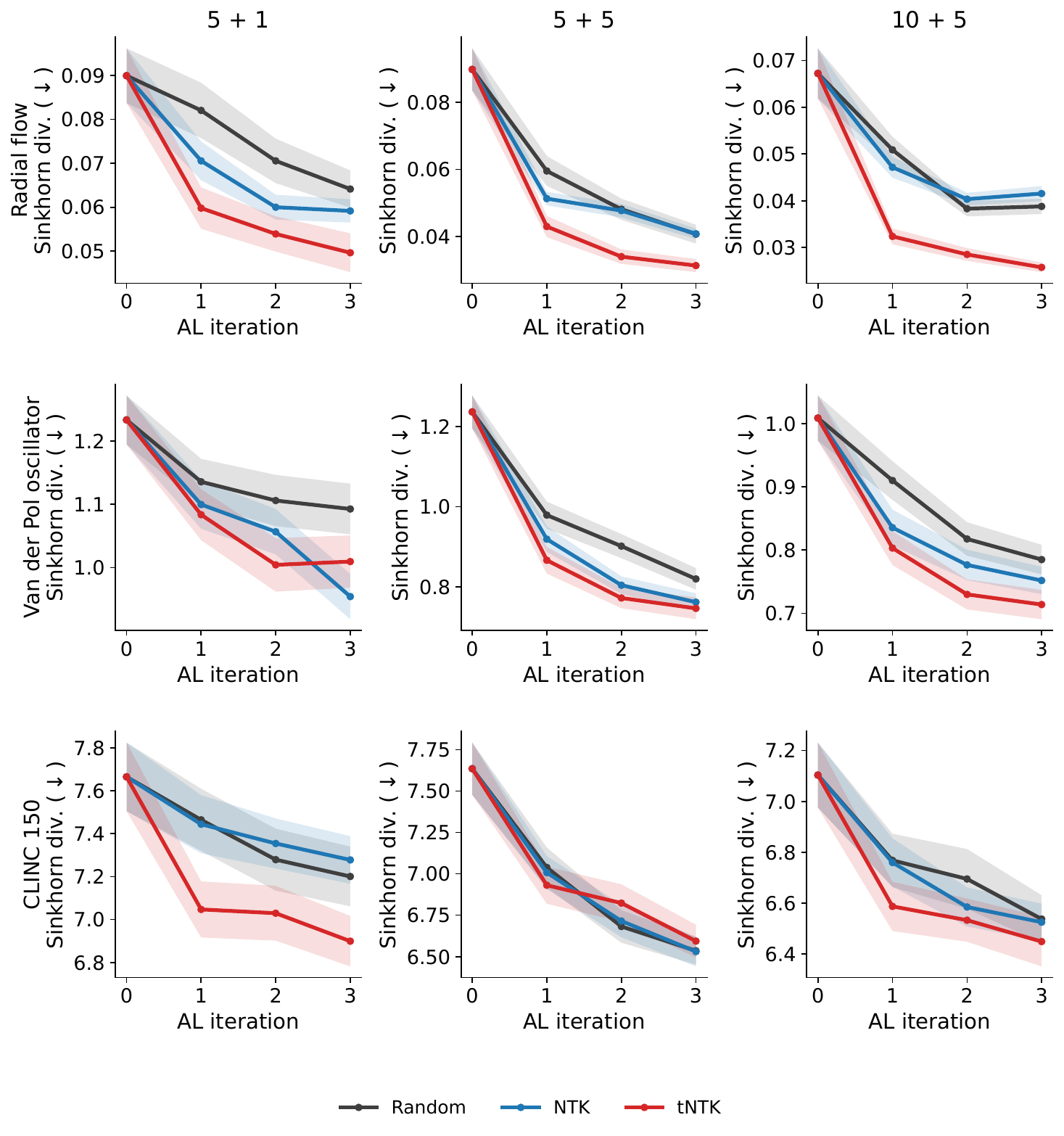}
\caption{Sinkhorn divergence over active learning iterations for conditional flow matching. Rows correspond to datasets (radial flow, Van der Pol oscillator, CLINC150); columns correspond to budget configurations $B_0 + B$ (initial training set size + per-round batch size). Lines show mean across $50$ runs ($5$ folds $\times$ $10$ seeds), with shaded bands showing $\pm$ standard error. Random in black, NTK in blue, tNTK in red.}
  \label{fig:supp_cflow_sinkhorn}
\end{figure}

\begin{figure}[ht]
  \centering
  \includegraphics[width=\textwidth]{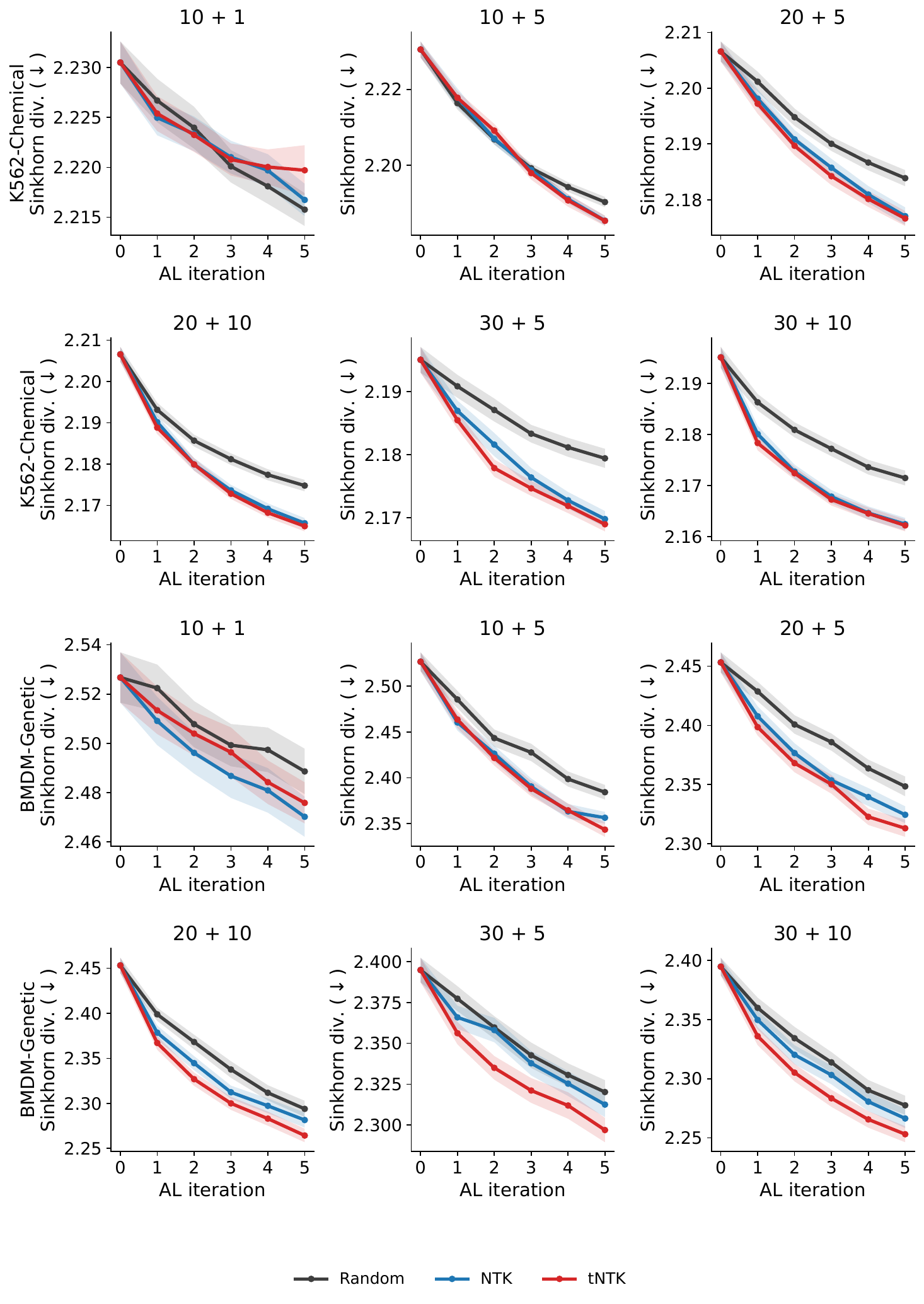}
\caption{Sinkhorn divergence over active learning iterations for the conditional Monge gap. Rows correspond to datasets (K562-Chemical, BMDM-Genetic); columns correspond to budget configurations $B_0 + B$ (initial training set size + per-round batch size). Lines show mean across $50$ runs ($5$ folds $\times$ $10$ seeds), with shaded bands showing $\pm$ standard error. Random in black, NTK in blue, tNTK in red.}

  \label{fig:supp_cmonge_sinkhorn}
\end{figure}

\end{document}